%% file: gauss_icml2026.tex
\documentclass{article}

\usepackage{microtype}
\usepackage{graphicx}
\usepackage{subcaption}
\usepackage{booktabs}
\usepackage{hyperref}
\hypersetup{colorlinks=true,allcolors=blue}

\usepackage[accepted]{icml2026}

\usepackage{amsmath}
\usepackage{amssymb}
\usepackage{mathtools}
\usepackage{amsthm}
\usepackage{bm}

\usepackage[utf8]{inputenc}
\usepackage[T1]{fontenc}
\usepackage{url}
\usepackage{amsfonts}
\usepackage[table,dvipsnames]{xcolor}  
\usepackage{array}  

\usepackage{tikz}
\usepackage{pgfplots}
\pgfplotsset{compat=1.18}
\usetikzlibrary{decorations.pathreplacing}
\usetikzlibrary{decorations.pathmorphing}
\usetikzlibrary{bayesnet}
\usetikzlibrary{calc}
\usetikzlibrary{positioning}
\usetikzlibrary{fit}

\input{math_commands.tex}

\input{Figures/tikz_plots}

\usepackage[most]{tcolorbox}
\usepackage{etoc}
\usepackage{mdframed}
\usepackage{physics}
\usepackage{pifont}
\usepackage{makecell}
\usepackage{tabularx}
\usepackage{float}
\usepackage{empheq}
\usepackage{wrapfig}
\usepackage{enumitem}
\usepackage{listings}
\usepackage{multirow}
\usepackage{arydshln}
\usepackage{pgfmath}
\usepackage{xspace}
\usepackage{listings}
\usepackage{verbatim}
\usepackage{algorithm}
\usepackage{algorithmic}

\newif\ifincludecomment
\includecommenttrue 

\ifincludecomment
  
\else
  \NewEnviron{guidance}{}{}
\fi
\usepackage[colorinlistoftodos,textsize=tiny, textwidth=18mm]{todonotes}
\ifincludecomment
\newcommand{\maybecomment}[1]{\todo[color=olive!40]{#1}} 
\newcommand{\maybetohere}[1]{\todo[color=red!40]{#1}} 
\newcommand{\maybedelete}[1]{\todo[color=blue!40]{#1}} 

\else
  \newcommand{\maybecomment}[1]{}
  
\newcommand{\maybedelete}[1]{} 
\fi
\newcommand{\amostohere}[1]{{\color{black}\maybetohere{AMOS HERE}}}

\usepackage{array}

\definecolor{HeaderBlue}{HTML}{1F4E79}
\definecolor{BoxBg}{HTML}{F7F9FC}
\definecolor{UncTint}{HTML}{FFF3E0}   
\definecolor{UncText}{HTML}{8A3B12}
\definecolor{SoftGray}{HTML}{6B6B6B}
\definecolor{UncBlue}{HTML}{0066CC}   

\usepackage[capitalize,noabbrev]{cleveref}
\crefname{section}{Section}{Sections}
\crefname{algorithm}{Algorithm}{Algorithms}
\crefname{appendix}{Appendix}{Appendices}
\crefname{definition}{Definition}{Definitions}
\crefname{table}{Table}{Tables}

\DeclareMathOperator*{\logsumexp}{logsumexp}
\newcommand{\unc}[1]{\textcolor{UncBlue}{#1}}
\definecolor{ForgetOrange}{HTML}{E07B00}   
\definecolor{WriteGreen}{HTML}{1B7837}     
\newcommand{\fg}[1]{\textcolor{ForgetOrange}{#1}}
\newcommand{\wg}[1]{\textcolor{WriteGreen}{#1}}
\newcommand{\kl}[1]{\textcolor{UncBlue}{#1}}

\usepackage[textsize=tiny]{todonotes}

\newtheorem{theorem}{Theorem}

\newtheorem*{remark}{Remark}

\usepackage{thm-restate}

\definecolor{heatpurple}{RGB}{186, 85, 211}
\definecolor{darkgreen}{rgb}{0.0, 0.5, 0.0}
\definecolor{codegreen}{rgb}{0,0.6,0}
\definecolor{codegray}{rgb}{0.5,0.5,0.5}
\definecolor{codepurple}{rgb}{0.58,0,0.82}
\definecolor{backcolour}{rgb}{0.95,0.95,0.92}
\definecolor{forgetred}{RGB}{220, 20, 60}

\lstdefinestyle{python}{
    backgroundcolor=\color{backcolour},
    commentstyle=\color{codegreen},
    keywordstyle=\color{blue},
    numberstyle=\tiny\color{codegray},
    stringstyle=\color{codepurple},
    basicstyle=\ttfamily\scriptsize,
    breaklines=true,
    numbers=left,
    numbersep=5pt,
    showstringspaces=false,
    frame=single,
    language=Python
}
\newcommand{\best}[1]{\textbf{#1}}
\newcommand{\second}[1]{\underline{#1}}

\newcommand{\gmark}{\textcolor{green!60!black}{\ding{51}}}
\newcommand{\rmark}{\textcolor{red!70!black}{\ding{55}}}

\newcommand{\our}{\textsc{KLA}\xspace}

\newcommand{\shape}[1]{\hfill $\triangleright$ \small{\textcolor{gray}{#1}}}

\icmltitlerunning{Parallel, Scalable and Efficient Bayesian Filters for Language Modelling}

\begin{document}

\twocolumn[
\icmltitle{Kalman Linear Attention: Parallel Bayesian Filtering For Efficient Language Modelling and State Tracking}

\begin{icmlauthorlist}
\icmlauthor{Vaisakh Shaj}{}
\icmlauthor{Cameron Barker}{}
\icmlauthor{Aidan Scannell}{}
\icmlauthor{Andras Szecsenyi}{}
\icmlauthor{Elliot J. Crowley}{}
\icmlauthor{Amos Storkey}{}
\end{icmlauthorlist}

\vspace{0.1in}
\centerline{\textit{University of Edinburgh, United Kingdom}}

\icmlcorrespondingauthor{Vaisakh Shaj}{vaisakhs.shaj@gmail.com}

\icmlkeywords{State Space Models, Kalman Filtering, Language Modelling, Uncertainty Quantification}

\vskip 0.3in
]

\printAffiliationsAndNotice{}

\begin{abstract}
State-space language models such as Mamba and gated linear attention (GLA) offer linear-complexity, parallelisable alternatives to transformers, but their \emph{linear} state updates limit expressivity and robust state tracking. We close this gap from a probabilistic angle, casting sequence mixing as exact Bayesian filtering with the Kalman filter as the core primitive. Classical Kalman filters give principled state and uncertainty estimates but are viewed as inherently sequential; we show that reparameterising them in \emph{information form} turns their updates into an associative scan - so the per-token recurrent update is \emph{non-linear} (a M\"obius/precision recursion) yet remains \emph{temporally parallel}. The resulting Kalman Linear Attention (KLA) layer is a drop-in sequence mixer that performs time-parallel probabilistic inference, carries an explicit belief-state uncertainty, and is strictly more expressive than GLA-style linear updates at the same computational cost. This expressivity translates directly into stronger state tracking: KLA solves permutation-composition ($A_5$) tasks that linear SSMs and attention cannot, while staying scan-parallel. As a drop-in primitive it also matches or improves on modern SSMs and GLAs across synthetic token-manipulation and zero-shot commonsense benchmarks, and is among the first stacked Bayesian-filtering primitives trained at the billion-token scale.
\end{abstract}

\begin{figure*}[t]
  \centering
  \includegraphics[width=0.5\textwidth]{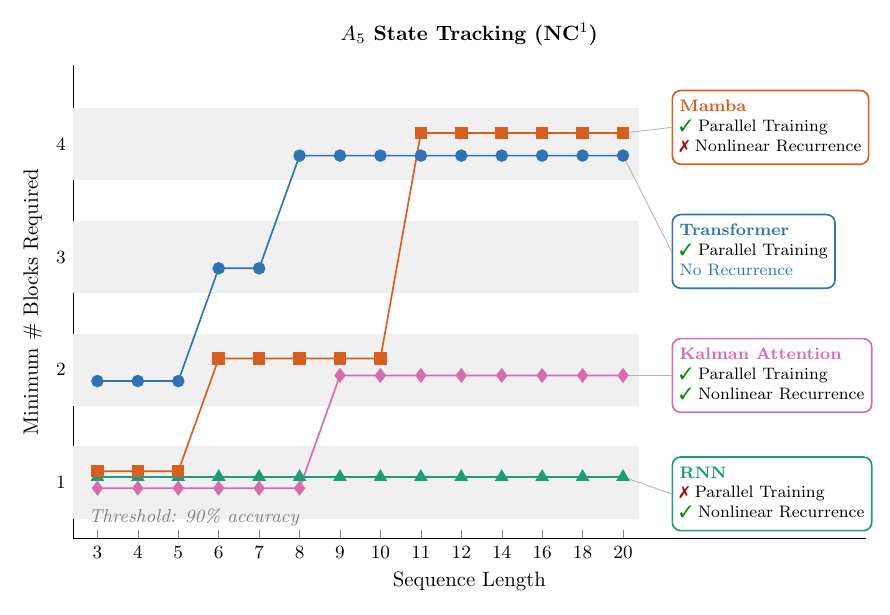}\hfill
  \includegraphics[width=0.47\textwidth]{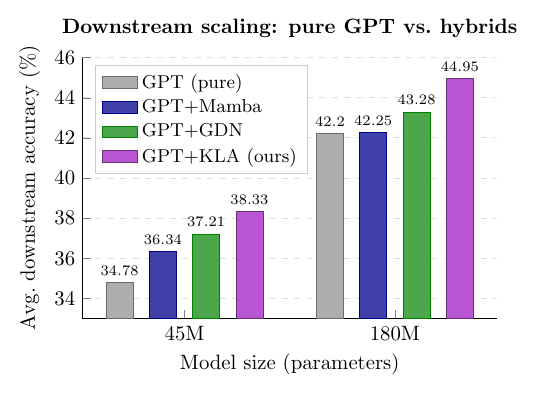}
  \caption{\textbf{(a) Expressivity.} Minimum number of blocks required to solve the $A_5$ (alternating group on 5 elements) permutation-composition task~\citep{merrill2024illusion}. \our's fractional-linear (M\"obius) updates fall between a fully nonlinear RNN and linear SSMs/transformers, solving the task at constant depth (1--2 blocks) where linear SSMs/attention require depth growing with sequence length, while remaining parallel-trainable.
  \textbf{(b) Downstream scaling.} Zero-shot accuracy (averaged over eight commonsense benchmarks) of softmax-attention GPT vs.\ GPT$+$SSM hybrids at 45M and 180M parameters. Replacing a single final attention layer with \our (GPT$+$\our) gives the strongest complement to attention.}
  \label{fig:a5-min-layers}
  \vspace{-0.1in}
\end{figure*}

\section{Introduction}
\label{sec:Intro}

\begin{table}[t]
\centering
\caption{High-level comparison of sequence-mixing primitives. \our combines the efficiency of SSMs with nonlinear (M\"obius) updates and explicit belief-state uncertainty.}
\label{tab:comparison}
\footnotesize
\setlength{\tabcolsep}{5pt}
\renewcommand{\arraystretch}{1.1}
\resizebox{\columnwidth}{!}{%
\begin{tabular}{@{}lccc@{}}
\toprule
& \textbf{Softmax Attention} & \textbf{SSMs / GLA} & \textbf{\our} \\
\midrule
Expressivity   & Nonlinear & Linear & Fractional lin.\ (M\"obius) \\
Training eff.  & $\mathcal{O}(T^2)$ & $\mathcal{O}(T)$ & $\mathcal{O}(T)$ \\
Inference eff. & $\mathcal{O}(T)$ & $\mathcal{O}(1)$ & $\mathcal{O}(1)$ \\
Seq.\ uncertainty & \rmark & \rmark & \gmark \\
Parallel training & \gmark & \gmark & \gmark \\
\bottomrule
\end{tabular}%
}
\vspace{-1.0em}
\end{table}

Scaling sequence modelling beyond quadratic attention~\citep{vaswani2017attention} is a central goal in large language and foundation model research.\footnote{Code: \href{https://github.com/vaisakh-shaj/kalman-linear-attention}{github.com/vaisakh-shaj/\allowbreak kalman-linear-attention}}
This has motivated interest in RNN-like architectures that support linear-time sequence modelling with efficient parallel training.
Recent state-space models (SSMs), including S4/S5~\citep{gu2021efficiently,smith2022simplified}, Mamba~\citep{gu2023mamba,dao2024transformers}, and their successors, 
achieve this with $\mathcal{O}(\log T)$ depth, $\mathcal{O}(T)$ work, and linear or sublinear memory. 
This is essential for handling long contexts, on-device deployment and energy efficiency.\looseness-1

Despite their efficiency, adoption of SSMs is not as widespread as attention-based transformers.
Recent work reinterprets modern SSMs and linear RNNs, including Mamba, as gated variants of linear attention~(GLA)~\citep{yang2023gated}, where performance largely depends on how gates are defined. In all of these, the underlying hidden state updates are linear or affine, which limits their expressivity compared to softmax attention, whose normalisation induces nonlinear interactions between tokens.
Furthermore, none of these represent state uncertainty explicitly. This contrasts with longstanding probabilistic linear state-space formalisms whose Kalman-filter updates carry explicit state uncertainty and update it through nonlinear precision recursions.

In this paper, we ask: {\em Can we overcome the linear update constraints common to current GLA models to develop a state-space block that efficiently implements exact Kalman filter updates?}

We introduce \textbf{Kalman Linear Attention (\our)}, which formulates sequence modelling as a Bayesian filtering problem. \our models two sources of uncertainty: process noise, which captures uncertainty in state evolution, and observation noise, which captures uncertainty in the information provided by each input token.
Crucially, this uncertainty is not merely an output: it directly controls how new information is gated in state spaces.\looseness-1

At first glance, such Bayesian updates seem inherently sequential. Our key insight is that the information form of the Kalman filter admits a fractional-linear / M\"obius associative structure that can be implemented using parallel scan-based algorithms, \emph{despite its nonlinear update computation}.
This allows Kalman-style updates to achieve the same $\mathcal{O}(\log T)$ parallel depth as models such as Mamba. The result is the \textbf{\our} layer: a drop-in replacement for standard SSM or attention layers.\looseness-1

It is not enough to implement a new primitive without establishing that the extra expressiveness provides a benefit. We study the suitability of this continuous-state probabilistic model for discrete language modelling both theoretically and empirically, demonstrating important gains. 

\paragraph{Contributions.} Our contributions are as follows:
\begin{itemize}
    \item[\textbf{C1}] \textbf{Associative reparameterisation of Kalman filtering:} we reparameterise the diagonal linear-Gaussian filter in information form, showing the precision recursion is a M\"obius (fractional-linear) map that composes associatively, enabling parallel prefix scans.
    \item[\textbf{C2}] \textbf{Nonlinear gating from uncertainty:} the precision-ratio gates are history-dependent and nonlinear, going beyond linear/affine SSM and gated linear-attention updates while preserving linear-time scan structure.
    \item[\textbf{C3}] \textbf{Kalman Linear Attention layer:} we introduce \textsc{KLA}, a drop-in sequence mixer for modern language-modelling pipelines that produces explicit belief-state uncertainty.
    \item[\textbf{C4}] \textbf{Scaling and empirical validation:} our scan-based implementation scales efficiently with sequence length and matches or outperforms modern SSMs and GLAs on state-tracking, associative recall, and zero-shot commonsense reasoning, and is among the first stacked Bayesian filters pretrained at the billion-token scale.\looseness-1
\end{itemize}
\paragraph{Conflict of Interest Disclosure.}
The authors declare no financial or other conflicts of interest; this is purely academic work.

\section{Related Work}

\paragraph{Attention as filtering.}
The idea that intelligent systems must filter information predates modern self-attention: in cognitive science, early theories framed attention as a mechanism that suppresses irrelevant signals and routes task-relevant ones under limited processing capacity
\citep{broadbent1957mechanical,treisman1969strategies,deutsch1963attention, nielsen2009statistical}.
The core theme across these views is that attention allocates scarce processing by selecting what matters in context.\looseness-2

In modern language models, transformer self-attention can be viewed as a form of context-based filtering \citep{vaswani2017attention}.
Tokens compete via similarity and the softmax suppresses most inputs whilst allowing a small subset to dominate the update.
But this selection is \emph{exemplar-based}: it requires retaining and comparing against all past keys/values rather than compressing them into a fixed-size state, incurring $\mathcal{O}(T^2)$ cost at long context lengths.

\paragraph{Bayesian filtering and precision-control as attention.}
Bayesian filters perform filtering as posterior inference: prior beliefs are updated with new observations weighted by their precision (inverse variance), so more reliable observations exert greater influence whilst unreliable evidence is downweighted.
This yields a principled, interpretable gating mechanism where selection follows from uncertainty propagation.

Precision-weighted prediction errors have also been proposed as a computational account of attention in neuroscience and predictive coding \citep{rao1999predictive,feldman2010attention}.
In this view, attention modulation is based upon
confidence in sensory signals as opposed to deterministic reweighting.
We adopt this view and show that Bayesian filters can be integrated into modern sequence modelling.

\paragraph{Sequence models as test-time in-context online learners.}
Concurrently, recent sub-quadratic mixers have been unified as online learners that solve a per-token regularised regression in closed form, a view first articulated for SSMs by Longhorn~\citep{liu2025longhorn,wang2025test} and extended in linear-attention and delta-rule variants~\citep{yang2023gated,yang2024gated,hatamizadeh2026gated}; we situate \our within this view in \cref{sec:parallel_inference} (with the full template in \cref{sec:online-learning}), where its posterior-mean update emerges as a gated RNN update with an adaptive learning rate from filtering. The closest layers to \our, MesaNet~\citep{von2025mesanet} and Gated KalmaNet~\citep{peng2026gated}, solve a regularised least-squares objective much like ours, but (i) assume a static, deterministic latent state - no transition dynamics, unlike a Bayesian filter - and (ii) require expensive iterative solvers (conjugate gradient and Chebyshev, respectively). \our instead retains a transition and process noise, giving a closed-form, scan-parallel fractional-linear (M\"obius) posterior-mean update (\Cref{Th:Mobius,thm:eta-affine}) that needs neither a steady-state assumption nor an inner solver. More broadly, \our shows that exact Bayesian filtering is itself scan-parallelisable, with no approximations, steady-state assumptions, or test-time solvers. Unlike recent mixers, it moreover admits a strictly more expressive M\"obius update, which translates into measurable gains on state tracking and common-sense reasoning (\cref{fig:a5-min-layers,tab:lm-main}).

\paragraph{Temporal parallelisation of Bayesian filters.}
\citet{sarkka2020temporal} pioneered temporal parallelisation of Bayesian filtering and smoothing using prefix-sum algorithms, requiring specialised parameterisations to ensure associativity.
To achieve associativity, they construct it by lifting each filtering step into a 5-tuple augmented representation, in the linear case. We show that no such augmentation is needed: Kalman updates in the information form are already M\"{o}bius maps, so precision composes by $2 \times 2$ matrix multiplication, a minimal, GPU-friendly structure (Theorems~\ref{Th:Mobius} and \ref{thm:eta-affine}). Beyond parallelisation, we integrate these probabilistic primitives into learnable neural sequence models for language, where token-dependent observation likelihoods and dynamics are learned from data rather than specified a priori.

\paragraph{Deep Kalman filters and probabilistic state-space models.}
Kalman filtering \citep{kalman1960new} has a long history in RL, control, and world modelling \citep{haarnoja2016rl,watter2015e2c,hafner2019planet,hafner2020dreamer,shaj2023multi}. Deep Kalman filters embed linear-Gaussian SSMs in neural networks, trained by exact or variational inference \citep{krishnan2015deepkf,krishnan2017structured,karl2016dsvae,fraccaro2017rssm,becker2019rkn,shaj2021action}, but their recurrent update limits scalability to long contexts. Recent deep-RL work pairs deterministic SSM backbones with Kalman-style dynamics via S\"{a}rkk\"{a}'s temporal parallelisation \citep{becker2024kalmamba}, or replaces them with parallelised Kalman layers for uncertainty under partial observability \citep{luis2024uncertainty}. In contrast, we adopt \textsc{\our} as a \emph{standalone}, closed-form probabilistic primitive for language modelling, competing directly with modern SSM/GLA mixers without an extra deterministic scaffold. Crucially, whereas these prior approaches typically employ a single Kalman layer, we hierarchically stack \our across many layers, training deep multi-layer Bayesian-filtering architectures for language modelling at the billion-token scale.\looseness-1

\section{Background and Preliminaries}
\label{sec:background}

Modern language models can be abstracted as the composition of two functions~\citep{alonso2025state}: a shared
representation (compression) map $f(\cdot;\theta)$ and a task-specific head $g$. Given a
prompt $x_{1:T}$ of sequence length $T$, the backbone (the pretrained model) produces a compact latent
representation $z_{1:T}=f(x_{1:T};\theta)$ that captures structure in the language; the
head then maps $z_{1:T}$ to outputs $y_{1:T}$ (e.g., next-token prediction, tagging,
regression).
Concrete parameterisations for $f$ balance expressivity with computational tractability: transformers use softmax attention as the core primitive, whereas state-space models (SSMs) such as Mamba adopt structured recurrence with linear complexity.

\paragraph{Deterministic state-space models (SSMs) and Mamba.}
Popular SSMs such as Mamba originate from structured
state-space models~\citep{gu2021efficiently}.
Building on their diagonal simplification~\citep{gupta2022simplifying},
a linear SSM maps a scalar input signal $x(t)$ to an output $y(t)$
through a latent state $h(t)\in\mathbb{R}^{N}$:
\begin{equation}
h'(t)=\mathbf{A}h(t)+\mathbf{B}x(t),\qquad
y(t)=\mathbf{C}h(t),
\label{eq:ssm-ct}
\end{equation}
with $x(t),y(t)\!\in\!\mathbb{R}$,
$\mathbf{A}=\mathrm{diag}(a_1,\dots,a_N)$, $\mathbf{B}\in\mathbb{R}^{N}$,
and $\mathbf{C}\in\mathbb{R}^{N}$.
After discretisation, one processes a
$D$-dimensional sequence $\mathbf{X}\in\mathbb{R}^{T\times D}$ by running $D$
independent copies of this recurrence, one per feature coordinate, or
channel, each with its own scalar input $x(t)$.

\textbf{Mamba (S6)}~\cite{gu2023mamba} makes this recurrence
input-dependent (selective).
For a single channel the update is
\begin{equation}
\mathbf{h}_t
  =\bar{\mathbf{a}}_t\odot\mathbf{h}_{t-1}
  +\bar{\mathbf{b}}_t\, x_t,
\qquad
y_t=\mathbf{c}_t^{\top}\mathbf{h}_t,
\label{eq:mamba-rec}
\end{equation}
where $x_t,y_t\!\in\!\mathbb{R}$; this system can be applied
independently to each feature channel of a $D$-dimensional sequence.
Here $\mathbf{h}_t\!\in\!\mathbb{R}^{N}$ is the latent state,
$\bar{\mathbf{a}}_t,\bar{\mathbf{b}}_t\!\in\!\mathbb{R}^{N}$ are discretised
diagonal dynamics, and
$\mathbf{c}_t\!\in\!\mathbb{R}^{N}$ is an input-dependent readout
(the selective counterpart of
$\mathbf{C}$ in \cref{eq:ssm-ct}).
The key property is \emph{selectivity}: all parameters
$(\bar{\mathbf{a}}_t,\bar{\mathbf{b}}_t,\mathbf{c}_t)$ are computed from the
current input $x_t$, allowing the model to selectively filter which
information to retain or discard at each step, while preserving
linear-time scanning.
This input-dependent gating can be seen as a learned form of
filtering; in \cref{par:ssm} we formalise this intuition through
Bayesian filtering with explicit uncertainty.

\paragraph{State expansion.}
The dimension $N$ in \cref{eq:mamba-rec} is a design choice known as
\emph{state expansion}~\cite{gupta2022simplifying,poli2024mad,yu2025block}.
With $N{=}1$, each feature channel maintains a single scalar recurrence and the
state is $\mathbf{h}_t\in\mathbb{R}^{D}$.
Setting $N{>}1$ equips every channel with $N$ parallel memory slots, expanding
the state to $\mathbf{H}_t\in\mathbb{R}^{N\times D}$.
Stacking the per-channel dynamics accordingly, we write
$\bar{\mathbf{A}}_t,\bar{\mathbf{B}}_t\!\in\!\mathbb{R}^{N\times D}$
and $\mathbf{C}_t\!\in\!\mathbb{R}^{1\times N}$.
How the dynamics are allocated across these slots is itself an algorithmic
decision: $\bar{\mathbf{A}}_t$ and $\bar{\mathbf{B}}_t$ may be \emph{distinct}
per slot or \emph{shared} (tied) across channels, depending on the model variant.
State expansion trades a larger memory footprint ($N{\times}D$ vs.\ $D$) for
richer per-channel recurrence histories, as each slot can capture a different
effective time-scale.

\paragraph{Attention from the SSM perspective.}
Ignoring scaling constants, auto-regressive softmax self-attention computes
\begin{align}
\mathbf{Y} = \mathrm{softmax}(\mathbf{Q}\mathbf{K}^\top + \mathbf{M})\mathbf{V},
\end{align}
where $\mathbf{Q},\mathbf{K},\mathbf{V}=\mathbf{X}\mathbf{W}_q,\mathbf{X}\mathbf{W}_k,\mathbf{X}\mathbf{W}_v$ for $\mathbf{X},\mathbf{Q},\mathbf{K},\mathbf{V}\in\mathbb{R}^{T\times D}$, $\mathbf{W}_q, \mathbf{W}_k, \mathbf{W}_v \in \mathbb{R}^{D \times D}$ and $\mathbf{M} \in (- \infty, 1)^{T \times T}$ is an auto-regressive mask.

Unlike the linear recurrence in \cref{eq:mamba-rec}, softmax attention maintains a distinct memory of each $\mathbf{K}_t$ and $\mathbf{V}_t$, yielding $\mathcal{O}(T^2)$ time and memory complexity.   
\looseness-1

By removing softmax we obtain Linear Attention
\\
$ \mathbf{y}_t = \mathbf{q}_t \sum_{i=0}^t  \mathbf{k}_i^\top \mathbf{v}_i$
which permits an alternate recurrent form~\cite{katharopoulos2020transformers} that can be calculated in $\mathcal{O}(T)$ time and $\mathcal{O}(1)$ memory complexity, given by
\begin{equation}
\mathbf{H}_t=\mathbf{G}_t\odot\mathbf{H}_{t-1}+\mathbf{k}_t^\top\mathbf{v}_t
\qquad\text{and}\qquad
\mathbf{y}_t=\mathbf{q}_t\mathbf{H}_t.
\label{eq:gla-rec}
\end{equation}
where $\mathbf{G}_t \in \mathbb{R}^{N \times D}$ is an optional gating term.

Identifying $\mathbf{G}_t \equiv \mathbf{\bar A}_t$, $\mathbf{k}_t^\top \equiv \mathbf{\bar B}_t$ and $\mathbf{q}_t \equiv \mathbf{C}_t$ 
shows that GLA~\citep{yang2023gated} matches Mamba when $\mathbf{W}_v = \mathbf{I}$.

\paragraph{Gaussian state-space models and Bayesian filtering.}
\label{par:ssm}
A classical probabilistic lens on sequential data is \emph{state estimation}: we posit an (unobserved) latent state
$\mathbf{z}_t \in \mathbb{R}^d$ that evolves over time, and assume the observed sequence
$\mathbf{o}_t \in \mathbb{R}^p$ provides noisy evidence about that state.
The goal of \emph{Bayesian filtering} is to maintain the posterior belief
$p(\mathbf{z}_t \mid \mathbf{o}_{1:t})$ online, recursively combining a \emph{dynamics prior}
(predict) with an \emph{observation likelihood} (update). 

When the dynamics and observations are linear with Gaussian noise, we obtain a linear-Gaussian state-space model:\looseness-1
\begin{align}
\mathbf{z}_t &= \mathbf{A}_t \mathbf{z}_{t-1} + \mathbf{w}_t,
\qquad \mathbf{w}_t \sim \mathcal{N}(\mathbf{0}, \mathbf{P}_t), \label{eq:lgssm_dyn}\\
\mathbf{o}_t &= \mathbf{C}_t \mathbf{z}_t + \mathbf{v}_t,
\qquad \mathbf{v}_t \sim \mathcal{N}(\mathbf{0}, \bm\Sigma^{\mathrm{obs}}_t), \label{eq:lgssm_obs}
\end{align}
in which the filtering posterior remains Gaussian, $
p(\mathbf{z}_t \mid \mathbf{o}_{1:t}) = \mathcal{N}(\bm\mu_t, \bm\Sigma_t)$
and can be computed in closed form via the \emph{Kalman filter} (KF), which is optimal in the minimum mean-squared error (MMSE) sense under these assumptions.

\paragraph{Control versus observation: two ways to read a token sequence.}
This filtering view contrasts with the common ``controlled dynamics'' interpretation of deterministic mixers (e.g.\ SSM/CDE-style models)~\citep{muca2024theoretical}, where the input is an exogenous control driving the hidden state forward; Bayesian filtering instead treats it as noisy observations of a stochastic latent process. This swap, control versus observation, changes what is random and what computation is appropriate, forward simulation versus posterior inference (\cref{fig:control-obs-view}). We adopt the filtering perspective throughout, yielding an attention-like mixer with explicit uncertainty propagation.\looseness-1

\paragraph{Notational conventions.}
We index token sequences by $t\in\{1,\dots,T\}$. Bold uppercase terms denote matrices (or collections over time), lowercase denotes scalars, and boldface denotes vectors/sequences (e.g., $\mathbf{x}_t$).
In the diagonal (per-coordinate) parameterisation, we identify diagonal matrices with their diagonal terms and apply scalar recursions elementwise. An overbar denotes the discretised counterpart of a continuous-time parameter, e.g.\ $\bar a_t$ is obtained by discretising a continuous-time decay parameter.\looseness-1

For completeness, Appendix \ref{app:ex-background} includes additional background material on related mathematical constructs such as M\"{o}bius (Fractional Linear) Transforms, the Information Form of Gaussian distributions, and Information Filters.

\begin{figure}[t]
  \centering
  \resizebox{0.92\columnwidth}{!}{\tikzOUHybridTall}
  \caption{From OU dynamics to parallel inference.
  \textbf{(Top)} Continuous-time OU prior.
  \textbf{(Middle)} Discrete linear-Gaussian SSM.
  \textbf{(Bottom)} M\"{o}bius scan for parallel posterior state estimation.\vspace{-0.1in}}
  \label{fig:ou-hybrid-tall}
\end{figure}
\section{Method}
\label{sec:method}

We introduce \emph{Kalman Linear Attention (KLA)} as a \emph{probabilistic} sequence mixer. Instead of deterministically updating a hidden state as in modern SSM mixers, KLA maintains a belief state over a latent representation, consisting of a posterior mean and an explicit uncertainty (precision/covariance). This probabilistic view turns sequence mixing into state estimation from noisy token evidence, and provides a principled mechanism for adaptivity and a gating mechanism derived from relative uncertainties of state and observation.

Our method is organised around two ideas.
\textbf{(i) A probabilistic SSM view of sequence mixing.}
We cast token processing as inference in a linear-Gaussian state-space model. Each token contributes a (noisy) observation $v_t$ of the latent state through an observation operator $k_t$, while the latent evolves under a continuous-time stochastic prior that we discretise for sequences. 

\textbf{(ii) Parallelisable filtering via information form.}
We address the main obstacle to using Bayesian filters as sequence mixers: classical filtering is recursive and appears inherently sequential. We show that by working in \emph{information form} (precisions and natural parameters), the posterior updates acquire a parallelisation-friendly M\"{o}bius (fractional-linear) structure. This yields a scalable implementation of Bayesian sequence mixing with the computational profile of modern linear-time mixers, while preserving clear probabilistic semantics.\looseness-1

\paragraph{Notation and scope.}
Throughout this section, \unc{blue terms} denote uncertainty-aware quantities absent in deterministic SSMs. For readability, the remainder of this section fixes the state-expansion factor to $N{=}1$ and presents the \emph{diagonal} (per-channel) Kalman filter, so that every recursion reduces to scalar (elementwise) operations. The state-expanded ($N{>}1$) form - in which the state becomes a matrix $\mathbf{H}_t\in\mathbb{R}^{N\times D}$ - is given in \cref{tab:online-learning} and \cref{alg:kla}. We stress that this matrix arises \emph{solely} from state expansion (parallel memory slots per channel) and is \emph{not} a full covariance: \our's covariance/precision remains diagonal throughout.\footnote{\label{fn:notation}\our uses two equivalent state parameterisations related by the precision: $\boldsymbol{\eta}_t,\boldsymbol{\mu}_t$ are the per-channel ($N{=}1$) \emph{information}- and \emph{moment}-form means, and $\mathbf{H}_t,\mathbf{S}_t\!\in\!\mathbb{R}^{N\times D}$ their state-expanded ($N{>}1$) counterparts, related by $\mathbf{H}_t=\boldsymbol{\Lambda}_t\odot\mathbf{S}_t$ (i.e.\ $\boldsymbol{\eta}_t=\boldsymbol{\lambda}_t\odot\boldsymbol{\mu}_t$). The parallel scan runs in the information form ($\mathbf{H}_t$, \cref{alg:kla}); the moment form ($\mathbf{S}_t$, \cref{tab:online-learning}) is an equivalent representation used by the query readout. The explicit state-expanded recursions are given in \cref{eq:kla-prec-mat,eq:kla-mean-mat}.}
\subsection{Stochastic Dynamics and Token Likelihood}
\label{sec:stochastic_dynamics}
\paragraph{Dynamics prior (predict step) via OU discretisation.}
Modern SSM mixers~\cite{gu2021efficiently,smith2022simplified} are often motivated from continuous-time dynamics and then discretised to operate on token sequences. In this view, a simple linear ODE yields a discrete-time recurrence in which the discretised decay $\bar{\mathbf{a}}_t$ acts as a forget factor, and the step size $\Delta t$ controls the effective time scale. This provides an appealing interpretation of SSM channels as a spectrum of memories parameterised by their decay rates.

KLA adopts the same continuous-to-discrete modelling pipeline, but makes one conceptual change: we treat latent evolution as inherently uncertain. Concretely, we specify a continuous-time \emph{stochastic} dynamics prior and discretise it exactly. We use an Ornstein-Uhlenbeck (OU) process as the prior because it is the canonical mean-reverting diffusion (and the continuous-time analogue of a stable AR(1)): it preserves exponential forgetting while introducing an explicit process-noise term that captures unmodelled variability and uncertainty in the latent dynamics.
Under exact discretisation, the OU prior induces a Gaussian transition
\begin{align}
\mathbf{z}_t \mid \mathbf{z}_{t-1} &\sim \mathcal{N}\!\bigl(\bar{\mathbf{a}}_t \odot \mathbf{z}_{t-1},\, \unc{\bar{\mathbf{p}}_t}\bigr),
\label{eq:ssm_dyn} \\
\label{eq:discretisation}
\bar{\mathbf{a}}_t &= e^{-\mathbf{a}\,\Delta t},
\qquad
\unc{\bar{\mathbf{p}}_t = \frac{\mathbf{p}^2}{2\mathbf{a}}\odot\Bigl(1-e^{-2\mathbf{a}\,\Delta t}\Bigr)},
\end{align}
where $\bar{\mathbf{a}}_t$ denotes the decay and $\bar{\mathbf{p}}_t$ the process-noise variance.

\paragraph{Multi-channel specialisation.} Importantly, $\bar{\mathbf{p}}_t$ is coupled to $\bar{\mathbf{a}}_t$ by construction: the same decay and time-scale parameters $\mathbf{a}$ that control decay also determine how uncertainty accumulates between observations.
This yields per-channel specialisation along two linked axes: memory decay and drift.
Different coordinates can learn different decay rates (short- vs.\ long-memory channels), and their process noise $\bar{\mathbf{p}}_t$ sets how freely each channel can drift away from its predicted trajectory between tokens. This is different from Mamba, where the timescale/discretisation parameter is token-dependent and undertakes the role of selection/filtering at each timestep, whereas in our case selection/filtering is achieved purely via uncertainties without overloading $\Delta_t$ with additional roles.\looseness-1

\paragraph{Input token likelihood as noisy evidence (update step).}
To connect this dynamics prior to input token sequences, we model each token as providing noisy evidence about the latent state:
\begin{equation}
\mathbf{v}_t \mid \mathbf{z}_t \sim \mathcal{N}\!\bigl(\mathbf{k}_t \odot \mathbf{z}_t,\, \unc{(\boldsymbol{\Lambda}_t^{\mathrm{v}})^{-1}}\bigr),
\label{eq:ssm_obs}
\end{equation}
where $\mathbf{v}_t$ is token-derived content, $\mathbf{k}_t$ is an observation operator, and $\boldsymbol{\Lambda}_t^{\mathrm{v}}$ is the value precision representing confidence in the token evidence. Combining~\cref{eq:ssm_dyn} with~\cref{eq:ssm_obs} yields a linear-Gaussian state-space model, and \emph{sequence mixing corresponds to posterior inference} of the latent state given token evidence.

\begin{table}[t]
\centering
\caption{Notation alignment across attention, SSMs, and \our. Precisions ($\boldsymbol{\lambda}$, $\boldsymbol{\Lambda}^{\mathrm v}$, $\boldsymbol{\Lambda}^{\mathrm p}$) are inverse variances encoding confidence: higher precision means lower uncertainty.}
\label{tab:notation-alignment}
\footnotesize
\renewcommand{\arraystretch}{1.15}
\setlength{\tabcolsep}{4pt}
\resizebox{\columnwidth}{!}{%
\begin{tabular}{@{}p{0.95in}p{1.1in}p{1.1in}@{}}
\toprule
\textbf{Attention} & \textbf{Deterministic SSM / GLA} & \textbf{\our (probabilistic)} \\
\midrule
Query operator $\mathbf{q}_t$ & Output/readout map $\mathbf{C}_t$ & Query/readout operator $\mathbf{q}_t$ \\
Key operator $\mathbf{k}_t$ & Input to latent map $\mathbf{b}_t$ & Observation model/operator $\mathbf{k}_t$ \\
Value $\mathbf{v}_t$ & Token input/control $\mathbf{u}_t$ & Noisy observation $\mathbf{v}_t$ \\
Output $\mathbf{y}_t$ & Output $\mathbf{y}_t$ & Output $\mathbf{y}_t$ \\
\midrule
\multicolumn{3}{@{}l}{\textit{Hidden state (implicit in attention, explicit in SSMs/\our):}} \\
\rowcolor{gray!8}
\textcolor{gray}{---} & Hidden state $\mathbf{h}_t$ & Posterior belief $\mathcal{N}(\boldsymbol{\mu}_t, \boldsymbol{\Lambda}_t^{-1})$ over $\mathbf{z}_t$ \\
\rowcolor{gray!8}
\textcolor{gray}{---} & State decay $\bar{\mathbf{a}}_t = e^{-\mathbf{a}(t)\Delta t}$ & State decay $\bar{\mathbf{a}}_t = e^{-\mathbf{a} \Delta t}$ \\
\midrule
\multicolumn{3}{@{}l}{\textit{Uncertainty terms (explicit in \our only):}} \\
\rowcolor{gray!8}
\textcolor{gray}{---} & \textcolor{gray}{---} & Value precision $\boldsymbol{\Lambda}_t^{\mathrm v}$ \\
\rowcolor{gray!8}
\textcolor{gray}{---} & \textcolor{gray}{---} & State precision $\boldsymbol{\lambda}_t$ \\
\rowcolor{gray!8}
\textcolor{gray}{---} & \textcolor{gray}{---} & Process noise $\bar{\mathbf{p}}_t = \tfrac{\mathbf{p}^2}{2\mathbf{a}}\odot(1-e^{-2\mathbf{a}\Delta t})$ \\
\bottomrule
\end{tabular}%
}
\vspace{-0.15in}
\end{table}
\paragraph{Output readout as a query/task-conditioned projection of the belief state.}
So far, $(\mathbf{k}_t,\mathbf{v}_t,\boldsymbol{\Lambda}_t^{\mathrm v})$ define how input token evidence updates the latent belief $p(\mathbf{z}_t\mid \mathbf{v}_{1:t})$.
A remaining design choice is how to use this belief to produce an output representation for downstream prediction.
Deterministic SSM mixers typically apply a readout $\mathbf{y}_t=\mathbf{C}_t\odot \mathbf{h}_t$, while transformers use a query to determine what information should be extracted from stored context.
KLA adopts an analogous view: the \emph{query} $\mathbf{q}_t$ specifies \emph{what we want to read out} from the inferred belief state at time $t$.
To make this explicit, we introduce a (linear-Gaussian) readout model
\begin{equation}
\mathbf{y}_t \mid \mathbf{z}_t \;\sim\; \mathcal{N}\!\bigl(\mathbf{q}_t \odot \mathbf{z}_t,\; (\boldsymbol{\Lambda}^{\mathrm{out}}_t)^{-1}\bigr),
\label{eq:readout_model}
\end{equation}
where $\boldsymbol{\Lambda}^{\mathrm{out}}_t$ is an \emph{output precision} (readout noise).
In this work we take the deterministic-readout limit $\boldsymbol{\Lambda}^{\mathrm{out}}_t \to \infty$ (equivalently, zero readout noise), so that the output is the posterior mean projection
\begin{equation}
\mathbf{y}_t \;=\; \mathbf{q}_t \odot \boldsymbol{\mu}_t,
\label{eq:readout}
\end{equation}
which mirrors the role of a query-conditioned readout in attention and a $\mathbf{C}_t$-projection in SSMs. An intuitive analogy is accumulating beliefs over time via posterior inference through an input modality $\mathbf{v}_{t}$ (say image sensors) via observation model $\mathbf{k}_{t}$ and converting the latent beliefs to an output modality (say proprioceptive sensors) $\mathbf{y}_{t}$ via query model $\mathbf{q}_t$.

\paragraph{Gaussian SSM view of $q$-$k$-$v$ interactions.}
We can summarise the attention-aligned probabilistic interpretation as a linear-Gaussian state-space model in which keys and values parameterise the \emph{likelihood} (token evidence), while queries parameterise a \emph{readout} applied after inference.
Let $\mathbf{z}_t$ denote the latent state and let $(\mathbf{k}_t,\mathbf{v}_t,\boldsymbol{\Lambda}_t^{\mathrm v})$ be token-dependent likelihood parameters produced from the input sequence.
The generative model is:
\begin{align}
& \textbf{Prior}\quad
&\mathbf{z}_t \mid \mathbf{z}_{t-1}
&\sim \mathcal{N}\!\bigl(\bar{\mathbf{a}}_t \odot \mathbf{z}_{t-1},\; \bar{\mathbf{p}}_t\bigr),
\label{eq:gen_dyn_qkv}\\
&\textbf{Evidence}\quad
&\mathbf{v}_t \mid \mathbf{z}_t
&\sim \mathcal{N}\!\bigl(\mathbf{k}_t \odot \mathbf{z}_t,\; (\boldsymbol{\Lambda}_t^{\mathrm v})^{-1}\bigr),
\label{eq:gen_obs_qkv}\\
&\textbf{Readout}\quad
&\mathbf{y}_t \mid \mathbf{z}_t
&\sim \mathcal{N}\!\bigl(\mathbf{q}_t \odot \mathbf{z}_t,\; (\boldsymbol{\Lambda}_t^{\mathrm{out}})^{-1}\bigr).
\label{eq:gen_readout_qkv}
\end{align}

In this view, $\mathbf{k}_t$ specifies the observation geometry (how token evidence constrains the latent), $\mathbf{v}_t$ is the observed token evidence, and $\boldsymbol{\Lambda}_t^{\mathrm v}$ is its reliability (precision).
The query $\mathbf{q}_t$ instead specifies what component of the inferred latent state is exposed as the output.

\subsection{Parallel Inference via M\"obius and Affine Scans}
\label{sec:parallel_inference}

A major obstacle to using Bayesian filters as sequence mixers is that classical filtering is \emph{recursive} and thus appears inherently sequential.
Our key observation is that, in \emph{information form}, the updates admit a \emph{compositional} structure that can be implemented with parallel scans.

\paragraph{Information form.}
We represent the Gaussian posterior by its precision and natural (canonical) parameter:
\begin{equation}
p(\mathbf{z}_t \mid \mathbf{v}_{1:t}) = \mathcal{N}(\boldsymbol{\mu}_t,\boldsymbol{\lambda}_t^{-1}),
\qquad
\boldsymbol{\eta}_t := \boldsymbol{\lambda}_t \odot \boldsymbol{\mu}_t.
\end{equation}
This parameterisation is convenient because, in linear-Gaussian models, incorporating token evidence corresponds to \emph{adding} canonical parameters (precisions and information means), while the remaining ``predict'' transformation induced by the dynamics takes a structured form.
As we show next, these structured updates \emph{compose associatively} across time, which is exactly what enables parallel prefix scans.
\begin{restatable}[Precision Update as a M\"obius Transformation]{theorem}{ThMobius}
\label{Th:Mobius}
Let $\unc{\boldsymbol{\lambda}_t}$ be the posterior precision at time $t$ in the diagonal linear-Gaussian model
\cref{eq:ssm_dyn}--\cref{eq:ssm_obs}. Define $\unc{\boldsymbol{\phi}_t \coloneqq \mathbf{k}_t^{2}\odot\boldsymbol{\Lambda}_t^{\mathrm v}}$.
Then the map $\boldsymbol{\lambda}_{t-1} \mapsto \boldsymbol{\lambda}_t$ is a linear-fractional (M\"obius) transform:
\begin{align}
\boldsymbol{\lambda}_t &= \mathbf{M}_t(\boldsymbol{\lambda}_{t-1})
=
\frac{\boldsymbol{\alpha}_t \odot \boldsymbol{\lambda}_{t-1} + \boldsymbol{\beta}_t}{\boldsymbol{\gamma}_t \odot \boldsymbol{\lambda}_{t-1} + \boldsymbol{\delta}_t}, \label{eq:mobius_main} \\[4pt]
\mathbf{M}_t &=
\begin{pmatrix}
\boldsymbol{\alpha}_t & \boldsymbol{\beta}_t\\
\boldsymbol{\gamma}_t & \boldsymbol{\delta}_t
\end{pmatrix}
=
\begin{pmatrix}
1 + \unc{\mathbf{\bar{p}}_t \odot \boldsymbol{\phi}_t} & \mathbf{\bar{a}}_t^{2}\odot\unc{\boldsymbol{\phi}_t}\\
\unc{\mathbf{\bar{p}}_t} & \mathbf{\bar{a}}_t^{2}
\end{pmatrix}.
\end{align}
\end{restatable}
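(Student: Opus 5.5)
Since everything is diagonal, we work coordinatewise and treat all quantities as positive scalars. The theorem then reduces to composing the two standard steps of the scalar Kalman filter, written in precision form, and simplifying. Fix $t$ and suppose the filtering posterior at $t-1$ is $\mathcal{N}(\mu_{t-1},\lambda_{t-1}^{-1})$.

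\textbf{Predict step.} Under the transition \cref{eq:ssm_dyn}, $z_t=\bar a_t z_{t-1}+w_t$ with $w_t\sim\mathcal{N}(0,\bar p_t)$ independent of $z_{t-1}$. The predictive is therefore Gaussian with variance $\bar a_t^2\lambda_{t-1}^{-1}+\bar p_t$. Its precision is
\[
\lambda_{t|t-1}=\frac{1}{\bar a_t^2/\lambda_{t-1}+\bar p_t}=\frac{\lambda_{t-1}}{\bar p_t\lambda_{t-1}+\bar a_t^2}.
\]
This is already a M\"obius map with matrix $\begin{pmatrix}1&0\\ \bar p_t&\bar a_t^2\end{pmatrix}$.

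\textbf{Update step.} In information form, conditioning on $v_t$ under \cref{eq:ssm_obs} multiplies the prior density by a Gaussian likelihood in $z_t$. As a function of $z_t$, the likelihood has precision $k_t^2\Lambda^{\mathrm v}_t=\phi_t$, since completing the square in $-\tfrac12\Lambda^{\mathrm v}_t(v_t-k_tz_t)^2$ gives the quadratic coefficient $k_t^2\Lambda^{\mathrm v}_t$. Precisions add, so $\lambda_t=\lambda_{t|t-1}+\phi_t$. This is the M\"obius map with matrix $\begin{pmatrix}1&\phi_t\\0&1\end{pmatrix}$.

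\textbf{Composition.} We combine the two steps by putting everything over the common denominator:
\[
\lambda_t=\frac{\lambda_{t-1}+\phi_t(\bar p_t\lambda_{t-1}+\bar a_t^2)}{\bar p_t\lambda_{t-1}+\bar a_t^2}=\frac{(1+\bar p_t\phi_t)\lambda_{t-1}+\bar a_t^2\phi_t}{\bar p_t\lambda_{t-1}+\bar a_t^2}.
\]
This matches $(\alpha_t,\beta_t,\gamma_t,\delta_t)$ in the statement. Equivalently, $\mathbf{M}_t$ is the matrix product $\begin{pmatrix}1&\phi_t\\0&1\end{pmatrix}\begin{pmatrix}1&0\\ \bar p_t&\bar a_t^2\end{pmatrix}$. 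Noting this factorisation is useful, because it shows directly that composing the maps corresponds to multiplying $2\times2$ matrices.

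\textbf{Well-definedness.} The only point needing care is that the fractional form is valid. The denominator $\bar p_t\lambda_{t-1}+\bar a_t^2$ is strictly positive because $\bar a_t=e^{-a\Delta t}>0$ and $\bar p_t\ge0$, $\lambda_{t-1}>0$. We handle the degenerate case $\lambda_{t-1}=\infty$ (a perfectly known state) by the projective convention, which gives the limit $(1+\bar p_t\phi_t)/\bar p_t=1/\bar p_t+\phi_t$, as expected. With these checks, every step is a direct scalar computation applied elementwise, which completes the argument.
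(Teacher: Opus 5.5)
Your proposal is correct and follows essentially the same route as the paper: the information-form predict step $\lambda_t^{\mathrm{prior}} = \lambda_{t-1}/(\bar a_t^2 + \bar p_t \lambda_{t-1})$, the additive update $\lambda_t = \lambda_t^{\mathrm{prior}} + k_t^2\Lambda_t^{\mathrm v}$, and elimination of the intermediate over a common denominator. The factorisation of $\mathbf{M}_t$ into an update matrix times a predict matrix and the positivity check on the denominator are not in the paper's proof, but they are correct additions.
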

\paragraph{Interpretation: precision track as uncertainty-driven gating.}
While \Cref{eq:mobius_main} is nonlinear in $\boldsymbol{\lambda}_{t-1}$, it admits a simple ``gate-like'' rearrangement:
\begin{align}
\unc{\boldsymbol{\lambda}_t}
&=
\underbrace{(\mathbf{\bar{a}}_t^2 + \unc{\mathbf{\bar{p}}_t \odot \boldsymbol{\lambda}_{t-1}})^{-1}\odot\unc{\boldsymbol{\lambda}_{t-1}}}_{\textit{(confidence history)}}
\;+\;
\underbrace{\unc{\boldsymbol{\phi}_t = \mathbf{k}_t^2\odot\boldsymbol{\Lambda}_t^{\mathrm v}}}_{\textit{(confidence in current token)}}.
\label{eq:lambda_gate_view}
\end{align}
The shared denominator $\bar{\mathbf{a}}_t^2 + \bar{\mathbf{p}}_t \odot \boldsymbol{\lambda}_{t-1}$ introduces \emph{history dependence}: as accumulated precision grows, the model naturally becomes more selective about incorporating new evidence.
We name its inverse $\unc{\boldsymbol{\rho}_t \coloneqq \mathbf{1}\oslash(\bar{\mathbf{a}}_t^2 + \bar{\mathbf{p}}_t \odot \boldsymbol{\lambda}_{t-1})}$, so the history term of \cref{eq:lambda_gate_view} is $\unc{\boldsymbol{\rho}_t\odot\boldsymbol{\lambda}_{t-1}}$.
The \emph{same} factor $\boldsymbol{\rho}_t$ reappears in the mean recursion below as the forget gate $\mathbf{f}_t = \boldsymbol{\rho}_t\odot\bar{\mathbf{a}}_t$ (\Cref{thm:eta-affine}), tying the precision and mean tracks together and inducing the input/forget-gate behaviour of the mean update (\Cref{eq:mu-update}).
\begin{restatable}[Precision Updates via Parallel Prefix Scan]{corollary}{CorParallelScan}
\label{cor:parallel-scan}
Given $\boldsymbol{\lambda}_0$ and matrices $\{\mathbf{M}_t\}_{t=1}^T$ from \Cref{Th:Mobius}, the posterior precisions
$\{\boldsymbol{\lambda}_t\}_{t=1}^T$ can be computed by a parallel prefix scan over $\{\mathbf{M}_t\}$ with
$\mathcal{O}(T)$ work and $\mathcal{O}(\log T)$ depth.
\end{restatable}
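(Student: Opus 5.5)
The plan is to turn the recursion of \Cref{Th:Mobius} into a prefix product of $2\times 2$ matrices and then appeal to the standard Blelloch/Ladner--Fischer scan.

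\textbf{Step 1: Möbius maps compose like matrices.} For each channel (everything is elementwise, so we fix one coordinate), identify the map $\lambda\mapsto(\alpha\lambda+\beta)/(\gamma\lambda+\delta)$ with the matrix $\mathbf{M}=\begin{pmatrix}\alpha&\beta\\\gamma&\delta\end{pmatrix}$. Then show that the composition $\mathbf{M}_2\circ\mathbf{M}_1$ is the Möbius map of the product $\mathbf{M}_2\mathbf{M}_1$. The cleanest route is homogeneous coordinates. Write $\lambda=u/w$ with $(u,w)^\top$ a vector. Applying the map amounts to multiplying this vector by $\mathbf{M}$ and re-projecting to $u'/w'$.

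Unrolling the recursion then gives
\begin{equation*}
\lambda_t=\pi\bigl(\mathbf{M}_t\mathbf{M}_{t-1}\cdots\mathbf{M}_1(\lambda_0,1)^\top\bigr),
\qquad \pi(u,w)=u/w .
\end{equation*}
This reduces the problem to computing all prefix products $\mathbf{P}_t=\mathbf{M}_t\cdots\mathbf{M}_1$.

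\textbf{Step 2: parallel scan.} Matrix multiplication is associative, so the operator $\mathbf{A}\bullet\mathbf{B}:=\mathbf{B}\mathbf{A}$ (taking the later element on the left) is an associative binary operation on $2\times 2$ matrices. Each application costs $\mathcal{O}(1)$: eight multiplications and four additions per channel.

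Invoking the classical result for work-efficient parallel prefix over any associative operator (Blelloch's up-sweep/down-sweep) yields all $\mathbf{P}_t$ with $\mathcal{O}(T)$ work and $\mathcal{O}(\log T)$ depth. The final projection $\lambda_t=\pi(\mathbf{P}_t(\lambda_0,1)^\top)$ is embarrassingly parallel and adds only $\mathcal{O}(T)$ work and $\mathcal{O}(1)$ depth. The $D$ (or $N\times D$) channels are independent, which multiplies work by the width without changing depth.

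\textbf{Main obstacle: the homogeneous representation must stay well defined.} Two things need checking.

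\emph{No vanishing denominators.} We need $w\neq 0$ at every intermediate point. We argue by positivity: $\bar{a}_t^2>0$, $\bar{p}_t\ge 0$, $\phi_t\ge 0$ and $\lambda_0>0$, so every entry of $\mathbf{M}_t$ is nonnegative and $\delta_t>0$. Hence every product $\mathbf{P}_t$ has nonnegative entries and its second row $(\gamma,\delta)$ satisfies $\delta>0$. Applied to $(\lambda_0,1)$ with $\lambda_0>0$, this gives $w>0$ throughout, so the projections are valid and the composition identity of Step 1 holds exactly, not merely up to a degenerate case.

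\emph{Numerical growth of entries.} Products of many $\mathbf{M}_t$ can overflow or underflow. Since a Möbius map is invariant to scalar rescaling of its matrix, we can renormalise each partial product, for example by its determinant or by $\delta$, inside the combine step. Rescaling by a positive scalar commutes with matrix multiplication, so the operator remains associative on projective classes and the complexity bounds are unaffected.
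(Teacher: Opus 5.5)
Your proposal is correct and follows the paper's proof. In both, Möbius maps compose by $2\times 2$ matrix multiplication, the prefix products $\mathbf{M}_t\cdots\mathbf{M}_1$ are computed with a Blelloch scan in $\mathcal{O}(T)$ work and $\mathcal{O}(\log T)$ depth, and each $\lambda_t$ is then read off in $\mathcal{O}(1)$. Your extra checks are sound additions that the paper leaves implicit: the homogeneous-coordinate justification, and the positivity argument that the second-row entry stays positive under products so no denominator ever vanishes (this also covers $\lambda_0=0$). Likewise your remark that positive rescaling leaves the Möbius map unchanged, so renormalising partial products does not affect the complexity, is correct.
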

\begin{restatable}[Mean Update as Affine Transformations]{theorem}{ThMeanAffine}
\label{thm:eta-affine}
Let $\boldsymbol{\eta}_t \coloneqq \unc{\boldsymbol{\lambda}_t}\odot\boldsymbol{\mu}_t$ be the posterior information mean and $\unc{\boldsymbol{\Lambda}_t^{\mathrm v}}$ the value precision.
Given the precision path $\{\unc{\boldsymbol{\lambda}_t}\}$, the information mean evolves affinely:
\begin{align}
\boldsymbol{\eta}_t
&=
\underbrace{(\mathbf{\bar{a}}_t^2 + \unc{\mathbf{\bar{p}}_t \odot \boldsymbol{\lambda}_{t-1}})^{-1}\odot \mathbf{\bar{a}}_t}_{\mathbf{f}_t\;\;\text{(history-dependent forget gate)}}
\odot\boldsymbol{\eta}_{t-1}
\;+\;
\underbrace{\mathbf{k}_t \odot \unc{\boldsymbol{\Lambda}_t^{\mathrm v}} \odot \mathbf{v}_t}_{\text{token evidence}}.
\label{eq:mu-update}
\end{align}
\end{restatable}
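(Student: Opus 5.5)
Since every quantity is diagonal, it suffices to work with a single channel and scalar recursions; the vector statement then follows elementwise. The plan is to split the filter step $t-1\to t$ into a predict step and an update step, and track the information mean $\eta=\lambda\mu$ through each.

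\textbf{Predict step.} From \cref{eq:ssm_dyn}, the predictive distribution of $z_t$ given $v_{1:t-1}$ is Gaussian with mean $\bar a_t\mu_{t-1}$ and variance $\bar a_t^2\lambda_{t-1}^{-1}+\bar p_t$. Its predictive precision is therefore
\begin{equation*}
\lambda_{t|t-1}=\frac{\lambda_{t-1}}{\bar a_t^2+\bar p_t\lambda_{t-1}}=\rho_t\lambda_{t-1},
\end{equation*}
which is exactly the history term in \cref{eq:lambda_gate_view} from the proof of \Cref{Th:Mobius}. The predictive information mean is
\begin{equation*}
\eta_{t|t-1}=\lambda_{t|t-1}\,\bar a_t\mu_{t-1}=\rho_t\lambda_{t-1}\bar a_t\mu_{t-1}.
\end{equation*}
Substituting $\lambda_{t-1}\mu_{t-1}=\eta_{t-1}$ gives $\eta_{t|t-1}=\rho_t\bar a_t\,\eta_{t-1}=f_t\eta_{t-1}$. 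This handles the case $\lambda_{t-1}>0$. For a flat prior, $\lambda_{t-1}=0$, the identity also holds: both sides vanish if we set $\eta_{t-1}=0$, as in the natural-parameter convention.

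\textbf{Update step.} By \cref{eq:ssm_obs}, the likelihood contributes
\begin{equation*}
\log p(v_t\mid z_t)=-\tfrac12\Lambda^{\mathrm v}_t(v_t-k_tz_t)^2+\text{const}.
\end{equation*}
In $z_t$ this has quadratic coefficient $-\tfrac12 k_t^2\Lambda^{\mathrm v}_t$ and linear coefficient $k_t\Lambda^{\mathrm v}_tv_t$. Multiplying the Gaussian prior by this likelihood adds natural parameters, so $\lambda_t=\lambda_{t|t-1}+\phi_t$, consistent with \Cref{Th:Mobius}, and
\begin{equation*}
\eta_t=\eta_{t|t-1}+k_t\Lambda^{\mathrm v}_tv_t.
\end{equation*}
Combining this with the predict step gives \cref{eq:mu-update}.

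\textbf{Affine structure.} Conditional on the precision path, $f_t$ depends only on $\lambda_{t-1}$, which is already fixed. The map $\eta_{t-1}\mapsto\eta_t$ is therefore affine with coefficients $(f_t,\,k_t\Lambda^{\mathrm v}_tv_t)$. Such maps compose associatively as pairs $(a,b)\circ(a',b')=(aa',\,ab'+b)$, so the same prefix-scan argument as \Cref{cor:parallel-scan} applies once the $\lambda_t$ are known. The only delicate point is the predict step for $\eta$. One must write the predictive mean in terms of $\eta_{t-1}$ rather than $\mu_{t-1}$, and see that the factor $\lambda_{t-1}$ cancels against the $\lambda_{t-1}$ in the numerator of $\lambda_{t|t-1}$. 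This cancellation is what produces the same $\rho_t$ as in the precision track. The remaining steps are routine Gaussian algebra.
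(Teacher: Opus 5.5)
Your proposal is correct and takes essentially the paper's route. You split the step into the information-form predict, with $\lambda_{t|t-1}=\lambda_{t-1}/(\bar a_t^2+\bar p_t\lambda_{t-1})$ and $\eta_{t|t-1}=\lambda_{t|t-1}\bar a_t\mu_{t-1}$, where the factor $\lambda_{t-1}$ cancels to leave $f_t$; you then apply the additive natural-parameter update $\eta_t=\eta_{t|t-1}+k_t\Lambda^{\mathrm v}_t v_t$. Deriving that update from the log-likelihood coefficients and handling the flat-prior case $\lambda_{t-1}=0$ are small additions that the paper leaves implicit.
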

\begin{restatable}[Mean Updates via Parallel Prefix Scan]{corollary}{CorMeanParallelScan}
\label{cor:mean-parallel-scan}
The posterior information means $\{\boldsymbol{\eta}_t\}_{t=1}^T$ are computable via parallel prefix scan over the affine transformations in Theorem~\ref{thm:eta-affine} in $\mathcal{O}(T)$ work and $\mathcal{O}(\log T)$ depth.
\end{restatable}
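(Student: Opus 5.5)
The plan is to reduce the claim to a standard prefix scan over affine maps under function composition, once Theorem~\ref{thm:eta-affine} has supplied the coefficients. Fix a channel; all operations are elementwise, so it suffices to treat scalars. Write the recursion of Theorem~\ref{thm:eta-affine} as $\boldsymbol{\eta}_t=\mathbf{f}_t\odot\boldsymbol{\eta}_{t-1}+\mathbf{b}_t$, where $\mathbf{b}_t=\mathbf{k}_t\odot\boldsymbol{\Lambda}_t^{\mathrm v}\odot\mathbf{v}_t$ and $\mathbf{f}_t=(\bar{\mathbf{a}}_t^2+\bar{\mathbf{p}}_t\odot\boldsymbol{\lambda}_{t-1})^{-1}\odot\bar{\mathbf{a}}_t$.

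\textbf{Computing the coefficients.} The only subtlety is that $\mathbf{f}_t$ depends on $\boldsymbol{\lambda}_{t-1}$, which is a history-dependent quantity. I would handle this in two phases.
\begin{enumerate}[label=(\roman*)]
\item Run the M\"obius scan of Corollary~\ref{cor:parallel-scan} to obtain all $\{\boldsymbol{\lambda}_t\}_{t=0}^{T-1}$. This costs $\mathcal{O}(T)$ work and $\mathcal{O}(\log T)$ depth.
\item Compute every $\mathbf{f}_t$ and $\mathbf{b}_t$ pointwise. This takes $\mathcal{O}(1)$ work per step, so $\mathcal{O}(T)$ work and $\mathcal{O}(1)$ depth in total.
\end{enumerate}
After this, the mean recursion is a time-varying affine recursion with known coefficients.

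\textbf{Associativity.} Identify each step with the pair $(\mathbf{f}_t,\mathbf{b}_t)$, representing the map $x\mapsto \mathbf{f}_t x+\mathbf{b}_t$. Composing step $s$ followed by step $t$ gives
\[
(\mathbf{f}_t,\mathbf{b}_t)\bullet(\mathbf{f}_s,\mathbf{b}_s)=(\mathbf{f}_t\mathbf{f}_s,\ \mathbf{f}_t\mathbf{b}_s+\mathbf{b}_t).
\]
This operator is associative because function composition is associative. Equivalently, it is multiplication of the $2\times2$ upper-triangular matrices $\begin{pmatrix}\mathbf{f}_t&\mathbf{b}_t\\0&1\end{pmatrix}$. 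Each application of $\bullet$ costs $\mathcal{O}(1)$.

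\textbf{Running the scan.} Apply a work-efficient (Blelloch) parallel prefix scan to the sequence of pairs. This yields all prefix compositions $\Phi_t=(\mathbf{f}_t,\mathbf{b}_t)\bullet\cdots\bullet(\mathbf{f}_1,\mathbf{b}_1)$ with $\mathcal{O}(T)$ work and $\mathcal{O}(\log T)$ depth. Then recover $\boldsymbol{\eta}_t=\Phi_t(\boldsymbol{\eta}_0)$ pointwise, which again costs $\mathcal{O}(T)$ work and $\mathcal{O}(1)$ depth.

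\textbf{Totals and the main obstacle.} Summing the phases gives $\mathcal{O}(T)$ work and $\mathcal{O}(\log T)$ depth, since a constant number of stages each have depth at most $\mathcal{O}(\log T)$. The main point to check is that the forget gate's dependence on the precision does not break associativity. Because the precision path is computed beforehand and is independent of $\boldsymbol{\eta}$, the coefficients are fixed before the second scan begins, so the mean scan is a plain affine scan. Numerically one would also note that $\bar{\mathbf{a}}_t^2+\bar{\mathbf{p}}_t\boldsymbol{\lambda}_{t-1}>0$, so $\mathbf{f}_t$ is well defined.
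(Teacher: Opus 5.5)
Your proposal is correct and follows essentially the same route as the paper: write the mean recursion as the affine map $\boldsymbol{\eta}_t=\mathbf{f}_t\odot\boldsymbol{\eta}_{t-1}+\mathbf{k}_t\odot\boldsymbol{\Lambda}_t^{\mathrm v}\odot\mathbf{v}_t$, compose with the associative pair operator $(f_2,b_2)\oplus(f_1,b_1)=(f_2f_1,\,f_2b_1+b_2)$, and apply a Blelloch prefix scan. Your explicit first phase spells out what the paper leaves implicit in ``given the precision path'': you run the M\"obius scan of Corollary~\ref{cor:parallel-scan} to fix every gate $\mathbf{f}_t$ before the affine scan, and you count its cost in the $\mathcal{O}(T)$ work and $\mathcal{O}(\log T)$ depth totals.
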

\begin{figure*}[t]
\centering
\begin{subfigure}[b]{0.15\textwidth}
  \centering
  \includegraphics[width=\linewidth,height=4.8cm,keepaspectratio]{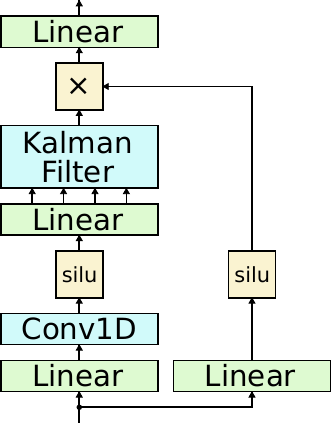}
  \caption{\our block.}
  \label{fig:block_arch}
\end{subfigure}\hspace{0.02\textwidth}%
\begin{subfigure}[b]{0.80\textwidth}
  \centering
  \includegraphics[width=\linewidth]{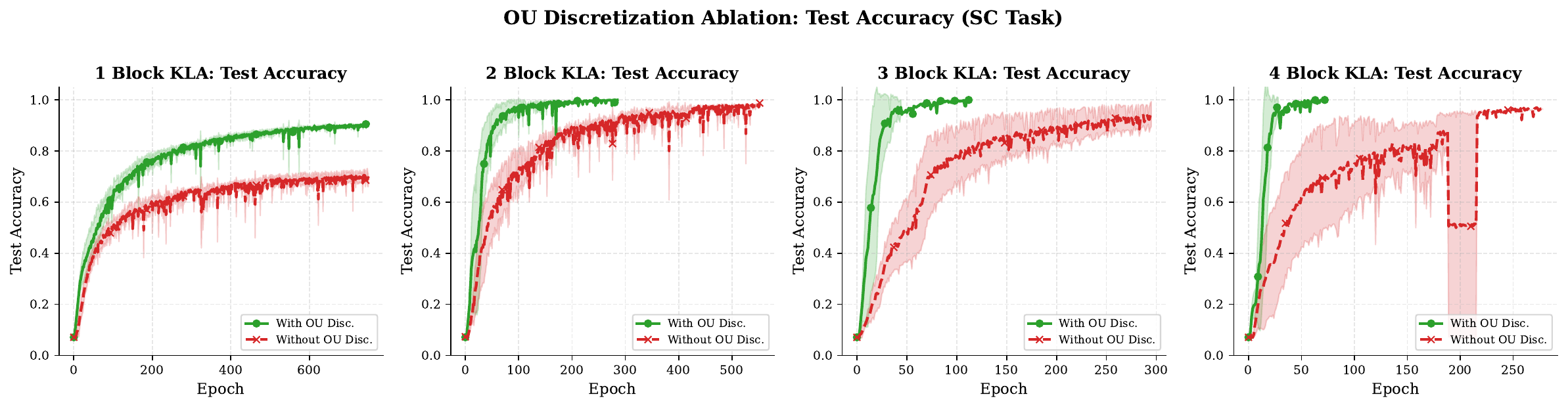}
  \caption{Importance of OU stochastic dynamics for filter stability.}
  \label{fig:ou-ablation}
\end{subfigure}
\caption{\textbf{(a)} The \our block follows Mamba's fused-MLP design, with the Kalman filter as a drop-in mixer (full architecture in \cref{app:architecture}). \textbf{(b)} Ablating the OU prior dynamics and discretisation (\cref{sec:stochastic_dynamics}): OU discretisation improves accuracy and learning stability, especially for deeper models.}
\label{fig:block-and-ablation}
\vspace{-0.1in}
\end{figure*}
\noindent\textbf{Special cases:} Under deterministic ($\mathbf{p}_t=0$) and linear time-invariant (LTI) settings, the KLA updates reduce to convolutions computable in $\mathcal{O}(\log T)$ time via FFT (see Theorem~\ref{thm:conv-form} in the Appendix).

\subsection{\our as an In-Context Online Learner: A Precision-Weighted Least-Squares View}
\label{sec:online-learning}

Recent sub-quadratic mixers are increasingly unified through an
\emph{online-learning} / fast-weight lens, first articulated for SSMs by
Longhorn~\citep{liu2025longhorn} and developed in follow-up linear-attention and
delta-rule work~\citep{schlag2021linear,yang2023gated,yang2024gated,katharopoulos2020transformers,dao2024transformers}:
each recurrent update is the closed-form minimiser of a per-token objective that
trades a \emph{proximal} term (stay close to the decayed memory) against a
\emph{fit} term (absorb the current token evidence).

We can show that \our's equivalent moment-form recursion admits an online update that is a \emph{gated RNN update} solving a local \emph{precision-weighted least-squares} objective. Reading the information-mean recursion of \Cref{thm:eta-affine} in moment
coordinates, $\boldsymbol{\mu}_t = \boldsymbol{\eta}_t\oslash\boldsymbol{\lambda}_t$, collapses the
predict/update pair into this single gated recurrence - the diagonal/scalar instance of the \our row of \cref{tab:online-learning}:
\begin{restatable}[Posterior-mean recursion as a gated RNN update]{corollary}{CorMomentForm}
\label{cor:moment-form}
The moment-form posterior mean $\boldsymbol{\mu}_t = \boldsymbol{\eta}_t\oslash\unc{\boldsymbol{\lambda}_t}$ of the information recursion in \Cref{thm:eta-affine} satisfies the gated recurrence
\begin{equation}
\boldsymbol{\mu}_t
= \bar{\mathbf{a}}_t\odot\Bigl(\mathbf{1} - \frac{\mathbf{k}_t^{2}\odot\unc{\boldsymbol{\Lambda}_t^{\mathrm v}}}{\unc{\boldsymbol{\lambda}_t}}\Bigr)\odot\boldsymbol{\mu}_{t-1}
\;+\; \frac{\mathbf{k}_t\odot\unc{\boldsymbol{\Lambda}_t^{\mathrm v}}\odot\mathbf{v}_t}{\unc{\boldsymbol{\lambda}_t}},
\label{eq:mu-delta}
\end{equation}
the diagonal/scalar instance of the \our state update in \cref{tab:online-learning}.
\end{restatable}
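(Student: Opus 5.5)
Divide the information recursion of \Cref{thm:eta-affine} by $\boldsymbol{\lambda}_t$ and rewrite both terms in moment coordinates, using \Cref{Th:Mobius} to relate the history factor to $\boldsymbol{\lambda}_t$. All operations are elementwise, so we work per channel with scalars.

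\textbf{Step 1 (token-evidence term).} Dividing the second term of \cref{eq:mu-update} by $\lambda_t$ gives $k_t\Lambda^{\mathrm v}_t v_t/\lambda_t$. This is exactly the input term of \cref{eq:mu-delta}, so nothing further is needed here.

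\textbf{Step 2 (history term).} Substitute $\eta_{t-1}=\lambda_{t-1}\mu_{t-1}$. The first term of \cref{eq:mu-update}, divided by $\lambda_t$, becomes
\[
\frac{\bar a_t\,\rho_t\,\lambda_{t-1}}{\lambda_t}\,\mu_{t-1},
\qquad
\rho_t = \frac{1}{\bar a_t^2+\bar p_t\lambda_{t-1}}.
\]
The gate-view rearrangement \cref{eq:lambda_gate_view} of \Cref{Th:Mobius} says $\lambda_t=\rho_t\lambda_{t-1}+\phi_t$ with $\phi_t=k_t^2\Lambda^{\mathrm v}_t$. Hence $\rho_t\lambda_{t-1}=\lambda_t-\phi_t$, and the coefficient is $\bar a_t(\lambda_t-\phi_t)/\lambda_t=\bar a_t(1-\phi_t/\lambda_t)$. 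This matches the gate in \cref{eq:mu-delta}.

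If we want to avoid relying on the displayed gate form, the key identity can be checked directly from the M\"obius matrix. We have
\[
\lambda_t-\phi_t=\frac{(1+\bar p_t\phi_t)\lambda_{t-1}+\bar a_t^2\phi_t-\phi_t(\bar p_t\lambda_{t-1}+\bar a_t^2)}{\bar p_t\lambda_{t-1}+\bar a_t^2}=\frac{\lambda_{t-1}}{\bar a_t^2+\bar p_t\lambda_{t-1}},
\]
which equals $\rho_t\lambda_{t-1}$. This one-line algebraic check is the only real content of the proof.

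\textbf{Technical caveat.} The division requires $\lambda_t>0$. This holds whenever $\lambda_{t-1}\ge 0$, $\bar p_t\ge0$ and $\bar a_t>0$, with the denominator positive, since then $\lambda_t\ge\phi_t\ge0$ and the history term is positive unless $\lambda_{t-1}=0$. We therefore assume a proper prior $\lambda_0>0$, so that $\lambda_t>0$ for all $t$ by induction. We do not expect any further obstacle.
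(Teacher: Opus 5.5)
Your proof is correct and follows essentially the same route as the paper. The paper likewise substitutes $\eta_{t-1}=\lambda_{t-1}\mu_{t-1}$ and divides by $\lambda_t$. It then uses the predict/update split $\lambda_t^{\mathrm{prior}}=\rho_t\lambda_{t-1}$, $\lambda_t=\lambda_t^{\mathrm{prior}}+k_t^2\Lambda_t^{\mathrm v}$ from Step~1 of the proof of \Cref{Th:Mobius}, which is exactly your identity $\rho_t\lambda_{t-1}=\lambda_t-\phi_t$. Your direct check from the M\"obius matrix and your positivity caveat ($\lambda_0>0$, so that the divisions are well defined) are small additions that the paper leaves implicit.
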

\noindent We prove \cref{cor:moment-form} in \cref{app:theorems-proofs}. Concretely, this gated update is the exact minimiser $\boldsymbol{\mu}_t = \arg\min_{\boldsymbol{\mu}} L_t$ of the
precision-weighted least-squares objective
\begin{equation}
L_t(\boldsymbol{\mu}) = \unc{\boldsymbol{\Lambda}_t^{\mathrm v}}\,\bigl\|\mathbf{v}_t - \mathbf{k}_t\odot\boldsymbol{\mu}\bigr\|^2
\;+\; \unc{\boldsymbol{\rho}_t\odot\boldsymbol{\lambda}_{t-1}}\,\bigl\|\boldsymbol{\mu} - \bar{\mathbf{a}}_t\odot\boldsymbol{\mu}_{t-1}\bigr\|^2,
\label{eq:kla-ridge}
\end{equation}
balancing a fit term (weighted by the observation precision $\boldsymbol{\Lambda}_t^{\mathrm v}$)
against a proximal term anchoring $\boldsymbol{\mu}$ to the propagated mean
$\bar{\mathbf{a}}_t\odot\boldsymbol{\mu}_{t-1}$ with weight the prior precision
$\unc{\boldsymbol{\lambda}_t^{\mathrm{prior}} = \boldsymbol{\rho}_t\odot\boldsymbol{\lambda}_{t-1}}$
 - the history-dependent factor $\boldsymbol{\rho}_t$ of \cref{eq:lambda_gate_view}, i.e.\ the
prior-to-previous precision ratio. \Cref{tab:online-learning} places \our in this
template alongside representative delta-rule and gated mixers. \our's
nonlinearity is supplied entirely by the process noise $\bar{\mathbf{p}}_t$, which enters
only through the state-dependent factor $\boldsymbol{\rho}_t$ of the M\"obius precision
recursion (\Cref{Th:Mobius}). Fixing $\bar{\mathbf{p}}_t=\mathbf{0}$ makes $\boldsymbol{\rho}_t$
constant and linearises the recursion, collapsing \our to a fixed-forgetting linear
recurrence. We isolate the effect of $\bar{\mathbf{p}}_t$
in \cref{sec:ablations}.\looseness-1

\begin{table*}[t]
\centering
\caption{Local online objectives and state updates across linear-attention mixers. \fg{Orange} marks the forget gate (coefficient of $\mathbf{S}_{t-1}$), \wg{green} the write gate (coefficient of $\mathbf{k}_t\mathbf{v}_t^\top$), and \kl{blue} the quantities unique to KLA - the precision weights in the objective (the only \emph{precision-weighted} least-squares in the table, as marked by the underbrace) and the M\"obius recursion supplying $\bm{\Lambda}_t$. Subscript $t$ denotes per-token learned quantities; symbols without subscript denote global parameters. KLA differs structurally by deriving its gating from a coupled recursion - the matrix-M\"obius precision update of Theorem~1 (\cref{eq:kla-prec-mat}) - whose solution $\bm{\Lambda}_t$ supplies the precision-ratio gain visible in both forget and write terms.}
\label{tab:online-learning}
\scriptsize
\setlength{\tabcolsep}{4pt}
\renewcommand{\arraystretch}{1.55}
\resizebox{\textwidth}{!}{%
\begin{tabular}{@{}llll@{}}
\toprule
\textbf{Method} & \textbf{Local online objective $\mathcal{L}_t(\mathbf{S})$} & \textbf{State update}$^{*}$ & \textbf{Gates from} \\
\midrule
\multicolumn{4}{@{}l}{\textit{Correlation write into a (decayed) state}}\\[2pt]
Linear Attn.   & $\|\mathbf{S}-\mathbf{S}_{t-1}\|_F^2 - 2\langle \mathbf{S}^\top\mathbf{k}_t,\mathbf{v}_t\rangle$
               & $\mathbf{S}_t = \mathbf{S}_{t-1} + \mathbf{k}_t\mathbf{v}_t^\top$
               & - \\
Mamba-1 (S6)   & $\|\mathbf{S} - \bar{\mathbf{A}}_t\mathbf{S}_{t-1}\|_F^2 - 2\langle \mathbf{S}^\top\mathbf{k}_t,\mathbf{v}_t\rangle$
               & $\mathbf{S}_t = \fg{\bar{\mathbf{A}}_t}\mathbf{S}_{t-1} + \mathbf{k}_t\mathbf{v}_t^\top$
               & $\mathbf{A}, \bar{\mathbf{A}}_t$ \\
Mamba-2        & $\|\mathbf{S} - \alpha_t\mathbf{S}_{t-1}\|_F^2 - 2\langle \mathbf{S}^\top\mathbf{k}_t,\mathbf{v}_t\rangle$
               & $\mathbf{S}_t = \fg{\alpha_t}\mathbf{S}_{t-1} + \mathbf{k}_t\mathbf{v}_t^\top$
               & $a, \alpha_t$ \\
\midrule
\multicolumn{4}{@{}l}{\textit{Delta (residual) write}}\\[2pt]
DeltaNet~\citep{schlag2021linear}       & $\|\mathbf{S}-\mathbf{S}_{t-1}\|_F^2 - 2\langle \mathbf{S}^\top\mathbf{k}_t,\,\beta_t(\mathbf{v}_t - \mathbf{S}_{t-1}^\top\mathbf{k}_t)\rangle$
               & $\mathbf{S}_t = \fg{(\mathbf{I}-\beta_t\mathbf{k}_t\mathbf{k}_t^\top)}\mathbf{S}_{t-1} + \wg{\beta_t}\mathbf{k}_t\mathbf{v}_t^\top$
               & $\beta_t$ \\
Gated DeltaNet & $\|\mathbf{S}-\alpha_t\mathbf{S}_{t-1}\|_F^2 - 2\langle \mathbf{S}^\top\mathbf{k}_t,\,\beta_t(\mathbf{v}_t - (\alpha_t\mathbf{S}_{t-1})^\top\mathbf{k}_t)\rangle$
               & $\mathbf{S}_t = \fg{\alpha_t(\mathbf{I}-\beta_t\mathbf{k}_t\mathbf{k}_t^\top)}\mathbf{S}_{t-1} + \wg{\beta_t}\mathbf{k}_t\mathbf{v}_t^\top$
               & $\alpha_t, \beta_t$ \\
KDA~\citep{team2025kimi} & $\|\mathbf{S}-\mathbf{D}_t\mathbf{S}_{t-1}\|_F^2 - 2\langle \mathbf{S}^\top\mathbf{k}_t,\,\beta_t(\mathbf{v}_t - (\mathbf{D}_t\mathbf{S}_{t-1})^\top\mathbf{k}_t)\rangle$
               & $\mathbf{S}_t = \fg{(\mathbf{I}-\beta_t\mathbf{k}_t\mathbf{k}_t^\top)\mathbf{D}_t}\mathbf{S}_{t-1} + \wg{\beta_t}\mathbf{k}_t\mathbf{v}_t^\top$
               & $\mathbf{D}_t, \beta_t$ \\
GDN-2~\citep{hatamizadeh2026gated} & $\|\mathbf{S}-\mathbf{D}_t\mathbf{S}_{t-1}\|_F^2 - 2\langle \mathbf{S}^\top\mathbf{k}_t,\,\mathbf{w}_t\mathbf{v}_t - (\mathbf{D}_t\mathbf{S}_{t-1})^\top\mathbf{b}_t\mathbf{k}_t\rangle$
               & $\mathbf{S}_t = \fg{(\mathbf{I}-\mathbf{k}_t(\mathbf{b}_t\mathbf{k}_t)^\top)\mathbf{D}_t}\mathbf{S}_{t-1} + \mathbf{k}_t(\wg{\mathbf{w}_t}\mathbf{v}_t)^\top$
               & $\mathbf{D}_t, \mathbf{b}_t, \mathbf{w}_t$ \\
\midrule
\multicolumn{4}{@{}l}{\textit{Bayesian filtering write}}\\[2pt]
\textbf{KLA (Ours)}
               & $\underbrace{\kl{\bm{\Lambda}_t^{\mathrm{prior}}}\|\mathbf{S}-\bar{\mathbf{A}}\mathbf{S}_{t-1}\|_F^2 + \kl{\bm{\Lambda}_t^v}\|\mathbf{S}^\top\mathbf{k}_t - \mathbf{v}_t\|^2}_{\kl{\textit{uncertainty- (precision-) weighted least squares}}}$
               & $\mathbf{S}_t = \fg{\bar{\mathbf{A}}\Bigl(\mathbf{I} - \dfrac{\mathbf{k}_t^{2}(\bm{\Lambda}_t^v)^\top}{\bm{\Lambda}_t}\Bigr)}\mathbf{S}_{t-1} + \dfrac{\mathbf{k}_t(\wg{\bm{\Lambda}_t^v}\mathbf{v}_t)^\top}{\wg{\bm{\Lambda}_t}}$
               & \makecell[l]{$\bar{\mathbf{A}}, \bar{\mathbf{P}}, \bm{\Lambda}_t^v,$ \\ \kl{\textbf{M\"obius recursion ($\bm{\Lambda}_t$)}}} \\
\bottomrule
\end{tabular}%
}
\vspace{2pt}
\begin{minipage}{\textwidth}
\footnotesize $^{*}$\,For two matrices $\mathbf{A}, \mathbf{B}$ of equal shape we overload the fraction $\mathbf{A}/\mathbf{B}$ to denote \emph{element-wise} (Hadamard) division; products between vectors of equal dimension (e.g.\ $\mathbf{b}_t\mathbf{k}_t, \mathbf{w}_t\mathbf{v}_t, \bm{\Lambda}_t^v\mathbf{v}_t$) are element-wise. Vector outer products such as $\mathbf{k}_t\mathbf{v}_t^\top$ and $\mathbf{k}_t^{2}(\bm{\Lambda}_t^v)^\top$ form $d_k\!\times\!d_v$ matrices in the standard way.
\end{minipage}
\end{table*}

\subsection{\our as a Drop-in Probabilistic Sequence Mixer}
\label{sec:ka_layer}

In order to instantiate the above parallel Bayesian filter as a neural network layer, we parametrise the observation value and precision ($\mathbf{v}_t, \boldsymbol{\Lambda}^{\mathrm{v}}_t$) as well as the observation and readout operators ($\mathbf{k}_t, \mathbf{q}_t$) in terms of the layer input sequence $x_{1:T}$.
The observation variance is parametrised such that it is positive. For the continuous stochastic prior (\cref{sec:stochastic_dynamics}), we treat $\mathbf{a}_t$ and $\mathbf{p}_t$ as learnable, time-invariant parameters
$\mathbf{a}$ and $\mathbf{p}$, respectively, in contrast to Mamba where $\mathbf{a}_t$ is token-dependent and time-varying.

The block architecture follows a fused MLP architecture as shown in \cref{fig:block_arch}. Following the Mamba design, we employ standard scaffolding components paired with the \our sequence mixer: a 1D causal convolution with kernel size 4 and SiLU activation, along with residual connections and simple nonlinearities. See \cref{fig:gauss_mamba_arc} and Appendix~\ref{par:arch} for more architecture and implementation details. We implement efficient Triton~\cite{tillet2019triton} kernels for the Linear and M\"{o}bius parallel scan primitives; however, we do not fuse the whole block into a single kernel, which has been shown to be necessary in order to minimise memory movement and achieve peak performance in other SSMs, especially for the backward pass. Additionally, the KLA posterior mean recurrence can be unrolled into an equivalent matrix multiplication with a lower-triangular ``attention'' matrix; we visualise this structure in \cref{fig:attn-matrix-expansion} (Appendix \ref{app:empirical}).

\subsection{Training Loss}
We use two training losses. \textbf{(i) Posterior mean:} logits $\boldsymbol{\ell}_t = g_\theta(\mathbf{y}_t)$ are read out from the posterior mean $\mathbf{y}_t=\mathbf{q}_t\odot \boldsymbol{\mu}_t$ and trained with cross-entropy. \textbf{(ii) Log marginal likelihood:} we minimise the negative log marginal likelihood of the output token under the latent posterior (a Monte-Carlo objective), which provides an additional training signal for the posterior variances; the equations and details are given in Appendix~\ref{app:architecture}.


\section{Experiments}
We evaluate \textsc{KLA} as a \emph{drop-in sequence mixer} for language modelling.
Although \textsc{KLA} is derived from continuous-time Gaussian filtering, our goal is pragmatic: to test whether a Bayesian filtering view of selection/attention can retain the efficiency and accuracy of modern deterministic mixers, while yielding interpretable \emph{uncertainty-driven gating} behaviour, and, crucially, whether such a probabilistic primitive can be \emph{stacked} into deep networks and trained at scale.

\noindent\textbf{Experimental questions.}
We address four questions:
\begin{itemize}[leftmargin=1.5em,itemsep=2pt,topsep=2pt,parsep=0pt]
\item \textbf{(Q1) Compute and scaling:} how does the parallel implementation of \textsc{KLA} scale with sequence length, and how does it compare to a recurrent (time-stepped) Kalman implementation in wall-clock and memory?
\item \textbf{(Q2) Model quality as a single-block primitive:} as a drop-in mixer block, does \textsc{KLA} match or improve upon deterministic sub-quadratic alternatives (SSMs / linear attention) on controlled synthetic language-modelling tasks that probe core LM skills (long-range dependence, retrieval, and compression)?
\item \textbf{(Q3) State-tracking expressivity:} does the fractional-linear structure induced by information-form filtering translate into stronger state-tracking capabilities (e.g., permutation composition) than linear SSM/attention baselines, while retaining scan-parallelisability?
\item \textbf{(Q4) Stacking and scale:} can \textsc{KLA} be stacked into deep architectures and pretrained at the billion-token scale, matching attention and SSM/GLA backbones as a \emph{standalone} mixer and \emph{complementing} attention as a hybrid on zero-shot commonsense benchmarks?
\end{itemize}
Finally, we ablate architectural choices within the \our block to isolate what enables stable deep stacking of a probabilistic primitive.

\subsection{Baselines}
\label{sec:baselines}
We compare \our against softmax attention and four recent state-of-the-art sub-quadratic mixers spanning the main architectural paradigms for efficient sequence modelling.
\textbf{GPT}~\citep{vaswani2017attention}: a standard softmax-attention transformer, included as a strong $\mathcal{O}(T^2)$ reference in the pretraining setting.
(i) \textbf{Recurrent architectures.}
\textbf{mLSTM}~\citep{beck2024xlstm}: a modern LSTM variant featuring multiplicative interactions between hidden states and inputs, exponential gating, and improved training parallelisation.
(ii) \textbf{Structured state-space models.}
\textbf{Mamba}~\citep{gu2023mamba}: a widely used selective SSM with input-dependent state-transition matrices and gating, discussed in \cref{sec:background}.
\textbf{GDN} (Gated DeltaNet)~\citep{yang2024gated}: a delta-rule-based state-space model with selective gating for memory updates, combining principles from associative memory with modern SSM architectures and representing the current state of the art in SSM-based language modelling.
(iii) \textbf{Linear attention.}
\textbf{GLA} (Gated Linear Attention)~\citep{yang2023gated}: a linear-attention variant that unifies several modern linear RNN/SSM formulations, including Mamba, under a common framework.
Throughout, models are matched on parameter count and effective state size; full configurations are in Appendix~\ref{app:hyperparams}.\looseness-1

\subsection{Compute Scaling of Parallel \textsc{KLA}}
\label{sec:compute-scaling}

\begin{figure}[!t]
\centering
\includegraphics[width=\columnwidth]{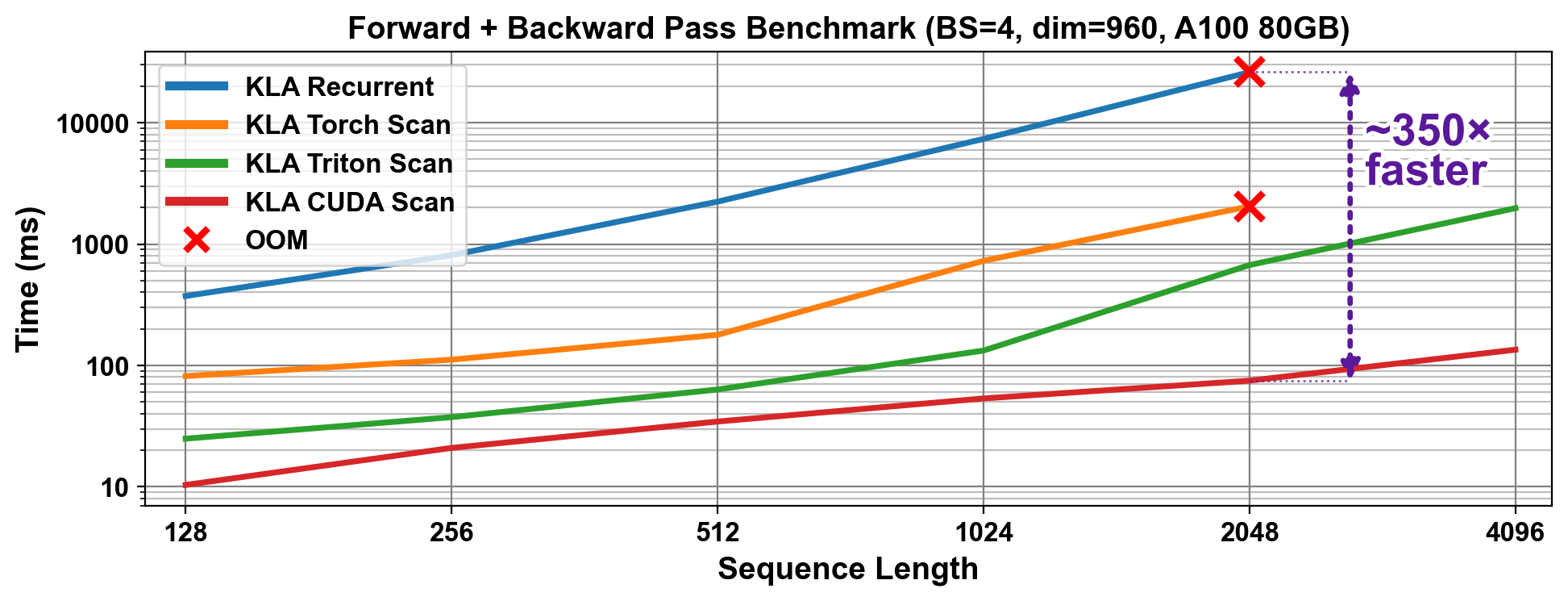}
\caption{\textbf{Compute scaling of parallel \textsc{KLA}.} Forward$+$backward (training) runtime vs.\ sequence length across the four \textsc{KLA} implementations: a naive recurrent (time-stepped) Kalman update, a Torch associative scan, a custom Triton associative-scan kernel, and a custom CUDA M\"{o}bius-scan kernel that avoids materialising expanded states in HBM. The scan-based variants match the $\mathcal{O}(\log T)$ profile of modern SSM/GLA mixers; the recurrent baseline OOMs at long sequence lengths. The forward-only (prompt-processing) curve and full setup are in Appendix~\ref{sec:fwd-scaling}.}
\label{fig:scaling}
\end{figure}

\begin{figure*}[!t]
\centering
\begin{subfigure}[c]{0.48\textwidth}
\centering
\setlength{\tabcolsep}{4pt}%
\renewcommand{\arraystretch}{1.2}%
\resizebox{\linewidth}{!}{%
\begin{tabular}{@{}lcccccc@{}}
\toprule
\textbf{Algorithm} & \makecell[c]{\textbf{Compre-}\\\textbf{ssion}} & \makecell[c]{\textbf{Memori-}\\\textbf{zation}} & \makecell[c]{\textbf{Context}\\\textbf{Recall}} & \makecell[c]{\textbf{Noisy}\\\textbf{Recall}} & \makecell[c]{\textbf{Fuzzy}\\\textbf{Recall}} & \makecell[c]{\textbf{Selective}\\\textbf{Copy}} \\
\midrule
GDN & \cellcolor{heatpurple!37}65.53 & \cellcolor{heatpurple!68}\best{99.93} & \cellcolor{heatpurple!70}\best{99.94} & \cellcolor{heatpurple!70}\best{99.95} & \cellcolor{heatpurple!53}37.04 & \cellcolor{heatpurple!59}90.30 \\
GLA & \cellcolor{heatpurple!15}49.45 & \cellcolor{heatpurple!70}\best{99.99} & \cellcolor{heatpurple!15}73.60 & \cellcolor{heatpurple!15}85.24 & \cellcolor{heatpurple!15}18.22 & \cellcolor{heatpurple!56}82.41 \\
Mamba & \cellcolor{heatpurple!55}78.35 & \cellcolor{heatpurple!68}\best{99.96} & \cellcolor{heatpurple!70}\best{99.92} & \cellcolor{heatpurple!70}\best{99.93} & \cellcolor{heatpurple!40}30.83 & \cellcolor{heatpurple!54}80.60 \\
mLSTM & \cellcolor{heatpurple!26}57.17 & \cellcolor{heatpurple!70}\best{99.98} & \cellcolor{heatpurple!70}\best{99.98} & \cellcolor{heatpurple!70}\best{99.99} & \cellcolor{heatpurple!30}25.43 & \cellcolor{heatpurple!15}36.61 \\
\textbf{KLA (Ours)} & \cellcolor{heatpurple!65}85.03 & \cellcolor{heatpurple!15}98.87 & \cellcolor{heatpurple!70}\best{99.95} & \cellcolor{heatpurple!70}\best{99.93} & \cellcolor{heatpurple!70}\best{45.70} & \cellcolor{heatpurple!59}90.67 \\
\textbf{KLA+ (Ours)} & \cellcolor{heatpurple!70}\best{88.87} & \cellcolor{heatpurple!68}\best{99.94} & \cellcolor{heatpurple!70}\best{99.94} & \cellcolor{heatpurple!70}\best{99.95} & \cellcolor{heatpurple!66}43.32 & \cellcolor{heatpurple!70}\best{91.45} \\
\bottomrule
\end{tabular}%
}
\caption{MAD-Lab accuracy (\%).}
\label{tab:accuracy-heatmap}
\end{subfigure}\hfill
\begin{subfigure}[c]{0.48\textwidth}
\centering
\includegraphics[width=\linewidth]{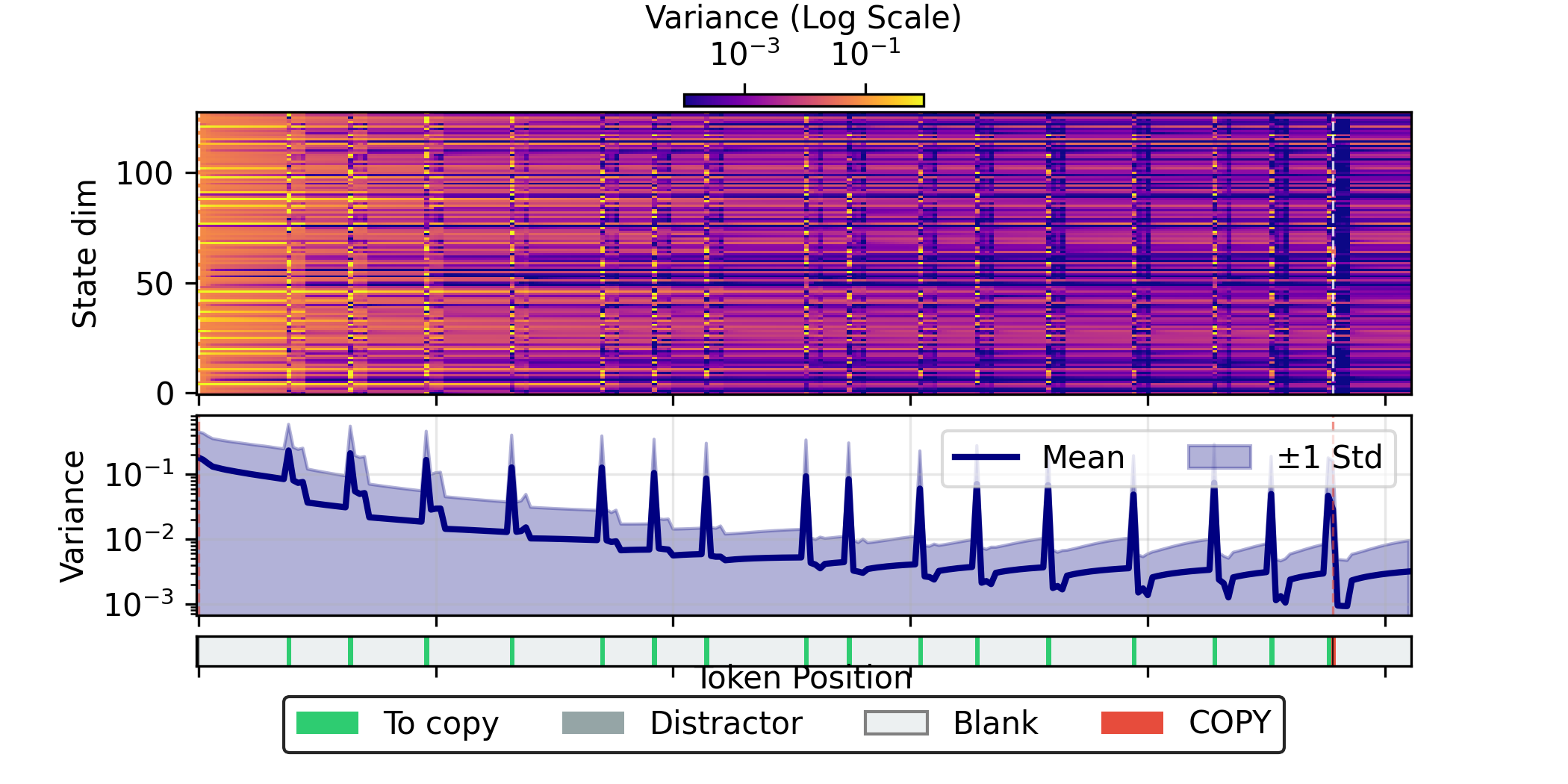}
\caption{Posterior variance on Selective Copy.}
\label{fig:posterior-var-sc}
\end{subfigure}
\caption{\textbf{\our as a single-block primitive: quality and uncertainty.} \textbf{(a)} Test accuracy (\%, $\uparrow$) on the MAD synthetic-LM suite for GDN, GLA, Mamba, mLSTM, \our, and probabilistic decoding \our+; all methods are matched on parameter count and effective state size, with darker shading $=$ higher accuracy. \textbf{(b)} Posterior variance on Selective Copy: variance decreases as evidence accumulates while sharp spikes align with copy-relevant tokens, an interpretable filtering/selection signal.}
\label{fig:madlab-quality}
\end{figure*}

Scaling probabilistic primitives to long contexts while retaining efficient training parallelism remains comparatively underexplored, and competitive throughput is won not by asymptotics alone but by mathematical reparameterisation and hardware-aware programming acting \emph{in sync}. To make this concrete, we benchmark four \textsc{KLA} implementations that progressively expose this interplay: (i) a naive recurrent (time-stepped) Kalman update; (ii) a mathematically parallel formulation via a Torch associative scan; (iii) a hardware-optimised custom Triton associative-scan kernel; and (iv) a custom CUDA kernel that implements the M\"{o}bius scan directly with a fused scan that \emph{avoids materialising the expanded (lifted) states in HBM}, keeping the recurrence in fast on-chip (SRAM) memory - the same kernel-fusion principle that underpins the selective-scan implementation of Mamba~\citep{gu2023mamba}. The scan-based implementations scale efficiently with sequence length compared to the recurrent counterpart, confirming that \emph{information-form filtering admits the same scan-parallel $\mathcal{O}(\log T)$ profile as modern SSM/GLA-style mixers}. Crucially, the Torch scan captures only the reparameterisation, whereas the CUDA kernel adds the memory-traffic savings on top: at $T{=}2048$ - the longest sequence the recurrent baseline fits before running out of memory - the CUDA kernel is roughly $350\times$ faster than the classic recurrent Kalman update, and this gap widens with $T$ (the recurrent update OOMs at $T{=}4096$). At pre-training scale such per-step gains compound across billions of tokens and thousands of optimisation steps into hours-to-days of saved GPU wall-clock, and correspondingly more efficient and sustainable use of expensive accelerator resources. The forward$+$backward (training) runtime curve is shown in \cref{fig:scaling}; the forward-only (prompt-processing) curve and full setup are in Appendix~\ref{sec:fwd-scaling}.

\subsection{\textsc{KLA} as a Single-Block Primitive on Synthetic Language Modelling}
\label{sec:synthetic-lm}

\paragraph{MAD synthetic LM suite.}
We evaluate \textsc{KLA} and the baselines as a single-block primitive on the MAD-Lab benchmark~\citep{poli2024mad}, a set of six synthetic token-manipulation tasks that test different LM skills (\cref{tab:skill-mapping-grouped}). We match parameter counts and effective state sizes and use a common encoder-decoder training protocol across all methods (dataset details in Appendix~\ref{app:datasets}; hyperparameters in Appendix~\ref{app:hyperparams}). Evaluating as a single block isolates the contribution of the \emph{update mechanism itself}, free of confounds from depth or memory budget.\looseness-1

\paragraph{Results.}
\cref{tab:accuracy-heatmap} shows that \textsc{KLA} is competitive across the suite and best-in-class precisely where \emph{selection under uncertainty} is required: it attains the top score on \textit{Fuzzy Recall} ($45.7$) and \textit{Selective Copy} ($90.7$), the latter a task popularised by Mamba~\citep{gu2023mamba}. Notably, whereas Mamba's selectivity is driven by \emph{token-dependent, time-varying} transition dynamics, \textsc{KLA} keeps its dynamics parameters ($a$, $\Delta$, $\mathbf{p}$) as time-invariant global parameters and instead derives selectivity from the \emph{uncertainty ratios} of the M\"{o}bius precision recurrence (\cref{eq:mobius_main}).
On \textit{Compression} - which measures how faithfully a model can reconstruct its context from a single final hidden-state vector - \textsc{KLA} ($85.0$) and \textsc{KLA}+ ($88.9$) lead all baselines, consistent with information-form filtering acting as a near least-squares-optimal context compressor.
The one mild exception is pure \textit{Memorization} ($98.9$ vs.\ ${\sim}99.9$ for the strongest baselines): we read this small gap as \textsc{KLA} favouring \emph{adaptive in-context filtering} over rote storage of a static key--value dictionary into weights, a sensible trade-off given its uncertainty-weighted inductive bias.
Finally, probabilistic decoding (\textsc{KLA}+, marginalisation under the learned posterior) further improves \textit{Selective Copy} and \textit{Compression} and closes the \textit{Memorization} gap, indicating that propagating the full predictive distribution - i.e.\ second-order (variance) statistics - through the loss in a principled manner yields measurable gains and is a promising avenue for future research.

\paragraph{Mechanistic behaviour: posterior uncertainty visualisations.}
To characterise how \textsc{KLA} filters information, we visualise the posterior variance on Selective Copy (\cref{fig:posterior-var-sc}). Rather than claiming calibrated uncertainty estimation, we use it as a diagnostic of the model's internal filtering dynamics. In Selective Copy, the posterior variance generally decreases over time as evidence accumulates, while sharp variance spikes align with copy-relevant positions, indicating timesteps where the model treats the input as especially informative for updating its belief.

\begin{figure*}[!t]
\centering
\begin{subfigure}[c]{0.42\textwidth}
  \centering
  \includegraphics[width=0.8\linewidth]{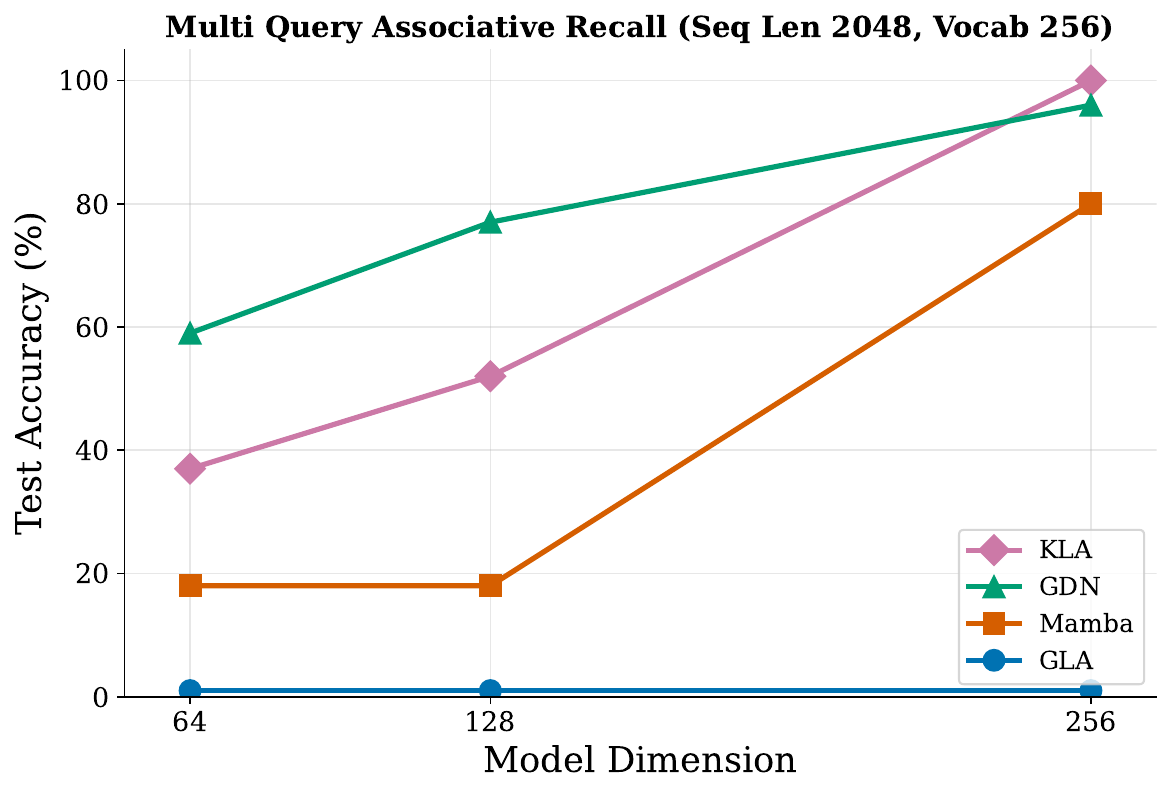}
  \caption{Long-context MQAR accuracy vs.\ dimension.}
  \label{fig:mqar-results}
\end{subfigure}\hspace{0.04\textwidth}%
\begin{subfigure}[c]{0.42\textwidth}
  \centering
  \includegraphics[width=\linewidth]{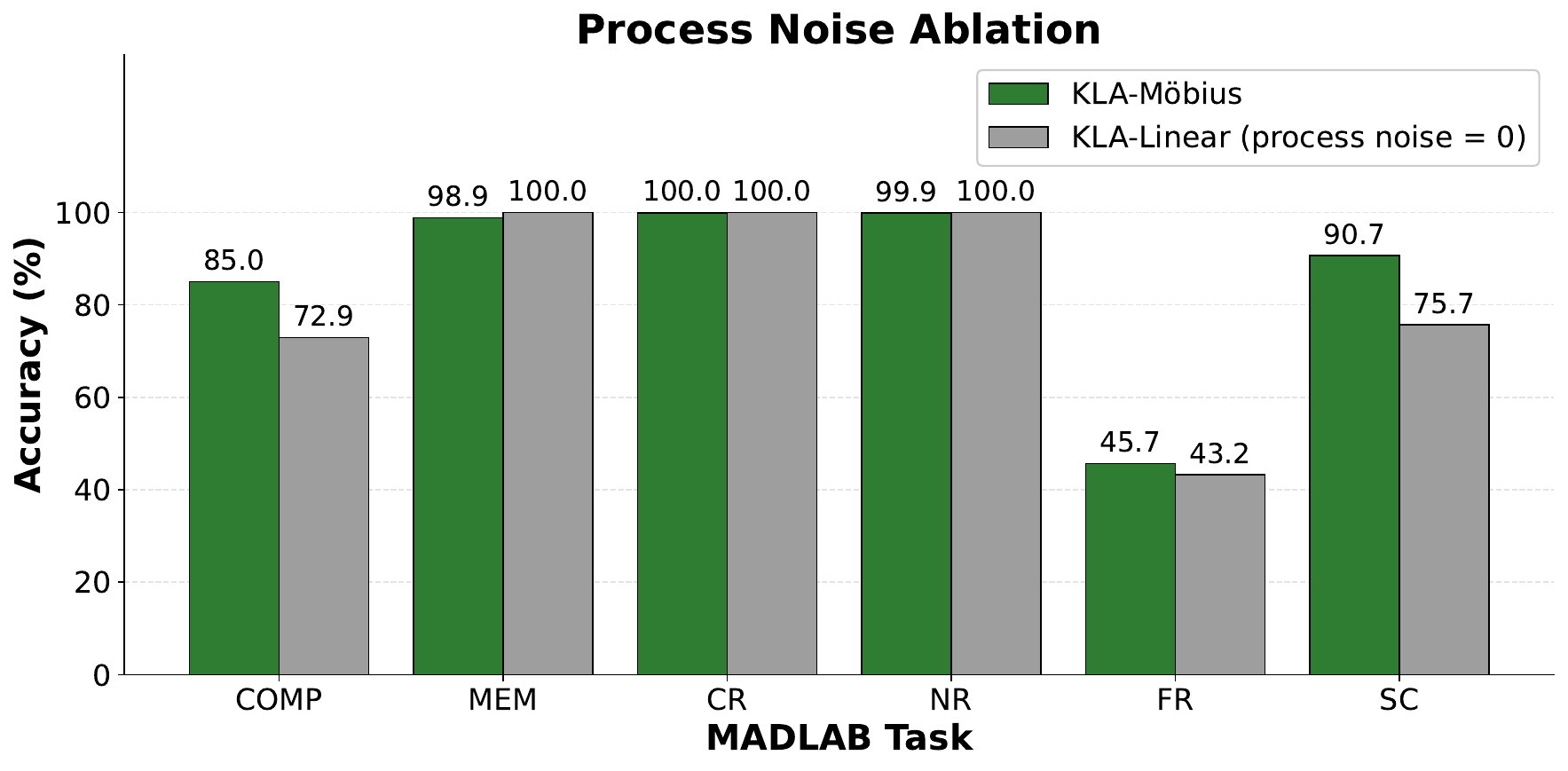}
  \caption{Process-noise ablation on MAD tasks.}
  \label{fig:process-noise-ablation}
\end{subfigure}
\caption{\textbf{Single-block analyses of \textsc{KLA}.} \textbf{(a)} Long-context MQAR ($T{=}2048$, $V{=}256$): \our reaches near-perfect accuracy ($>95\%$) at dimension $256$, outperforming GDN and Mamba, while GLA fails to learn the task in this challenging setting. \textbf{(b)} Process-noise ablation on MAD tasks: fixing $\mathbf{p}_t{=}0$ (deterministic dynamics) collapses the M\"{o}bius precision recurrence to a fixed-gate linear recurrence, degrading the selection-heavy tasks (Selective Copy, Compression) while leaving memorisation and exact recall unchanged (full per-task breakdown in \cref{tab:process-noise-ablation}, Appendix~\ref{sec:gauss-variants}).}
\label{fig:pnoise-mqar}
\end{figure*}

\paragraph{Long-context associative recall (MQAR).}
As a stress test of the single-block primitive under heavy memory load, we evaluate on Multi-Query Associative Recall (MQAR)~\citep{arora2023zoology}, which directly probes the capacity of a fixed-size recurrent state to store and retrieve many key-value associations.
Following the Zoology benchmark, we adopt a considerably harder configuration than typically studied - sequence length $T{=}2048$ with vocabulary size $V{=}256$ - which directly probes the \emph{storage-capacity} limits of fixed-size states.
\textsc{KLA} consistently outperforms Mamba across all dimensions and substantially outperforms GLA (which fails to learn the task in this extreme setting), reaching near-perfect accuracy ($>95\%$) at dimension $256$; GDN's delta rule is stronger at the lowest dimensions, where it also operates with a larger effective state.
We attribute \textsc{KLA}'s robustness to its M\"{o}bius precision updates (\cref{eq:lambda_gate_view}), which adaptively down-weight new observations once the state saturates --- an implicit, learned compression mechanism analogous to selective pattern storage in high-capacity associative memories~\citep{storkey1997hopfield}, without requiring explicit dimension scaling. The full dimension sweep is shown in \cref{fig:mqar-results}; the data configuration is given in Appendix~\ref{sec:longcontext-mqar-results}.\looseness-1

\subsection{State Tracking and Permutation-Group Expressivity}
\label{sec:a5-expressivity}

\paragraph{Why $A_5$ matters.}
A central open question is whether sub-quadratic sequence models can \emph{track state} beyond highly-parallel shortcut computation~\citep{merrill2023parallelism,merrill2024illusion}. Recent theory shows that common linear/diagonal SSMs (e.g.\ S4, Mamba), despite their recurrent form, share the bounded-depth expressivity limits of transformers, they are simulable by highly-parallel circuit classes (informally $\mathrm{TC}^0$) and cannot solve inherently sequential state-tracking whose difficulty grows with sequence length. A canonical example is the word problem for the alternating group $A_5$ (the smallest non-solvable subgroup of $S_5$), which is $\mathrm{NC}^1$-complete and serves as a minimal benchmark for \emph{hard state tracking}~\citep{merrill2024illusion}.\looseness-1

\paragraph{Result.}
Following \citet{merrill2024illusion}, \cref{fig:a5-min-layers}(a) shows that \textsc{KLA} solves the $A_5$ task at constant depth with only $1$-$2$ layers, whereas linear SSM/attention baselines require depth that grows with sequence length in this regime.
Among the parallel-trainable architectures we consider, \textsc{KLA} is the only one to solve the task at constant depth; concurrent work shows linear RNNs can also acquire state tracking through negative eigenvalues~\citep{grazzi2024unlocking}, whereas \textsc{KLA} achieves it through the nonlinearity of the M\"{o}bius precision recurrence.
This is consistent with \textsc{KLA}'s \emph{fractional-linear} (information-form) updates: each step stays scan-parallelisable, yet the state evolution is richer than a linear recurrence, placing \textsc{KLA} beyond $\mathrm{TC}^0$ while preserving the parallel-scan structure of efficient SSM/linear-attention models.\looseness-1

\subsection{Stacking \textsc{KLA} for Language Modelling at Scale}
\label{sec:pretraining}

The experiments above isolate \textsc{KLA} as a single ($1$-$2$ block) primitive. We now ask whether it can be \emph{stacked into deep networks and trained at scale}: we pretrain \textsc{KLA} and all baselines from scratch on FineWeb-Edu at \textbf{45M} ($1.8$B tokens) and \textbf{180M} ($10.9$B tokens) parameters, with matched optimiser and token budget, reporting zero-shot accuracy on eight commonsense benchmarks (full protocol in Appendix~\ref{app:hyperparams}). To our knowledge this is the first large-scale from-scratch pretraining of a stacked probabilistic-filtering primitive propagating belief states between blocks, and we regard stable training at this scale as a contribution in itself. Stacking also raises an interface question absent for deterministic mixers: \textsc{KLA} emits a full posterior at every step; as a first iteration we propagate only the \emph{mean} between blocks (a single posterior sample performs comparably), leaving full-posterior propagation to future work.

\begin{table*}[!t]
\centering
\caption{\textbf{Language modelling at two academic scales.} Zero-shot accuracy (\%) on eight commonsense benchmarks at 45M ($1.8$B tokens) and 180M ($10.9$B tokens). \best{Bold} = best within each size block, \second{underline} = second. Rows above the dashed line are standalone mixers; the row below it is the hybrid GPT$+$\textsc{KLA}, in which only the \emph{final} attention layer of GPT is replaced with a \textsc{KLA} block. Per-size comparisons against the analogous GPT$+$Mamba and GPT$+$GDN hybrids are in \cref{fig:a5-min-layers}(b).}
\label{tab:lm-main}
\small
\setlength{\tabcolsep}{4.5pt}
\renewcommand{\arraystretch}{1.15}
\begin{tabular}{@{}lccccccccc@{}}
\toprule
\textbf{Model} & \textbf{LAMB.} & \textbf{HellaS.} & \textbf{PIQA} & \textbf{Arc-E} & \textbf{Arc-C} & \textbf{WinoGr.} & \textbf{OBQA} & \textbf{BoolQ} & \textbf{Average} \\
 & acc\,$\uparrow$ & acc$_n$\,$\uparrow$ & acc\,$\uparrow$ & acc\,$\uparrow$ & acc$_n$\,$\uparrow$ & acc\,$\uparrow$ & acc\,$\uparrow$ & acc\,$\uparrow$ & acc\,$\uparrow$ \\
\midrule
GPT-45M             & \best{16.00} & 28.60 & 58.00 & 40.20 & 20.60 & 50.00 & \best{27.40} & 37.40 & 34.78 \\
Mamba-45M           & \second{9.20} & \best{29.20} & \best{59.80} & 37.20 & \best{25.40} & 48.80 & 25.40 & 61.60 & 37.08 \\
GDN-45M             & \best{16.00} & 28.20 & 58.40 & 39.00 & \second{23.40} & \best{52.60} & \second{27.20} & 45.20 & 36.25 \\
KLA-45M             & 7.60 & \second{28.80} & \second{58.60} & \second{40.60} & 23.20 & \second{52.20} & \best{27.40} & \best{63.00} & \second{37.68} \\
\hdashline\noalign{\vskip 1pt}
\textbf{GPT+KLA-45M} & \best{16.00} & 28.20 & \second{58.60} & \best{42.80} & 23.00 & 50.80 & 24.40 & \second{62.80} & \best{38.33} \\
\midrule
GPT-180M             & \best{28.80} & 32.20 & \second{63.40} & 51.00 & 27.00 & 49.20 & 31.20 & 54.80 & 42.20 \\
Mamba-180M           & 24.00 & 30.60 & 63.00 & 51.00 & 27.20 & \best{53.80} & \best{33.00} & 55.80 & 42.30 \\
GDN-180M             & 27.80 & 31.60 & 60.40 & 45.60 & 25.80 & 53.40 & 30.00 & \second{60.80} & 41.93 \\
KLA-180M             & 22.40 & \second{34.00} & \second{63.00} & 50.00 & 26.00 & \second{53.60} & \second{32.40} & \second{60.80} & \second{42.78} \\
\hdashline\noalign{\vskip 1pt}
\textbf{GPT+KLA-180M} & \best{30.20} & \best{34.40} & \best{64.20} & \best{55.80} & \best{29.00} & 51.80 & 31.00 & \best{63.20} & \best{44.95} \\
\bottomrule
\end{tabular}
\end{table*}

\paragraph{Standalone \textsc{KLA} is competitive.}
\cref{tab:lm-main} shows \textsc{KLA} is competitive with softmax attention, Mamba, and GDN at both scales (with per-task strengths on HellaSwag and PIQA). Its advantage is largest at shallow depth and narrows as baselines add layers (in our $24$-layer experiments), as predicted by our expressivity analysis (\cref{sec:a5-expressivity}). It underperforms on the \emph{clean} copying task LAMBADA, consistent with uncertainty-weighted updates favouring robust over verbatim recall (cf.\ \cref{sec:synthetic-lm}).\looseness-1

\paragraph{A single \textsc{KLA} layer improves a GPT.}
Unlike prior hybrids that interleave many layers, we replace only the \emph{final} layer of a softmax-attention GPT with an SSM-like block (\textsc{KLA}, Mamba, or GDN), motivated by evidence that uncertainty concentrates in later layers~\citep{agarwal2025bayesian}. At $180$M, GPT$+$\textsc{KLA} improves over pure GPT and over the analogous GPT$+$Mamba and GPT$+$GDN hybrids on average and most tasks (\cref{tab:lm-main}; \cref{fig:a5-min-layers}(b)), showing \textsc{KLA} is an effective drop-in \emph{complement} to attention, not merely a substitute.\looseness-1

\subsection{Ablations}
\label{sec:ablations}
Finally, we isolate the two modelling choices that matter most. Stable \emph{deep stacking} of a probabilistic primitive was possible \emph{only} because of the mean-reverting Ornstein-Uhlenbeck (OU) prior (\cref{sec:stochastic_dynamics}, \cref{fig:ou-hybrid-tall}): naive stacking was unstable and standard fixes (gradient clipping, lower learning rates) did not help at the root, whereas the OU prior keeps the latent dynamics bounded, letting \textsc{KLA} stack to $12$ and $24$ blocks stably with no gradient explosions (gradient clipping $3.0$ for all models; \cref{fig:ou-ablation} confirms improved stability and accuracy on Selective Copy, especially at depth) - to our knowledge the first stable backpropagation through stacked blocks that propagate belief states between layers and through time. A second ablation (\cref{fig:process-noise-ablation}, Appendix~\ref{sec:gauss-variants}) isolates the process-noise parameter $\mathbf{p}_t$ as the source of \textsc{KLA}'s nonlinear gating: $\mathbf{p}_t$ enters only through the M\"{o}bius precision recurrence (\Cref{Th:Mobius}), via the state-dependent factor $\boldsymbol{\rho}_t = \mathbf{1}\oslash(\bar{\mathbf{a}}_t^2 + \bar{\mathbf{p}}_t\odot\boldsymbol{\lambda}_{t-1})$, so setting $\mathbf{p}_t{=}0$ freezes $\boldsymbol{\rho}_t$ to a constant and collapses the recurrence to a fixed-gate linear update. This costs both expressivity (the gate no longer adapts to accumulated history) and stability: without injected process noise the posterior precision $\boldsymbol{\lambda}_t$ accumulates unbounded, making the filter overconfident, whereas a non-zero $\mathbf{p}_t$ caps it and maintains a fading memory. Removing process noise leaves memorisation and exact recall essentially unchanged but degrades the selection-heavy tasks (Selective Copy $-14.9$, Compression $-12.1$ points), a $4.7$-point average drop (\cref{tab:process-noise-ablation}).\looseness-1

\section{Conclusion}
In this paper, we propose a scalable probabilistic sequence-modelling primitive, the \our layer, which treats language modelling as a state estimation problem from noisy observations. We show that posterior inference in these models is tractable for very long sequences via parallel scans, and that the resulting gating mechanisms are strictly more expressive than those of modern deterministic linear SSMs and RNNs. Our experiments indicate the promise of \our as a scalable primitive within the language and probabilistic sequence-modelling toolbox.\looseness-1

\section{Limitations and Future Work}
\label{sec:limitations}
This paper is a \emph{first iteration} on a fundamental algorithmic family - exact Bayesian filters as parallelisable sequence-modelling primitives - and we deliberately start from the simplest possible assumptions: diagonal, linear time-invariant (LTI) dynamics with a linear-Gaussian observation model. Accordingly, the focus is to isolate what the Bayesian filtering update contributes in a minimal but impactful setting; most experiments use shallow configurations rather than deep, large-scale pretraining, and we do not claim state-of-the-art performance on web-scale corpora, very long-context regimes, or open-ended generative benchmarks.

This minimal starting point yields a clear capability profile. \textsc{KLA} is already directly usable where its strengths are structural: state tracking and permutation composition, associative recall (MQAR), and, as a single drop-in block, complementing attention in a hybrid with gains that carry over to zero-shot commonsense reasoning. Its current headroom lies in standalone long-context retrieval and raw next-token quality, which we attribute to the deliberately minimal, time-invariant dynamics of this first iteration; relaxing these assumptions is a natural direction for future members of the family.

We have also not explicitly exploited the posterior covariance for applications where uncertainty quantification may be critical (e.g., epistemic prompt uncertainty, hallucination detection, and out-of-distribution prompt detection), which are likely areas where \our could excel; we leave this for future work. Finally, because the underlying state-space formulation is inherently continuous, a promising direction is to move beyond the language domain to vision and multimodal foundation models over continuous modalities, where an explicit belief-state uncertainty may be especially natural.

\section*{Impact Statement}
This paper presents work whose goal is to advance the field of Machine Learning. There are many potential societal consequences of our work, none of which we believe must be specifically highlighted here. The \our layer provides principled uncertainty quantification, which could benefit applications requiring calibrated predictions, though, as with any language modelling primitive, deployment should consider potential misuse.

\section*{Acknowledgements}
We thank Henry Gouk, Nikolay Malkin, Thomas Lee, and Stephan Kostov for their valuable feedback on this draft at various stages. We also thank Wouter Boomsma, Frederikke Isa Marin, Jose Cano Reyes, and Jude Harris for the informative discussions over the course of this research. This project has received funding from the European Union’s Horizon Europe research and innovation programme under grant agreement No. 101120726. This work was funded by UK Research and Innovation (UKRI) under the UK government’s Horizon Europe funding Guarantee 10085198. We also gratefully acknowledge the support of the UK Engineering and Physical Sciences Research Council (EPSRC), under grant EP/W002876/1 and through PhD Studentships within the CDT in Machine Learning Systems, School of Informatics, University of Edinburgh (EP/Y03516X/1). The authors acknowledge the use of resources provided by the Isambard-AI National AI Research Resource (AIRR)~\citep{mcintosh2024isambard}. Isambard-AI is operated by the University of Bristol and is funded by the UK Government's Department for Science, Innovation and Technology (DSIT) via UK Research and Innovation; and the Science and Technology Facilities Council [ST/AIRR/I-A-I/1023].

\bibliography{iclr2026_conference}
\bibliographystyle{icml2026}

\newpage
\appendix
\onecolumn
\section*{Appendix}

\subsection*{Appendix outline}

\begin{itemize}
    \item \hyperref[app:architecture]{Appendix A: Architecture Details} (p.~\pageref{app:architecture})
    \begin{itemize}
        \item \hyperref[fig:gauss_mamba_arc]{Figure 5: Model Architecture Diagram} (p.~\pageref{fig:gauss_mamba_arc})
        \item \hyperref[alg:kla]{Algorithm 1: Kalman Linear Attention} (p.~\pageref{alg:kla})
        \item \hyperref[app:loss]{Training Loss}
    \end{itemize}

    \item \hyperref[app:notations]{Appendix B: Notation} (p.~\pageref{app:notations})

    \item \hyperref[app:ex-background]{Appendix C: Extended Background} (p.~\pageref{app:ex-background})
    \begin{itemize}
        \item \hyperref[sec:mobius]{Mobius / Fractional-Linear Transformations} (p.~\pageref{sec:mobius})
        \item \hyperref[sec:information-form]{Information Form of Gaussian Distributions} (p.~\pageref{sec:information-form})
        \item \hyperref[app:primer-nlp-kf]{Kalman and Information Filters} (p.~\pageref{app:primer-nlp-kf})
    \end{itemize}

    \item \hyperref[app:theorems-proofs]{Appendix D: Theorems and Proofs} (p.~\pageref{app:theorems-proofs})

    \item \hyperref[app:empirical]{Appendix E: Additional Empirical Results} (p.~\pageref{app:empirical})
    \begin{itemize}
        \item \hyperref[sec:gauss-variants]{Ablation On The Importance Of Process Noise Parameter} (p.~\pageref{sec:gauss-variants})
        \item \hyperref[sec:fwd-scaling]{Forward Pass Runtime Scaling} (p.~\pageref{sec:fwd-scaling})
        \item \hyperref[sec:attn-matrix]{Equivalent Attention Matrix} (p.~\pageref{sec:attn-matrix})
        \item \hyperref[sec:attention-viz]{Attention Map Visualisation} (p.~\pageref{sec:attention-viz})
    \end{itemize}

    \item \hyperref[app:datasets]{Appendix F: Datasets and Benchmarks Used} (p.~\pageref{app:datasets})
    \begin{itemize}
        \item \hyperref[sec:mad-descriptions]{MAD LM Suite} (p.~\pageref{sec:mad-descriptions})
        \item \hyperref[sec:longcontext-mqar]{Long-Context MQAR} (p.~\pageref{sec:longcontext-mqar})
    \end{itemize}

    \item \hyperref[app:hyperparams]{Appendix G: Hyperparameters Used} (p.~\pageref{app:hyperparams})
    \begin{itemize}
        \item \hyperref[sec:exp-protocol]{Experimental Protocol} (p.~\pageref{sec:exp-protocol})
        \item \hyperref[sec:training-hyperparams-sub]{Training Hyperparameters} (p.~\pageref{sec:training-hyperparams-sub})
        \item \hyperref[sec:madlab-hyperparams]{MAD-Lab Hyperparameters} (p.~\pageref{sec:madlab-hyperparams})
        \item \hyperref[sec:mqar-hyperparams]{MQAR Hyperparameters} (p.~\pageref{sec:mqar-hyperparams})
        \item \hyperref[sec:a5-hyperparams]{A5 State Tracking Hyperparameters} (p.~\pageref{sec:a5-hyperparams})
    \end{itemize}
\end{itemize}

\vspace{0.5cm}
\newpage

\section{Architecture Details}
\label{app:architecture}

\begin{figure}[h]
  \vspace{-8pt}
  \hspace{-.0cm}
  \includegraphics[width=1.0\linewidth]{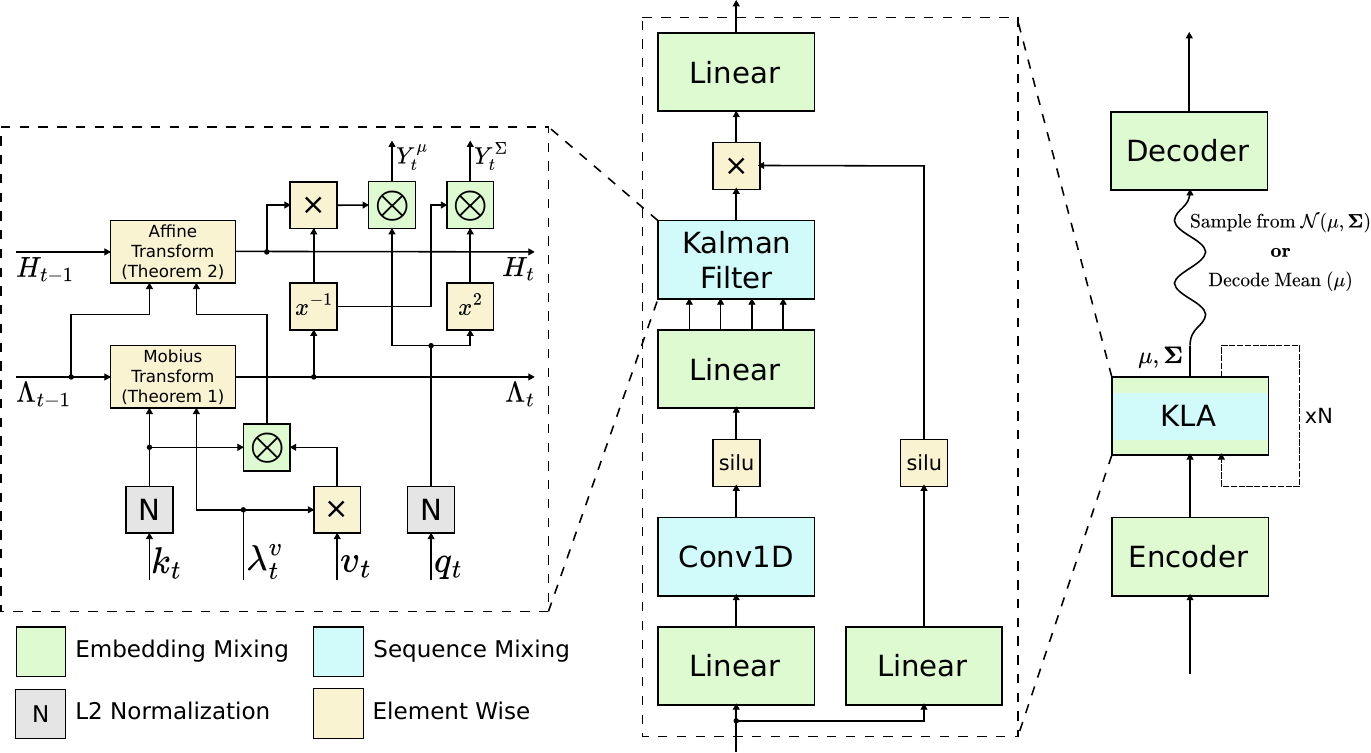}
  \caption{\textbf{Model architecture used in our experiments.}
  The KLA block is a drop-in replacement for Transformer or SSM blocks. We follow a similar block structure to Mamba, fusing the sequence mixing and gated MLP into a single block.
  We additionally employ QK-Norm and use an expansion factor of 1.
  Tokens are embedded and passed through a \emph{sequence mixer}, then decoded to logits.
  The sequence mixer is a drop-in primitive that can be instantiated as
  attention variants, or other SSM/Filter modules.
  Many mixers are paired with standard \emph{scaffolding} - e.g., 1D convolutions, residual/skip connections, and simple nonlinearities (elementwise multiplication or activations) - shown on the right.
  }
  \label{fig:gauss_mamba_arc}
  \vspace{-6pt}
\end{figure}

\begin{algorithm}[tb]
   \caption{Kalman Linear Attention}
   \label{alg:kla}
\begin{algorithmic}[1]
   \STATE {\bfseries Input:}  $\mathbf{X} : (\mathtt{B, T, 1, D})$\\
   \STATE {\bfseries Output} $ \mathbf{Y^\mu} : (\mathtt{B, T, 1, D}) $ \\
   \STATE {\bfseries Output (Optional)} $ \mathbf{Y^\Sigma} : (\mathtt{B, T, 1, D}) $
   \STATE $\mathbf{A, P, \Delta} : (\mathtt{N, D}) \gets$ \texttt{Parameters} \shape{A: Gating/decay, P: Process noise, $\Delta$: Discretisation step size}
   \STATE $\mathbf{K} : (\mathtt{B, T, N, 1}) \gets f_{\theta_{k}}(\mathbf{X})$ \shape{Observation operator}
   \STATE $\mathbf{Q} : (\mathtt{B, T, 1, N}) \gets f_{\theta_{q}}(\mathbf{X})$ \shape{Readout operator}
   \STATE $\mathbf{V, \Lambda^{\mathrm{v}}} : (\mathtt{B, T, 1, D}) \gets f_{\theta_{v,\lambda^{\mathrm{v}}}}(\mathbf{X})$ \shape{Observation mean and precision, $\lambda^{\mathrm{v}} \in \mathbb{R}^{+}$}
   \STATE $\mathbf{\bar A} : (\mathtt{N, D}) \gets \texttt{OU-Discretisation}(\mathbf{A, \Delta})$ \shape{See \cref{eq:discretisation}}
   \STATE $\mathbf{\bar P} : (\mathtt{N, D}) \gets \texttt{OU-Discretisation}(\mathbf{A, P, \Delta})$
   \STATE $\mathbf{\Lambda} : (\mathtt{B, T, N, D}) \gets \texttt{Mobius-Scan}(\mathbf{K, \Lambda^{\mathrm{v}}, \bar P, \bar A})$ \shape{matrix form: \cref{eq:kla-prec-mat}}
   \STATE $\mathbf{H} : (\mathtt{B, T, N, D}) \gets \texttt{Linear-Scan}(\mathbf{K, \Lambda^{\mathrm{v}}, V, \Lambda, \bar P, \bar A})$       \shape{information mean $\boldsymbol{\eta}$; matrix form: \cref{eq:kla-mean-mat}}
   \STATE $\mathbf{Y^\mu} : (\mathtt{B, T, 1, D}) \gets \mathbf{Q} (\mathbf{H} \mathbf{\Lambda}^{-1}) $
   \IF{\text{Decode Variance}}
   \STATE $\mathbf{Y^\Sigma} : (\mathtt{B, T, 1, D}) \gets \mathbf{Q}^2 \mathbf{\Lambda}^{-1} $
   \STATE {\bfseries Return:} $\mathbf{Y^\mu}, \mathbf{Y^\Sigma}$
   \ELSE
   \STATE {\bfseries Return:} $\mathbf{Y^\mu}$
   \ENDIF
\end{algorithmic}
\end{algorithm}

\paragraph{State-expanded matrix form of the scans.}
The \texttt{Mobius-Scan} and \texttt{Linear-Scan} steps are the state-expanded ($N{>}1$) matrix versions of the per-channel recursions \cref{eq:mobius_main,eq:mu-update}. Adopting the column-vector convention of \cref{tab:online-learning} ($\mathbf{k}_t\in\mathbb{R}^{N}$, $\mathbf{v}_t,\boldsymbol{\Lambda}_t^{\mathrm v}\in\mathbb{R}^{D}$, information-mean state $\mathbf{H}_t\!=\!\boldsymbol{\eta}_t$ and precision $\boldsymbol{\Lambda}_t$ in $\mathbb{R}^{N\times D}$; see footnote~\ref{fn:notation}), they read
\begin{align}
\boldsymbol{\Lambda}_t
&= \frac{(\mathbf{1}+\bar{\mathbf{P}}\odot\boldsymbol{\Phi}_t)\odot\boldsymbol{\Lambda}_{t-1} + \bar{\mathbf{A}}^2\odot\boldsymbol{\Phi}_t}{\bar{\mathbf{P}}\odot\boldsymbol{\Lambda}_{t-1} + \bar{\mathbf{A}}^2},
\qquad \boldsymbol{\Phi}_t = \mathbf{k}_t^2\,(\boldsymbol{\Lambda}_t^{\mathrm v})^{\!\top},
\label{eq:kla-prec-mat}\\[2pt]
\mathbf{H}_t
&= \mathbf{F}_t\odot\mathbf{H}_{t-1} + \mathbf{k}_t\,(\boldsymbol{\Lambda}_t^{\mathrm v}\odot\mathbf{v}_t)^{\!\top},
\qquad \mathbf{F}_t = \bar{\mathbf{A}}\oslash\bigl(\bar{\mathbf{A}}^2+\bar{\mathbf{P}}\odot\boldsymbol{\Lambda}_{t-1}\bigr),
\label{eq:kla-mean-mat}
\end{align}
where $\odot,\oslash$ act elementwise on $N\times D$ arrays, and the rank-one outer products $\mathbf{k}_t^2(\boldsymbol{\Lambda}_t^{\mathrm v})^{\!\top}$ and $\mathbf{k}_t(\boldsymbol{\Lambda}_t^{\mathrm v}\odot\mathbf{v}_t)^{\!\top}$ supply the per-slot evidence (mirroring the $\mathbf{k}_t\mathbf{v}_t^\top$ term of \cref{tab:online-learning}). Here $\mathbf{H}_t=\boldsymbol{\eta}_t$ is the \emph{information} (natural-parameter) mean; the \emph{moment-form} state of \cref{tab:online-learning} is $\mathbf{S}_t=\mathbf{H}_t\oslash\boldsymbol{\Lambda}_t$ (i.e.\ $\boldsymbol{\mu}_t=\boldsymbol{\eta}_t\oslash\boldsymbol{\lambda}_t$), and the posterior mean reads out as $\mathbf{y}_t=\mathbf{q}_t\mathbf{S}_t$. Setting $N{=}1$ recovers the scalar recursions \cref{eq:mobius_main,eq:mu-update}.

\paragraph{Architecture.}
\label{par:arch}
We use a consistent language modelling architecture across all models and baselines, as illustrated in \cref{fig:gauss_mamba_arc}. The architecture follows a standard pattern: input tokens are embedded, passed through a sequence mixer layer, and then decoded to output logits. The sequence mixer serves as a drop-in primitive that can be instantiated as \our, Mamba, GLA, or other SSM/attention variants.

Following the Mamba design, we employ standard scaffolding components paired with the sequence mixer: a 1D causal convolution with kernel size 4 and SiLU activation, along with residual connections and simple nonlinearities (elementwise multiplication). We use standard state expansion (see Section \ref{sec:background}); the state-expansion factor $N$ (sometimes called \texttt{d\_state} in SSM codebases) is set to 16 by default, following \citet{gu2023mamba}. More extensive exploration of scaffolding configurations optimally suited to the \our filtering mixer can be a direction for future work.

We implemented \our with PyTorch \citep{paszkePyTorchImperativeStyle2019} and used the Adam optimiser \citep{KingmaB14} for training the models.

\paragraph{Training loss.}
\label{app:loss}
We consider two training losses. Throughout, the decoder is applied to the query-conditioned readout
$\mathbf{y}_t=\mathbf{q}_t\odot \mathbf{z}_t$; when the readout noise is zero this is a deterministic linear map, so conditioning on $\mathbf{z}_t$ or $\mathbf{y}_t$ is equivalent up to this transformation.

\textbf{(i) Posterior mean:} logits $\boldsymbol{\ell}_t = g_\theta(\mathbf{y}_t)$ are obtained from the posterior-mean readout $\mathbf{y}_t=\mathbf{q}_t\odot \boldsymbol{\mu}_t$ and trained with cross-entropy.\looseness-1

\textbf{(ii) Log marginal likelihood:} minimises the negative log marginal likelihood of the output token $o_t$ under the latent posterior,
\begin{align}
-\log p_\theta(o_t \mid \mathbf{X})
&=
-\log \int \underbrace{p_\theta(o_t \mid \mathbf{y}_t)}_{\text{decoder}} \underbrace{p_\theta(\mathbf{z}_t \mid \mathbf{X})}_{\text{\our posterior}}\, d\mathbf{z}_t \nonumber\\
&\approx\;
-\log\!\left(\frac{1}{S}\sum_{s=1}^S p_\theta(o_t \mid \mathbf{y}_t^{(s)})\right),
\label{eq:nmll_integral}
\end{align}
with $\mathbf{z}_t^{(s)}\sim p_\theta(\mathbf{z}_t\mid \mathbf{X})$ and $\mathbf{y}_t^{(s)}=\mathbf{q}_t\odot \mathbf{z}_t^{(s)}$.
Equivalently, the estimator can be written as a $\logsumexp$:
\begin{align}
&-\log p_\theta(o_t \mid \mathbf{X})\notag \\
&\;\approx\;
-\log\!\left(\frac{1}{S}\sum_{s=1}^S p_\theta(o_t \mid \mathbf{y}_t^{(s)})\right) \notag \\
&\;=\;
-\logsumexp_{s=1}^S\!\Bigl(\log p_\theta(o_t \mid \mathbf{y}_t^{(s)})\Bigr) \;+\; \log S,
\label{eq:nmll_logsumexp}
\end{align}
which reduces to cross-entropy when $S{=}1$. We show this for a single time-step; it can be trivially extended across timesteps.

\newpage

\section{Notation}
\label{app:notations}
Throughout the paper we use the following convention:
\begin{tcolorbox}[colback=white,colframe=black!40,boxrule=0.4pt]
\begin{tabular}{@{}p{0.22\linewidth}p{0.73\linewidth}@{}}
\toprule
\textbf{Symbol} & \textbf{Meaning} \\
\midrule
$t$ & Time index \\
$z_t$ & Latent state at time $t$ \\
$v_t$ & Value (observed token/features) at time $t$ \\
$A_t$ & State transition matrix \\
$P_t$ & Process noise covariance (discretized: $\bar P_t$) \\
$C_t$ & Observation (likelihood) matrix \\
$\Lambda^{\mathrm{v}}_t$ & Value precision (inverse observation variance) \\
$\mathcal N(\mu,\Sigma)$ & Gaussian in \emph{moment} form \\
$\mathcal N(\eta,\Lambda)$ & Gaussian in \emph{canonical} form; $\;\eta=\Lambda\mu,\;\Lambda=\Sigma^{-1}$ \\
$\mu_t^{\mathrm{prior}},\Sigma_t^{\mathrm{prior}}$ & Predictive prior mean/cov at $t$ (canonical: $\eta_t^{\mathrm{prior}},\Lambda_t^{\mathrm{prior}}$) \\
$\mu_t,\Sigma_t$ & Posterior mean/cov at $t$ (canonical: $\eta_t,\Lambda_t$) \\
\midrule
\multicolumn{2}{@{}l@{}}{\textbf{Beliefs (linear--Gaussian case):}} \\
\multicolumn{2}{@{}l@{}}{$\displaystyle z_t \mid v_{1:t-1} \sim \mathcal N(\mu_t^{\mathrm{prior}},\,\Sigma_t^{\mathrm{prior}})\quad$ (predictive prior)} \\
\multicolumn{2}{@{}l@{}}{$\displaystyle z_t \mid v_{1:t} \sim \mathcal N(\mu_t,\,\Sigma_t)\quad$ (posterior)} \\
\bottomrule
\end{tabular}
\end{tcolorbox}

\noindent\textbf{Diagonal (per-dimension) specialisation.}
When matrices are diagonal, we identify them with their diagonal vectors and switch to bold lowercase:
$A_t\!\equiv\!\mathbf{a}_t$, $B_t\!\equiv\!\mathbf{b}_t$, $K_t\!\equiv\!\mathbf{k}_t$ (key), $P_t\!\equiv\!\mathbf{p}_t$ (process noise); value precision $\Lambda_t^{\mathrm{v}}$ is treated as a scalar.
All algebra becomes \emph{elementwise}: products use the Hadamard operator $\odot$, superscripts like $\mathbf{a}_t^{2}$ denote elementwise squares, and $(\cdot)^{-1}$ on lowercase vectors means elementwise inverse.
\newcommand{\had}{\odot}

\begin{center}
\small
\begin{tabular}{@{}lll@{}}
\toprule
\textbf{Full (matrix)} & \textbf{Diagonal notation (this paper)} & \textbf{Implication (elementwise)}\\
\midrule
$A_t$ & $\mathbf{a}_t$ & $(A_t \mathbf{x})=\mathbf{a}_t\had \mathbf{x}$;\quad $A_t^2 \Rightarrow \mathbf{a}_t^{2}$ \\
$K_t$ & $\mathbf{k}_t$ & $K_t \Sigma K_t^\top \Rightarrow \mathbf{k}_t^{2}\had \Sigma$ (key) \\
$P_t$ & $\mathbf{p}_t$ & Process noise cov.; $\bar P_t \Rightarrow \mathbf{\bar p}_t$ (discretized) \\
$\Lambda^{\mathrm{v}}_t$ & $\Lambda_t^{\mathrm{v}}$ & Scalar value precision (variance $= (\Lambda_t^{\mathrm{v}})^{-1}$) \\
\bottomrule
\end{tabular}
\end{center}

\section{Extended Background}
\label{app:ex-background}
\subsection{M\"{o}bius / Fractional-Linear Transformations.}
\label{sec:mobius}
A M\"{o}bius (a.k.a.\ fractional-linear) transformation is the map
\[
z \;\mapsto\; \frac{a z + b}{c z + d}, \qquad ad - bc \neq 0.
\]
It generalises the familiar affine form ($c=0$ gives $az+b$) by introducing an additional denominator term.
Each transform is represented (up to multiplication by a nonzero scalar) by a $2\times2$ matrix
$M=\begin{psmallmatrix}a & b \\ c & d\end{psmallmatrix}$ and acts on a scalar $z$ via the linear--fractional rule
\[
M(z) \;\coloneqq\; \frac{a z + b}{c z + d}.
\]
Composing multiple transforms amounts to multiplying their representing matrices, so associativity and invertibility (whenever $\det M\neq 0$) follow directly.
Geometrically, M\"{o}bius transforms can be viewed as compositions of translations, rotations/scalings, and a crucial nonlinear ingredient---\emph{inversion}---which affine transformations cannot capture.
The denominator also provides a built-in \emph{self-normalisation}: as the state grows, it is automatically rescaled, producing saturation and stability effects reminiscent of gating or normalisation layers, but achieved directly through the update itself.

\subsection{Information Form of Gaussian Distributions and Information Filters.}
\label{sec:information-form}
Gaussians belong to the exponential family~\citep{nielsen2009statistical}, so the posterior can be written in \emph{canonical} (information) form with precision $\Lambda_t \!=\! \Sigma_t^{-1}$ and natural parameter $\eta_t \!=\! \Lambda_t \mu_t$.
Filtering recursions in this parameterisation are known as \emph{information filters (IF)}~\citep{anderson2005optimal,khan2005matrix}.
They are algebraically equivalent to the standard (moment) Kalman updates; only the representation of the Gaussian belief differs.
Crucially, the canonical posterior updates are \emph{linear--additive} in $(\eta,\Lambda)$, a property we leverage for scalable sequence modelling in \cref{sec:method}.

\subsection{Control vs Observation View Of Token Processing}
\label{app:control-obs-view}

State-space models can be understood from two complementary perspectives (\cref{fig:control-obs-view}):

\paragraph{Observation view (traditional SSM/Kalman filter).}
In classical Bayesian filtering, input tokens are treated as \emph{noisy observations} of a hidden state.
The model assumes: (i) a latent state $z_t$ evolves over time via learned dynamics, and (ii) each token $v_t$ is a noisy measurement generated from this state.
The goal is to \emph{infer} the hidden state from the observed tokens---asking ``what underlying state could have produced these observations?''

\paragraph{Control view (Mamba/S6).}
In contrast, Mamba treats input tokens as \emph{control signals} that directly drive the state.
There is no observation noise or inference step; tokens deterministically update the hidden state via $z_t = A_t z_{t-1} + B_t u_t$.
The state simply accumulates information from inputs---asking ``how should I update my memory given this new input?''

\paragraph{Why this matters.}
The observation view naturally provides uncertainty quantification: since tokens are noisy measurements, the model maintains a posterior \emph{distribution} over states rather than a point estimate.
This is the foundation of \our---we treat language modelling as Bayesian filtering where tokens inform our belief about an underlying semantic state.

\begin{figure}[h]
\centering
\begin{subfigure}[t]{0.48\textwidth}
\centering
\resizebox{\linewidth}{!}{\tikzBGSS}
\caption{Observation view: tokens $v_t$ are noisy observations of latent state $z_t$. Blue arrows show inference direction.}
\end{subfigure}
\hfill
\begin{subfigure}[t]{0.48\textwidth}
\centering
\resizebox{\linewidth}{!}{\tikzMambaSSM}
\caption{Control view: tokens $u_t$ directly drive state updates by forward simulation rather than posterior inference.}
\end{subfigure}
\caption{Two views of state-space sequence models. \textbf{Left:} Bayesian/observation view - tokens are measurements, inference recovers hidden state. \textbf{Right:} Control view (Mamba) - tokens are inputs that deterministically update state.}
\label{fig:control-obs-view}
\end{figure}

\subsection{Kalman and Information Filters}
\label{app:primer-nlp-kf}

\subsubsection{Bayesian filtering as posterior inference.}

Bayesian filters (Kalman, information-filter variants) are \emph{inference schemes}---procedures that compute
posterior beliefs about latent states from noisy data.
The Kalman filter assumes a Gaussian linear state-space model:
the latent evolves via a (possibly time-varying) linear transition \(A_t\) with unmodelled variability captured by process noise \(P_t\),
and each observed token feature \(v_t\) is generated by an observation model \(C_t\) with observation noise covariance \(\Sigma^{\mathrm{obs}}_t\).
The filter then computes \(p(z_t\mid v_{1:t})\) sequentially by combining the dynamics prior with token evidence.

In this linear--Gaussian setting, inference has closed-form solutions:
in the \emph{moment parametrisation} \((\mu_t,\Sigma_t)\) the scheme is called the \textbf{Kalman filter},
while in the \emph{canonical parametrisation} \((\eta_t,\Lambda_t)\) it is called the \textbf{information filter}.
Both are algebraically equivalent, differing only in how Gaussian beliefs are represented.

\vspace{4pt}
\begin{table}[h]
\centering
\caption{Two equivalent views at time \(t\): moment (KF) vs.\ canonical (IF). Superscript ``prior'' denotes the predictive prior; unadorned symbols denote the posterior.}
\label{tab:kf-if-side-by-side}
\setlength{\tabcolsep}{6pt}
\renewcommand{\arraystretch}{1.35}
\small
\begin{tabular}{p{0.31\textwidth} p{0.31\textwidth} p{0.31\textwidth}}
\toprule
\textbf{Belief / Parameters}
& \textbf{Kalman Filter (moment form)}
& \textbf{Information Filter (canonical form)}\\
\midrule
\textbf{Prior / Prediction} \newline
$\displaystyle z_t \mid v_{1:t-1} \sim \mathcal N(\mu_t^{\mathrm{prior}},\,\Sigma_t^{\mathrm{prior}})\quad$
&
\(\mu_t^{\mathrm{prior}} = A_t \mu_{t-1}\) \newline
\(\Sigma_t^{\mathrm{prior}} = A_t \Sigma_{t-1} A_t^\top + P_t\)
&
\(\displaystyle \eta_t^{\mathrm{prior}} = \Lambda_t^{\mathrm{prior}} A_t \Lambda_{t-1}^{-1} \eta_{t-1}\)
\newline
\(\displaystyle \Lambda_t^{\mathrm{prior}} = \big(A_t \Lambda_{t-1}^{-1} A_t^\top + P_t\big)^{-1}\)

\\
\midrule
\textbf{Posterior / Update} \newline
$\displaystyle z_t \mid v_{1:t} \sim \mathcal N(\mu_t,\,\Sigma_t)\quad$
&
\(\mu_t = \mu_t^{\mathrm{prior}} + K_t\big(v_t - C_t \mu_t^{\mathrm{prior}}\big)\) \newline
\(\Sigma_t = (I - K_t C_t)\Sigma_t^{\mathrm{prior}} \)  \newline
\(K_t = \Sigma_t^{\mathrm{prior}} C_t^\top (C_t \Sigma_t^{\mathrm{prior}} C_t^\top + (\Lambda^{\mathrm{v}}_t)^{-1})^{-1}\)
&
\(\displaystyle \Lambda_t = \Lambda_t^{\mathrm{prior}} + C_t^\top \Lambda^{\mathrm{v}}_t C_t\) \newline
\(\displaystyle \eta_t = \eta_t^{\mathrm{prior}} + C_t^\top \Lambda^{\mathrm{v}}_t v_t\)
\\
\bottomrule
\end{tabular}
\end{table}

\paragraph{Intuition on noise.}
In our linear--Gaussian state-space model, the ``noise'' terms \(\varepsilon_t\) and \(\nu_t\) are not claims that language or the world is intrinsically random; they compactly represent \emph{uncertainty} and unmodeled effects (domain shift, annotation/embedding error, missing context).
Their covariances \(P_t\) and \(\Sigma^{\mathrm{obs}}_t\) calibrate trust in dynamics vs.\ data---larger \(P_t\) yields more adaptivity (faster forgetting), while larger \(\Sigma^{\mathrm{obs}}_t\) leans on the prior---producing principled ``gates'' via precision ratios.
In contrast, `noise' in diffusion models is an \emph{algorithmic corruption} deliberately added in a forward process to define a denoising/score-matching objective; its schedule (\(\beta_t\)) is chosen for optimisation/SNR shaping rather than for model measurement uncertainty.

\newpage

\clearpage

\section{Theorems and Proofs}
\label{app:theorems-proofs}
\ThMobius*
\begin{mdframed}
\begin{proof}[Proof of \Cref{Th:Mobius}]
Consider the 1D (or diagonal) linear--Gaussian state space model
\[
z_t = \bar{a}_t\,z_{t-1} + \varepsilon_t,\qquad \varepsilon_t \sim \mathcal N(0,\bar{p}_t),\qquad
v_t = k_t\,z_t + \nu_t,\qquad \nu_t \sim \mathcal N(0,(\Lambda_t^{\mathrm{v}})^{-1}),
\]
and define the posterior precision
$\lambda_{t} \coloneqq \mathrm{Var}(z_t\mid v_{1:t})^{-1}$.

\textbf{Step 1 (Information-form predict/update).}
\[
\text{(predict)}\quad
\lambda_{t}^{\mathrm{prior}} \;=\; \bigl(\bar{a}_t^2 \lambda_{t-1}^{-1} + \bar{p}_t\bigr)^{-1}
\;=\;
\frac{\lambda_{t-1}}{\bar{a}_t^2 + \bar{p}_t\,\lambda_{t-1}},
\qquad
\text{(update)}\quad
\lambda_t \;=\; \lambda_t^{\mathrm{prior}} + k_t^2 \Lambda_t^{\mathrm{v}}.
\]

\textbf{Step 2 (Single-step recursion and rearrangement).}
Eliminate the intermediate $\lambda_t^{\mathrm{prior}}$ to obtain
\[
\lambda_t
\;=\;
\frac{\lambda_{t-1}}{\bar{a}_t^2 + \bar{p}_t\,\lambda_{t-1}}
\;+\;
k_t^2 \Lambda_t^{\mathrm{v}}
\;=\;
\frac{(1 + \bar{p}_t\,k_t^2 \Lambda_t^{\mathrm{v}})\,\lambda_{t-1} + \bar{a}_t^2 k_t^2 \Lambda_t^{\mathrm{v}}}{\bar{p}_t\,\lambda_{t-1} + \bar{a}_t^2}.
\]
Hence $\lambda_t$ is a linear--fractional (M\"{o}bius) transform of $\lambda_{t-1}$,
\[
\lambda_t \;=\; M_t(\lambda_{t-1})
\;=\;
\frac{\alpha_t \lambda_{t-1} + \beta_t}{\gamma_t \lambda_{t-1} + \delta_t},
\qquad
M_t \;=\;
\begin{pmatrix}
\alpha_t & \beta_t \\ \gamma_t & \delta_t
\end{pmatrix}
\;=\;
\begin{pmatrix}
1 + \bar{p}_t\,\phi_t & \bar{a}_t^2 \phi_t \\
\bar{p}_t & \bar{a}_t^2
\end{pmatrix},
\]
where $\phi_t \coloneqq k_t^2 \Lambda_t^{\mathrm{v}}$. Thus, the precision recursion is a M\"{o}bius transformation with the stated matrix form.
The diagonal multivariate case follows by applying the same scalar derivation to each diagonal element.

\end{proof}
\end{mdframed}

\CorParallelScan*

\begin{mdframed}
\begin{proof}[Proof of \Cref{cor:parallel-scan}]
By \Cref{Th:Mobius}, each precision update is a M\"{o}bius transformation:
$\lambda_t = M_t(\lambda_{t-1})$. Since M\"{o}bius transformations compose via
matrix multiplication, the precision at time $t$ can be expressed as
\[
\lambda_t = \bigl(M_t \circ M_{t-1} \circ \cdots \circ M_1\bigr)(\lambda_0)
= M_{1:t}(\lambda_0),
\]
where the composition $M_{1:t} := \prod_{s=1}^t M_s$ is computed via
standard $2\times 2$ matrix multiplication. Since matrix multiplication is
associative, the product $M_{1:T}$ can be computed via a parallel prefix
scan~\citep{blelloch1990prefix} with $\mathcal{O}(T)$ work and $\mathcal{O}(\log T)$ depth on
$T$ processors. Once the prefix products $\{M_{1:t}\}_{t=1}^T$ are available,
each precision $\lambda_t = M_{1:t}(\lambda_0)$ is obtained in $\mathcal{O}(1)$
time, yielding the claimed complexity.
\end{proof}
\end{mdframed}

\begin{remark}[Practical Implementation]
In practice, the parallel prefix scan can be implemented efficiently on modern hardware
using frameworks such as JAX's \texttt{lax.associative\_scan} or PyTorch's parallel
primitives. The $\mathcal{O}(\log T)$ depth translates to logarithmic wall-clock time on
sufficiently parallel hardware, matching the computational efficiency of deterministic
SSMs like Mamba while maintaining probabilistic semantics.
\end{remark}

\newpage

\ThMeanAffine*
\begin{mdframed}
\begin{proof}[Proof of \Cref{thm:eta-affine}]
Write the information parameters as $\eta_t := \Lambda_t \mu_t$ and let
$\Lambda_t^{\mathrm{v}}$ denote the value precision. The information-form measurement update is
\[
\Lambda_t \;=\; \Lambda_t^{\mathrm{prior}} + k_t^2 \Lambda_t^{\mathrm{v}},
\qquad
\eta_t \;=\; \eta_t^{\mathrm{prior}} + k_t\,\Lambda_t^{\mathrm{v}}\,v_t,
\]
so the observation contribution is affine with coefficient $k_t\,\Lambda_t^{\mathrm{v}}$.

For the time-prediction of the information mean, use $\mu_t^{\mathrm{prior}} = \bar{a}_t\,\mu_{t-1}$ and
$\eta_t^{\mathrm{prior}} = \Lambda_t^{\mathrm{prior}} \mu_t^{\mathrm{prior}}$. Eliminating the means gives
\[
\eta_t^{\mathrm{prior}} \;=\; \Lambda_t^{\mathrm{prior}}\, \bar{a}_t\, \mu_{t-1}
\;=\; \Lambda_t^{\mathrm{prior}}\, \bar{a}_t\, \Lambda_{t-1}^{-1}\,\eta_{t-1}.
\]
Hence, given the (known) precision path $\{\Lambda_t, \Lambda_t^{\mathrm{prior}}\}$,
\[
\eta_t \;=\; \underbrace{\bigl(\Lambda_t^{\mathrm{prior}}\, \bar{a}_t\, \Lambda_{t-1}^{-1}\bigr)}_{\displaystyle f_t}\,\eta_{t-1}
\;+\; k_t\,\Lambda_t^{\mathrm{v}}\,v_t,
\qquad
\mu_t \;=\; \Lambda_t^{-1}\eta_t.
\]
Finally, substituting the information-form predict step
$\Lambda_t^{\mathrm{prior}} = \Lambda_{t-1}/(\bar{a}_t^2 + \bar{p}_t\,\Lambda_{t-1})$ (Step~1 of the proof of \Cref{Th:Mobius}) eliminates
$\Lambda_t^{\mathrm{prior}}$ from the forget gate,
\[
f_t \;=\; \Lambda_t^{\mathrm{prior}}\, \bar{a}_t\, \Lambda_{t-1}^{-1}
\;=\; \frac{\bar{a}_t}{\bar{a}_t^2 + \bar{p}_t\,\Lambda_{t-1}}
\;=\; \bigl(\bar{a}_t^2 + \bar{p}_t\,\Lambda_{t-1}\bigr)^{-1} \bar{a}_t,
\]
which is exactly the forget gate in the statement of \Cref{thm:eta-affine}. This proves the claimed affine form.
\end{proof}
\end{mdframed}

\CorMomentForm*

\begin{mdframed}
\begin{proof}[Proof of \Cref{cor:moment-form}]
By \Cref{thm:eta-affine}, $\eta_t = f_t\,\eta_{t-1} + k_t\,\Lambda_t^{\mathrm{v}}\,v_t$ with forget gate
$f_t = \lambda_t^{\mathrm{prior}}\,a_t\,\lambda_{t-1}^{-1}$. By Step~1 of the proof of \Cref{Th:Mobius}, the
predicted precision is
\[
\lambda_t^{\mathrm{prior}} \;=\; \frac{\lambda_{t-1}}{a_t^2 + p_t\,\lambda_{t-1}} \;=\; \rho_t\,\lambda_{t-1},
\qquad
\lambda_t \;=\; \lambda_t^{\mathrm{prior}} + k_t^2\,\Lambda_t^{\mathrm{v}}.
\]
Substituting $\eta_{t-1} = \lambda_{t-1}\,\mu_{t-1}$ and $\mu_t = \eta_t/\lambda_t$, and using
$f_t\,\lambda_{t-1} = \lambda_t^{\mathrm{prior}}\,a_t$,
\[
\mu_t \;=\; \frac{\lambda_t^{\mathrm{prior}}\,a_t\,\mu_{t-1} + k_t\,\Lambda_t^{\mathrm{v}}\,v_t}{\lambda_t}.
\]
Since $\lambda_t^{\mathrm{prior}} = \lambda_t - k_t^2\,\Lambda_t^{\mathrm{v}}$, we have
$\lambda_t^{\mathrm{prior}}/\lambda_t = 1 - k_t^2\,\Lambda_t^{\mathrm{v}}/\lambda_t$, hence
\[
\mu_t \;=\; a_t\Bigl(1 - \frac{k_t^2\,\Lambda_t^{\mathrm{v}}}{\lambda_t}\Bigr)\mu_{t-1} + \frac{k_t\,\Lambda_t^{\mathrm{v}}}{\lambda_t}\,v_t,
\]
the claimed gated recurrence. The diagonal multivariate case applies elementwise.
\end{proof}
\end{mdframed}

\begin{theorem}[Convolutional Form for Deterministic LTI Systems]
\label{thm:conv-form}
Under the conditions: (i) time-invariant dynamics, $\mathbf{a}_t \equiv \mathbf{a}$, $\mathbf{k}_t \equiv \mathbf{k}$; (ii) deterministic dynamics, process noise $\mathbf{p}_t = \mathbf{0}$; and (iii) zero initial conditions $\boldsymbol{\lambda}_0 = \boldsymbol{\eta}_0 = \mathbf{0}$, both the precision and information mean updates reduce to block-Toeplitz
convolutions computable in $\mathcal{O}(T \log T)$ time via FFT/NTT:

\paragraph{Precision updates:} $ \boldsymbol{\Lambda}_t = \sum_{s=0}^t \mathbf{a}^{-2(t-s)} \odot \mathbf{k}^2 \odot \boldsymbol{\Lambda}^{\mathrm{v}}_s $

\paragraph{Information mean updates:} $\boldsymbol{\eta}_t = \mathbf{k} \odot\,\sum_{s=0}^{t} \mathbf{a}^{-(t-s)} \odot\,\boldsymbol{\Lambda}_s^{\mathrm{v}} \odot\,\mathbf{v}_s,
\qquad
\boldsymbol{\mu}_t = \boldsymbol{\Lambda}_t^{-1} \odot \boldsymbol{\eta}_t $
\end{theorem}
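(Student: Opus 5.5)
The plan is to specialise the two recursions already established in \Cref{Th:Mobius} and \Cref{thm:eta-affine} to the setting of conditions (i)--(iii), observe that both become \emph{linear, constant-coefficient} first-order recurrences, and unroll them into causal convolutions. Everything is elementwise in the diagonal parameterisation, so it suffices to argue for a single scalar channel with decay $a$, key $k$ and value precisions $\Lambda^{\mathrm v}_s$.

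\textbf{Step 1: precision recursion.} Setting $\bar p_t=0$ in the M\"obius matrix of \Cref{Th:Mobius} gives $\gamma_t=0$, $\alpha_t=1$, $\beta_t=a^2\phi_t$ and $\delta_t=a^2$. The M\"obius map therefore degenerates to the affine map
\[
\lambda_t=a^{-2}\lambda_{t-1}+k^2\Lambda^{\mathrm v}_t .
\]
Equivalently, in the gate view \cref{eq:lambda_gate_view}, $\rho_t=a^{-2}$ is constant. By induction on $t$, using $\lambda_0=0$, we get $\lambda_t=\sum_s a^{-2(t-s)}k^2\Lambda^{\mathrm v}_s$. The $s=0$ term is handled by the convention that index $0$ carries the first token's evidence, or equivalently that it vanishes. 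I would state this indexing convention explicitly, since the statement sums from $s=0$ while $\lambda_0=0$.

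\textbf{Step 2: information mean.} With $\bar p_t=0$, the forget gate of \Cref{thm:eta-affine} becomes $f_t=a/a^2=a^{-1}$, which no longer depends on the precision path. Hence $\eta_t=a^{-1}\eta_{t-1}+k\Lambda^{\mathrm v}_t v_t$. Unrolling from $\eta_0=0$ gives
\[
\eta_t=k\sum_s a^{-(t-s)}\Lambda^{\mathrm v}_s v_s ,
\]
where time-invariance of $k$ lets us pull it out of the sum. Finally, $\mu_t=\eta_t/\lambda_t$ follows from the definition of $\eta_t$.

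\textbf{Step 3: complexity.} Each sum is of the form $y_t=\sum_{s\le t}h_{t-s}u_s$. For the precision, the kernel is $h_j=a^{-2j}$ and the input is $u_s=k^2\Lambda^{\mathrm v}_s$. For the information mean, the kernel is $h_j=a^{-j}$ and the input is $u_s=\Lambda^{\mathrm v}_s v_s$. Because the kernels do not depend on the inputs (this is exactly where $p=0$ and LTI are used), each sum is multiplication by a lower-triangular Toeplitz matrix. That is a truncated linear convolution, computable by zero-padding to length $2T$ and applying FFT (or NTT over a suitable ring) in $\mathcal O(T\log T)$. With state expansion the same argument applies independently per $(n,d)$ slot, which gives the block-Toeplitz structure.

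\textbf{Main obstacle.} The algebra is routine. The only delicate points are, first, the boundary and indexing convention at $s=0$ under zero initial conditions, and second, that the kernel $a^{-j}$ grows geometrically when $0<a<1$. The latter does not affect the exact identity or the operation count, but I would remark that it matters for floating-point FFT. If needed, one can rescale by $a^{t}$ (that is, convolve with $a^{j}$-weighted inputs) to keep the exact identity while improving conditioning.
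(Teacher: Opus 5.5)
Your proposal is correct and follows the paper's proof essentially step for step. Setting $\bar p=0$ collapses the M\"obius recursion to the affine map $\lambda_t=a^{-2}\lambda_{t-1}+k^2\Lambda^{\mathrm v}_t$ and the forget gate to the constant $a^{-1}$. Unrolling both from zero initial conditions then gives causal Toeplitz convolutions that are evaluated by FFT.

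Your explicit flag on the $s=0$ boundary term is more careful than the paper, which writes $a^{-2t}\lambda_0+k^2\sum_{s=0}^{t}a^{-2(t-s)}\Lambda^{\mathrm v}_s$ without comment. If $\lambda_0=0$ is the state before the first token, only the reading in which that term vanishes is consistent. So your two conventions are alternatives, not equivalent.
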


\begin{mdframed}
\begin{proof}[Proof of \Cref{thm:conv-form}]
When $p_t = 0$ (deterministic dynamics), the M\"{o}bius transformation from
\Cref{Th:Mobius} simplifies. Specifically, with zero process noise, the
precision recursion becomes $\lambda_t = a^{-2}\lambda_{t-1} + k^2\Lambda_t^{\mathrm{v}}$.
For time-invariant $a$ and $k$, this unrolls to:
\[
\lambda_t = a^{-2t}\lambda_0 + k^2\sum_{s=0}^t a^{-2(t-s)} \Lambda^{\mathrm{v}}_s
\]
which is a discrete convolution with kernel $\kappa[n] = k^2 a^{-2n}$.

Similarly, from \Cref{thm:eta-affine}, when $p_t=0$ we have $f_t = a^{-1}$
constant, so the information mean recursion $\eta_t = a^{-1}\eta_{t-1} + k\Lambda_t^{\mathrm{v}}v_t$
unrolls to:
\[
\eta_t = k\,\sum_{s=0}^{t} a^{-(t-s)}\,\Lambda_s^{\mathrm{v}}\,v_s
\]
which is also a convolution with kernel $h[n] = k\,a^{-n}$.

Both convolutions have Toeplitz structure and can be computed via FFT in
$\mathcal{O}(\log T)$ parallel time using the convolution theorem.
\end{proof}
\end{mdframed}

\begin{remark}[Practical Note]
While theoretically interesting, the convolutional form is primarily relevant for
perfectly deterministic systems ($\mathbf{p}=\mathbf{0}$). In practice, \our uses the more general
parallel scan formulation (\Cref{cor:parallel-scan}) which handles stochastic
dynamics ($\mathbf{p}_t > \mathbf{0}$) efficiently.
\end{remark}
\newpage
\CorMeanParallelScan*

\begin{mdframed}
\begin{proof}[Proof of \Cref{cor:mean-parallel-scan}]
From \Cref{thm:eta-affine}, the information mean update has the affine form:
\[
\eta_t = f_t \eta_{t-1} + k_t \Lambda_t^{\mathrm{v}} v_t
\]
where $f_t = \frac{a_t}{a_t^2 + p_t \lambda_{t-1}}$ (which simplifies to $a^{-1}$ when $p_t = 0$).

Define the binary associative operator $\oplus$ on pairs $(f, b)$ by:
\[
(f_2, b_2) \oplus (f_1, b_1) = (f_2 f_1, f_2 b_1 + b_2)
\]
This is the standard associative operator for affine transformations.

The composed transformation at time $t$ is:
\[
(f_{1:t}, b_{1:t}) = (f_t, k_t \Lambda_t^{\mathrm{v}} v_t) \oplus (f_{t-1}, k_{t-1} \Lambda_{t-1}^{\mathrm{v}} v_{t-1}) \oplus \cdots \oplus (f_1, k_1 \Lambda_1^{\mathrm{v}} v_1)
\]
which gives $\eta_t = f_{1:t} \eta_0 + b_{1:t}$.

Since $\oplus$ is associative, the sequence of compositions can be computed via
a parallel prefix scan~\citep{blelloch1990prefix} with $\mathcal{O}(T)$ work and $\mathcal{O}(\log T)$ depth.
Once all information means $\{\eta_t\}_{t=1}^T$ are computed, the posterior means are
obtained by normalising with the precision path from \Cref{cor:parallel-scan}:
$\mu_t = \Lambda_t^{-1}\eta_t$.
\end{proof}
\end{mdframed}

\vfill
\begin{center}
    --appendices continue on next page--
\end{center}

\clearpage
\section{Additional Empirical Results}
\label{app:empirical}

\subsection{Ablation on the Importance of Process Noise Parameter}
\label{sec:gauss-variants}

We conduct an ablation that isolates the process-noise parameter by fixing $\mathbf{p}_t = 0$ (deterministic dynamics) while leaving the rest of \our unchanged --- in particular its \emph{adaptive}, input-dependent observation operator $\mathbf{k}_t$. Setting $\mathbf{p}_t = 0$ freezes the state-dependent factor $\boldsymbol{\rho}_t$ in the M\"{o}bius precision recurrence to a constant and collapses it to a fixed-gate linear update; the recurrence is still a parallel prefix scan with $\mathcal{O}(T)$ work and $\mathcal{O}(\log T)$ depth, but its gating is no longer history-dependent. (The stronger convolutional form of \Cref{thm:conv-form}, computable via FFT, requires \emph{both} $\mathbf{p}_t = 0$ \emph{and} a time-invariant observation operator $\mathbf{k}_t \equiv \mathbf{k}$; with the adaptive $\mathbf{k}_t$ retained here the ablated system is linear but not time-invariant.) This ablation investigates whether the nonlinear gating supplied by process noise translates into practical benefits.

\cref{tab:process-noise-ablation} compares full \our (learnable, time-invariant process noise $\mathbf{p}$) against a deterministic variant with process noise fixed to zero ($\mathbf{p}_t{=}0$, but retaining observation variance $(\Lambda^{\mathrm{v}})^{-1}$), which collapses the M\"{o}bius precision recurrence to a fixed-gate linear update. In the single-block MAD-Lab baseline setting the memorisation and recall tasks are essentially saturated for every architecture and show little cross-method variance (\cref{tab:accuracy-heatmap}); the skills that actually discriminate between models are Compression, Fuzzy Recall, and Selective Copy. Turning process noise off degrades exactly these three hardest tasks, while leaving memorisation, context recall, and noisy recall at ${\sim}100\%$: Selective Copy and Compression each fall by more than $10$ points ($-14.9$ and $-12.1$, respectively) and Fuzzy Recall by $2.5$, for a $4.7$-point average drop.

\begin{table}[h]
\caption{\textbf{Process-noise ablation on the MAD-Lab suite (accuracy \%, $\uparrow$).}
Full \our (learnable, time-invariant process noise $\mathbf{p}$) versus a deterministic
variant with $\mathbf{p}_t{=}0$, which collapses the M\"{o}bius precision recurrence to a
fixed-gate linear update. With-noise baselines are the single-block
MAD results of \cref{tab:accuracy-heatmap}. Removing process noise leaves memorisation and
exact recall unchanged but degrades the selection-heavy tasks (Selective Copy, Compression).}
\label{tab:process-noise-ablation}
\centering
\small
\setlength{\tabcolsep}{4pt}
\renewcommand{\arraystretch}{1.15}
\begin{tabular}{lccccccc}
\toprule
\textbf{\our{} variant} & \textbf{Compr.} & \textbf{Mem.} & \textbf{CR} & \textbf{NR} & \textbf{Fuzzy} & \textbf{Sel.\ Copy} & \textbf{Avg.} \\
\midrule
Learnable $\mathbf{p}$ (full)      & 85.03 & 98.87 & 99.95 & 99.93 & 45.70 & 90.67 & 86.69 \\
$\mathbf{p}_t{=}0$ (deterministic) & 72.91 & 100.00 & 100.00 & 100.00 & 43.21 & 75.73 & 81.98 \\
\midrule
$\Delta$ (zero $-$ full)           & $-12.12$ & $+1.13$ & $+0.05$ & $+0.07$ & $-2.49$ & $-14.94$ & $-4.71$ \\
\bottomrule
\end{tabular}
\end{table}

These drops move \our from best-in-class to near the bottom of the field on the very skill it is designed for: on Selective Copy the deterministic variant ($75.7$) trails every baseline except mLSTM (\cref{tab:accuracy-heatmap}), erasing \our's selection-under-uncertainty advantage. The effect is more pronounced at the larger pretraining scale of \cref{sec:pretraining}, with longer contexts and multiple stacked \our blocks: there, fixing $\mathbf{p}_t{=}0$ frequently destabilises training and diverges to NaNs, as the precision recursion degenerates into a linear scan in which the posterior precision $\boldsymbol{\lambda}_t$ grows unbounded and the nonlinear, history-dependent gating (\cref{eq:lambda_gate_view}) is lost; a non-zero process noise caps the accumulated precision and restores a fading, well-conditioned memory. A theoretical investigation of the stability properties of the system, particularly regarding Riccati controllability and observability, and the role of $\mathbf{p}_t$ in system stabilisation, represents an interesting direction for future work.

\subsection{Long-Context Associative Recall (Full Sweep)}
\label{sec:longcontext-mqar-results}
Multi-Query Associative Recall (MQAR)~\citep{arora2023zoology} is particularly demanding for fixed-size recurrent architectures, as it directly tests their capacity to store and retrieve multiple key--value associations --- a fundamental bottleneck often limited by state dimensionality. Following the Zoology benchmark~\citep{arora2023zoology}, we evaluate on a considerably harder configuration than typically studied: sequence length $T=2048$ with vocabulary size $V=256$ (matching or exceeding model dimensions; see \cref{tab:longcontext-mqar-params}).

\cref{fig:mqar-results} reveals distinct scaling behaviours across architectures. \textsc{KLA} consistently outperforms Mamba across all dimensions and substantially outperforms GLA, which fails to learn the task under this extreme setting. At lower dimensions ($d=64,128$), GDN performs best; its delta-rule mechanism is specifically designed for strong associative recall at limited capacity~\citep{yang2024gated}, and it operates with a slightly higher effective state size (see \cref{tab:mqar-model-configs}). At $d=256$, \textsc{KLA} outperforms all baselines and reaches near-perfect accuracy ($>95\%$).

We hypothesise that uncertainty-weighted updates help mitigate state ``saturation'' under heavy key--value load. Unlike purely additive accumulators, \textsc{KLA}'s fractional-linear (M\"{o}bius) precision updates (\cref{eq:lambda_gate_view}) adaptively down-weight observations based on posterior uncertainty: when the latent state becomes saturated with many associations, high-confidence historical information naturally suppresses unreliable new observations. This provides an implicit, learned compression mechanism, analogous to selective pattern storage in high-capacity associative memories~\citep{storkey1997hopfield}, without requiring explicit dimension scaling.\looseness-1

\subsection{Runtime Scaling}
\label{sec:fwd-scaling}

\paragraph{Setup.}
We benchmark forward-only runtimes for three \textsc{KLA} implementations:
a recurrent (time-stepped) Kalman update, the built-in Torch associative scan, and a custom Triton associative scan kernel.
All measurements use Torch~2.9.1 on an NVIDIA A100 (80GB), hidden size $960$, and float32 precision.
The forward-only pass uses a batch size of $1$, simulating test-time prompt processing for a 182 million parameter model---critical for deploying sequence models in production settings.

\paragraph{Results.}
\cref{fig:scaling-fwd} shows that the scan-based implementations scale efficiently with sequence length compared to the recurrent counterparts. These results confirm that information-form filtering can be implemented with the same scan-parallel profile as modern SSM/GLA-style mixers, making \textsc{KLA} practical for long-context prompt processing.

\begin{figure}[h]
    \centering
    \includegraphics[width=0.5\textwidth]{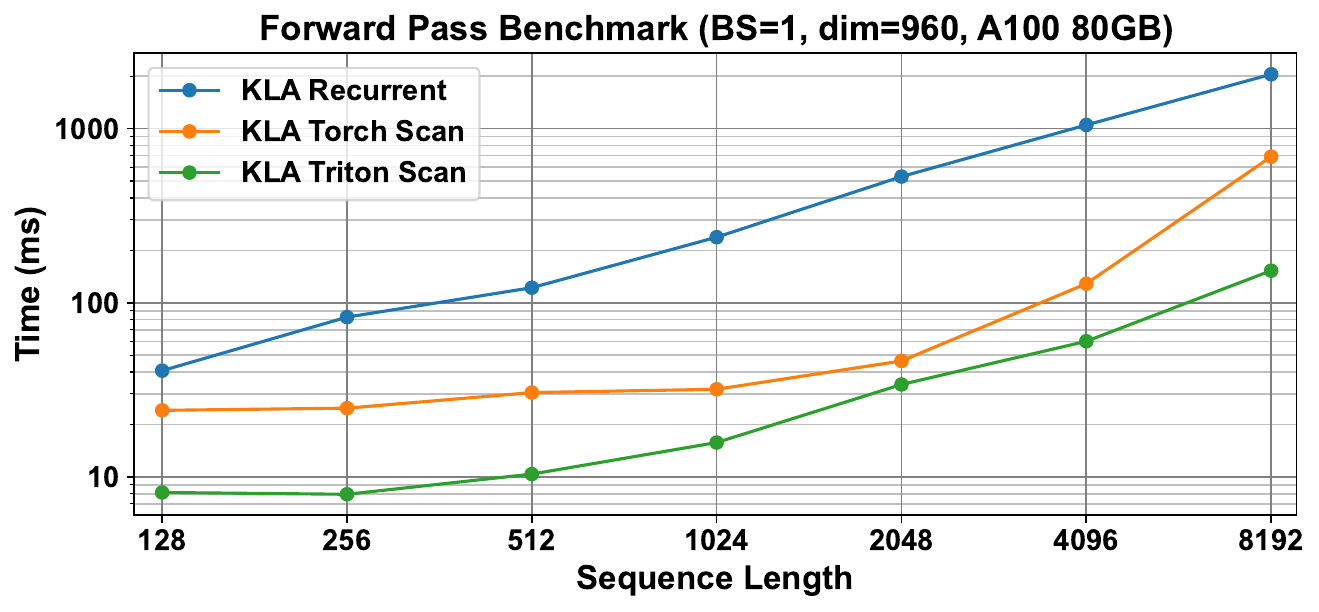}
    \caption{\textbf{Forward pass (prompt processing) runtime scaling.} Wall-clock runtime of \textsc{KLA} implementations across sequence lengths during forward-only pass. Torch Scan uses \texttt{torch.\_higher\_order\_ops.associative\_scan}; Triton Scan uses custom forward kernels.}
    \label{fig:scaling-fwd}
\end{figure}

\subsection{Equivalent Attention Matrix}
\label{sec:attn-matrix}

As shown in \Cref{thm:eta-affine}, the information-mean recurrence
$\eta_t = f_t\,\eta_{t-1} + k_t\,\Lambda_t^{\mathrm{v}}\,v_t$ (\Cref{eq:mu-update})
can be unrolled into a lower-triangular attention matrix~$\mathbf{W}$
whose entries are products of
{\color{forgetred}history-dependent forget gates~$f_s$},
keys~$k_j$, and
{\color{UncBlue}observation precisions~$\Lambda_j^{\mathrm{v}}$}.
Folding in the output readout query~$\mathbf{q}$ and posterior precision
scaling~${\color{UncBlue}\boldsymbol{\lambda}^{-1}}$ gives the full per-channel
sequence transformation
$\mathbf{M}_{\mathrm{seq}}
  = \mathrm{diag}(\mathbf{q}\odot{\color{UncBlue}\boldsymbol{\lambda}^{-1}})\,\mathbf{W}$,
so that
$\mathbf{y} = \mathbf{M}_{\mathrm{seq}}\,\mathbf{v}$ (plus init-state terms).
\cref{fig:attn-matrix-expansion} visualises this two-step structure.

\begin{figure}[h!]
  \centering
  \resizebox{\textwidth}{!}{\tikzBlockArchWithAttention}
  \caption{\textbf{Block architecture and equivalent attention form.}
  \emph{Left:} The block architecture follows the fused-MLP design of Mamba,
  with the Kalman Filter as a drop-in replacement for any SSM/Attention primitive.
  \emph{Right:} Unrolling the information-mean recurrence
  (\Cref{eq:mu-update}: $\eta_t = f_t\,\eta_{t-1} + k_t\,\Lambda_t^{\mathrm{v}}\,v_t$)
  yields a lower-triangular matrix~$\mathbf{W}$
  whose entries are products of
  {\color{forgetred}history-dependent forget gates $f_s$},
  keys $k_j$, and
  {\color{UncBlue}observation precisions $\Lambda_j^{\mathrm{v}}$}.
  Applying the output readout query~$q_i$ and
  {\color{UncBlue}posterior precision $\lambda_i^{-1}$} gives the full
  sequence transformation $\mathbf{M}_{\mathrm{seq}}
    =\mathrm{diag}(\mathbf{q}\odot{\color{UncBlue}\boldsymbol{\lambda}^{-1}})\,\mathbf{W}$%
  ---the precision terms being unique to KLA and absent in standard SSMs/GLA.
  }
  \label{fig:attn-matrix-expansion}
\end{figure}

\subsection{Kalman Attention Map Visualisation}
\label{sec:attention-viz}

Unrolling the information-mean recurrence~\eqref{eq:mu-update} yields the
lower-triangular matrix~$\mathbf{W}$ described in \cref{sec:attn-matrix};
folding in the readout query~$\mathbf{q}_t$ and precision
scaling~$\boldsymbol{\lambda}^{-1}$ gives the per-channel sequence transformation
$\mathbf{M}_{\mathrm{seq}}
  = \mathrm{diag}(\mathbf{q}\odot\boldsymbol{\lambda}^{-1})\,\mathbf{W}$
(cf.\ \cref{fig:attn-matrix-expansion}).
Because $\mathbf{M}_{\mathrm{seq}}$ is lower-triangular, it has the same causal
structure as a standard attention matrix; we therefore call it the
\emph{Kalman Attention Matrix}.

\Cref{fig:attn-sc,fig:attn-recall,fig:attn-mem} visualise the Kalman attention
patterns learned by \our on three MAD-Lab tasks, each showing four randomly chosen
channels.
The attention weight at position $(t,s)$ reflects how much the posterior mean at
step $t$ is influenced by the observation at step $s$, weighted by the learned
precision ratio.

\begin{figure}[h!]
  \centering
  \includegraphics[width=0.68\linewidth]{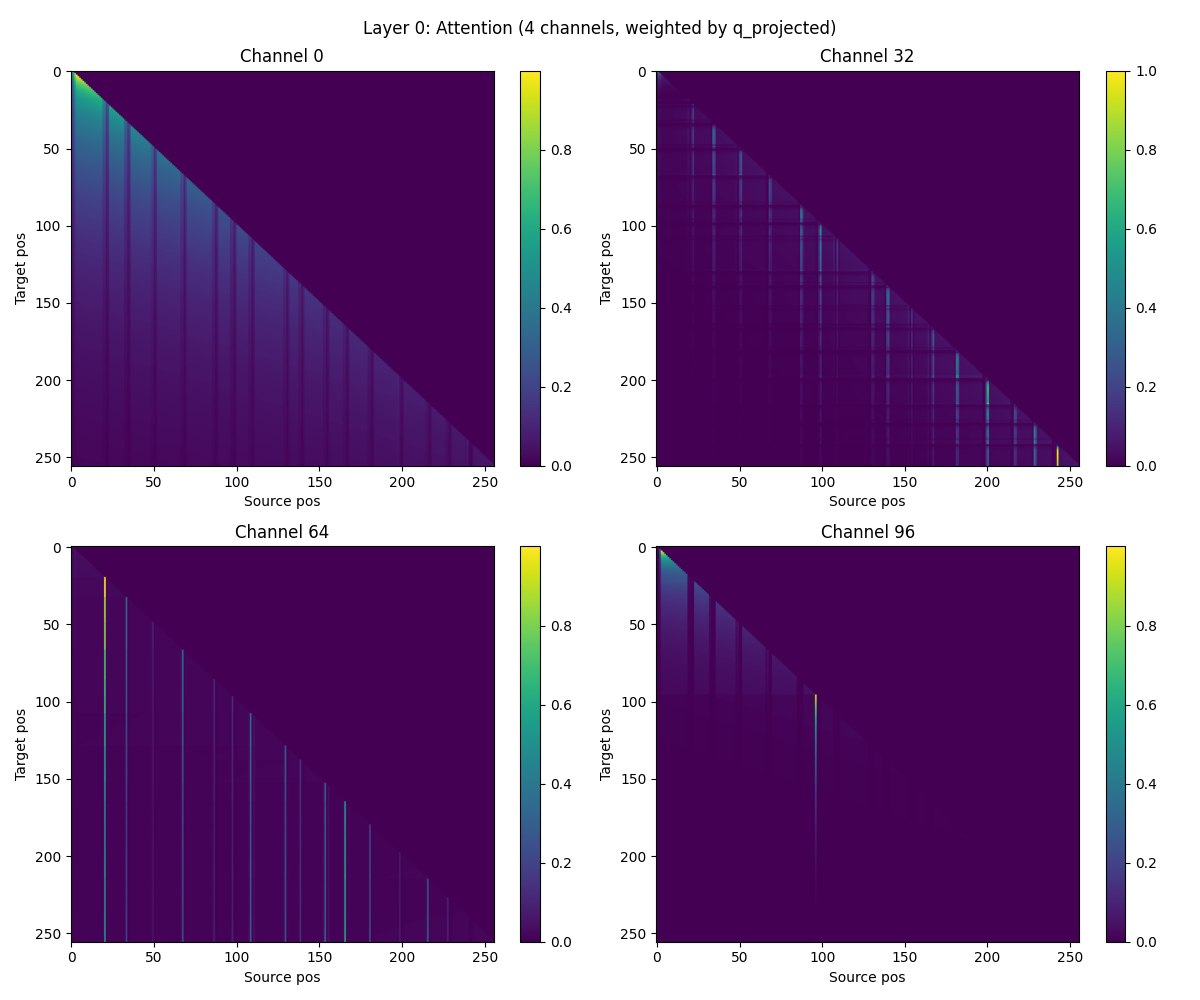}
  \caption{Attention maps for \textbf{Selective Copying} (sequence length 256). The model learns sparse, intermittent vertical bands, attending strongly to a small set of task-relevant positions (mostly copy positions). They either activate or suppress these relevant positions. Differences across channels indicate channel-level specialisation in which positions are retrieved.}
  \label{fig:attn-sc}
\end{figure}
\nopagebreak
\begin{figure}[h!]
  \centering
  \includegraphics[width=0.68\linewidth]{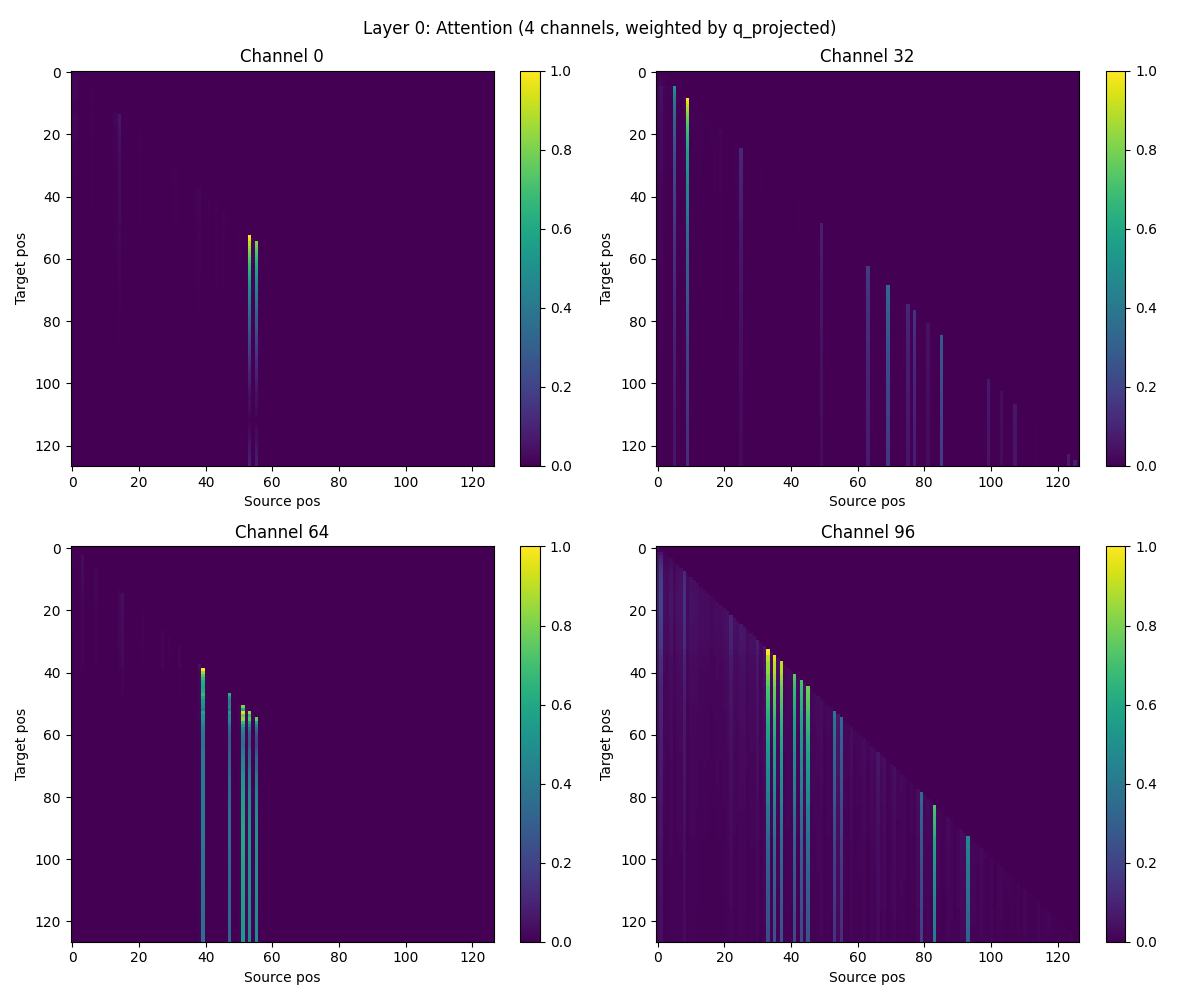}
  \caption{Attention maps for \textbf{In-Context Recall} (sequence length 128). The patterns are sparse and ``pointer-like'': within each channel, attention concentrates on a small set of source positions (vertical bands) across many target steps, with a mostly low-activation background. Channels show specialisation trends.}
  \label{fig:attn-recall}
\end{figure}
\nopagebreak
\begin{figure}[h!]
  \centering
  \includegraphics[width=0.68\linewidth]{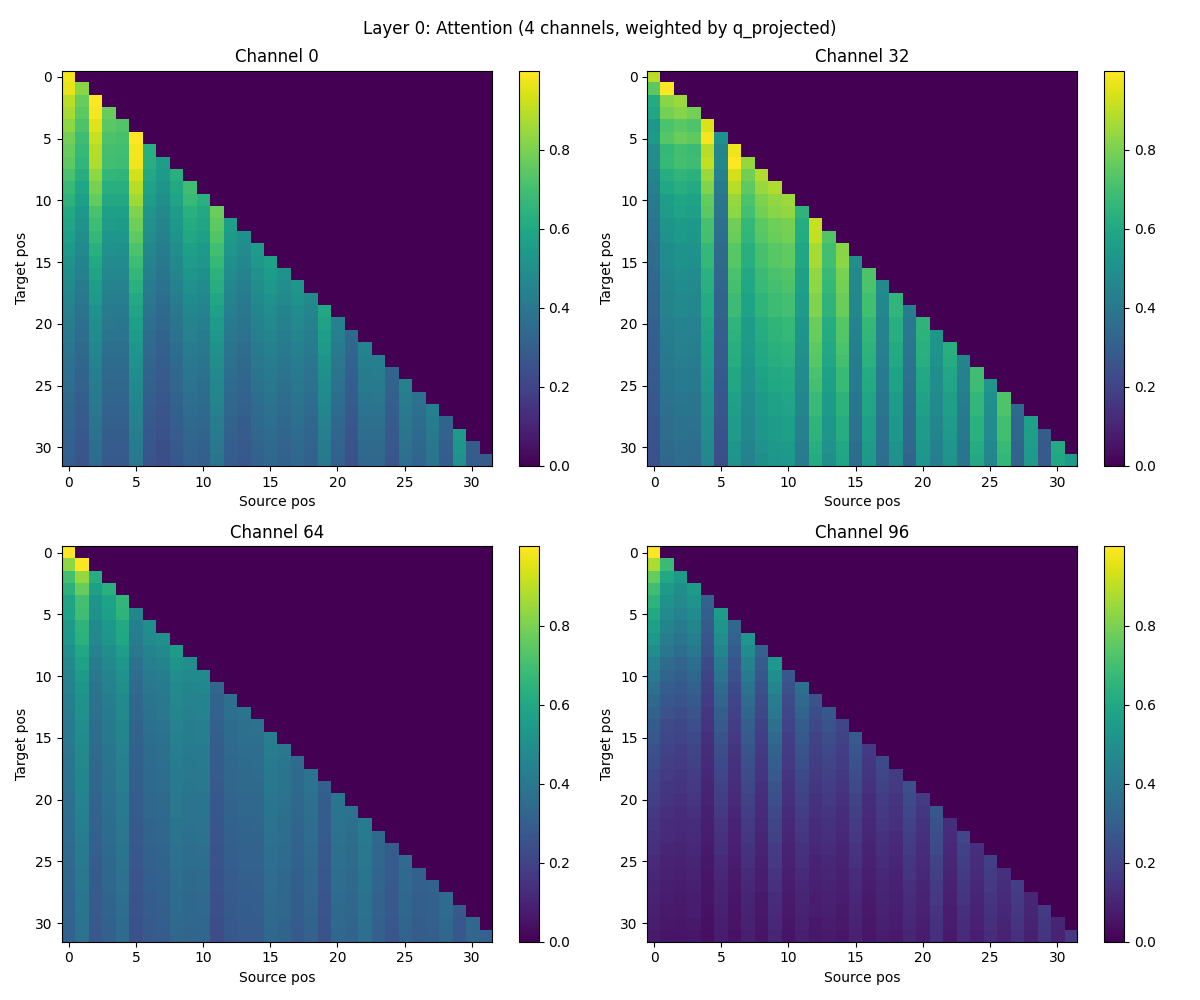}
  \caption{Attention maps for \textbf{Memorization} (sequence length 32). The smoother, more gradual decay reflects the task's requirement to maintain information uniformly across the sequence. All channels show similar patterns, indicating that memorization benefits from redundant, distributed storage rather than selective attention.}
  \label{fig:attn-mem}
\end{figure}

These visualisations demonstrate that \our learns task-appropriate attention patterns: selective retrieval for copying tasks, channel specialisation for recall, and uniform retention for memorization---all while maintaining the causal structure inherent to the Kalman filter formulation.

\clearpage

\section{Datasets and Benchmarks Used}
\label{app:datasets}
\subsection{MAD LM Suite}
To evaluate \our as a language-modelling primitive, we use a standardised suite of six discrete token manipulation tasks. We adopt the MAD framework~\citep{poli2024mad}, which consolidates synthetic tasks from prior work into a common specification of data generation, splits, and evaluation. Each task probes a distinct capability (associative retrieval, span compositionality, noise robustness, ordered copying, single-token information aggregation, and parametric memory), allowing us to disentangle model strengths and weaknesses across language skills (see \cref{tab:skill-mapping-grouped}).

\begin{table*}[t]
\centering
\small
\caption{Mapping from language skills/capabilities to the synthetic probe tasks. Green \gmark~indicates the primary probes/tasks for these skills.}
\label{tab:skill-mapping-grouped}
\setlength{\tabcolsep}{5pt}
\renewcommand{\arraystretch}{1.12}

\begin{tabularx}{\textwidth}{@{}Xcccccc@{}}
\toprule
\textbf{Language Skill / Capability} &
\makecell[c]{\textbf{In-context}\\\textbf{(MQAR)}} &
\makecell[c]{\textbf{Fuzzy}\\\textbf{Recall}} &
\makecell[c]{\textbf{Noisy}\\\textbf{Recall}} &
\makecell[c]{\textbf{Selective}\\\textbf{Copy}} &
\makecell[c]{\textbf{Com-}\\\textbf{pression}} &
\makecell[c]{\textbf{Memo-}\\\textbf{rization}} \\
\midrule

\makecell[l]{\textbf{Prompt-level associative retrieval \& in-context learning}\\\emph{(bindings, multi-query, remapping, basic composition)}}
& \gmark & \gmark & \gmark & \gmark & \rmark & \rmark \\

\makecell[l]{\textbf{Selective attention \& noise robustness}\\\emph{(noise filtering, long-context stability, calibration)}}
& \rmark & \rmark & \gmark & \gmark & \rmark & \rmark \\

\makecell[l]{\textbf{Ordered copying \& pointer-like control flow}\\\emph{(order-sensitive working memory)}}
& \rmark & \rmark & \rmark & \gmark & \rmark & \rmark \\

\makecell[l]{\textbf{Information aggregation/bottlenecking}\\\emph{(token concatenation for downstream decoding)}}
& \rmark & \rmark & \rmark & \rmark & \gmark & \rmark \\

\makecell[l]{\textbf{Parametric memory of stable facts}\\\emph{(weight-based knowledge)}}
& \rmark & \rmark & \rmark & \rmark & \rmark & \gmark \\

\bottomrule
\end{tabularx}

\end{table*}

We use the baseline configuration from the MAD framework for all six tasks. \cref{tab:mad-configs} summarises the key parameters for each task.

\begin{table}[h!]
\centering
\caption{Baseline configurations for the six MAD tasks used in our experiments.}
\label{tab:mad-configs}
\scriptsize
\setlength{\tabcolsep}{4pt}
\renewcommand{\arraystretch}{1.12}
\begin{tabular}{@{}lcccccc@{}}
\toprule
\textbf{Parameter} & \textbf{In-context} & \textbf{Fuzzy} & \textbf{Noisy} & \textbf{Selective} & \textbf{Compression} & \textbf{Memorization} \\
& \textbf{(MQAR)} & \textbf{Recall} & \textbf{Recall} & \textbf{Copy} & & \\
\midrule
Vocab size & 16 & 16 & 16 & 16 & 16 & 256 \\
Training seqs & 12,800 & 12,800 & 12,800 & 12,800 & 12,800 & 256 \\
Sequence length & 128 & 128 & 128 & 256 & 32 & 32 \\
Noise tokens (\%) & -- & -- & 20\% & -- & -- & -- \\
Key/value split & 8/8 & 8/8 & 8/8 & -- & -- & 128/128 \\
\bottomrule
\end{tabular}
\end{table}
\noindent\textbf{Additional task-specific settings:} Selective Copy (SC) uses \texttt{num\_copy}=16; In-Context Recall (CR) and Fuzzy Recall (FR) use \texttt{multi\_query}=True; FR uses motif sizes $k = v = 3$; Noisy Recall (NR) uses \texttt{noise\_vocab}=16 with noise fraction 0.2.
\subsubsection{Task Descriptions}
\label{sec:mad-descriptions}

\paragraph{In-context recall (MQAR).}
Sequences consist of key-value pairs with separate vocabularies. The model must predict values for keys that appeared earlier in the sequence. Key-value mappings are randomly shuffled between sequences, forcing the model to learn in-context rather than memorising fixed associations.

\paragraph{Fuzzy in-context recall.}
An extension of in-context recall where keys and values are represented by variable-length spans (1--3 tokens). This tests the model's ability to handle compositional keys and maintain associations across multi-token representations.

\paragraph{Noisy in-context recall.}
Similar to in-context recall, but with 20\% of tokens from a separate noise vocabulary randomly inserted. This evaluates the model's robustness to irrelevant information and selective attention capabilities.

\paragraph{Selective copying.}
Sequences contain random tokens interspersed with special [blank] and [insert] tokens. The model must copy non-special tokens to [insert] positions in order, while learning to ignore tokens near [blank] markers. This tests selective memorisation and order-preserving working memory.

\paragraph{Compression.}
Sequences of random tokens ending with a compression token [c]. The model must compress all sequence information into the representation at the compression token position, such that a fixed two-layer MLP can reconstruct any input token given the compressed representation plus a positional encoding.

\paragraph{Memorization.}
A fixed key-value dictionary is used across all sequences. In each sequence, keys appear with their values masked by [insert] tokens. The model must learn the fixed mappings from the training data, as values never appear in the input during training.

\subsubsection{Standard Training Setup in MAD Framework}
\label{sec:mad-training}

All tasks use the same architectural scaffolding and training procedure. Training hyperparameters are detailed in \cref{app:hyperparams}.

For most tasks, the training objective is standard next-token prediction via log likelihood (cross-entropy loss). The \textbf{Compression} task uses a different setup: the model's output at the compression token position is passed through a separate two-layer MLP decoder (dimensions [240, 120]) along with a positional encoding to reconstruct each token in the sequence. During training, the model learns to compress the entire sequence into a single representation that enables accurate reconstruction.

All models use a single-layer architecture with a model dimension $d_{\text{model}} = 128$ and an effective state size of 2048.

\subsection{Long-Context MQAR}
\label{sec:longcontext-mqar}

\cref{tab:longcontext-mqar-params} lists the data configuration for the long-context MQAR benchmark task~\citep{arora2023zoology}.

\begin{table}[h]
\caption{\textbf{Long-Context MQAR data parameters.}}
\label{tab:longcontext-mqar-params}
\centering
\small
\setlength{\tabcolsep}{6pt}
\renewcommand{\arraystretch}{1.12}
\begin{tabular}{lcccc}
\toprule
\multicolumn{5}{c}{\textbf{Long-Context Multi-Query Associative Recall (MQAR)}} \\
\midrule
\textbf{Setting} & \textbf{Seq. length} & \textbf{Vocab size} & \textbf{Training seqs} & \textbf{Eval seqs} \\
\midrule
CR (hard) & 2,048 & 256 & 12,800 & 1,280 \\
\bottomrule
\end{tabular}
\vskip -0.1in
\end{table}

\newpage
\section{Implementation Details}
\label{app:hyperparams}

\subsection{Experimental Protocol}
\label{sec:exp-protocol}

\paragraph{Statistical significance.}
For the synthetic and state-tracking experiments we report results averaged over 5 random seeds to ensure statistical reliability, with all model comparisons using the same random seeds to ensure fairness; for these tasks we report the mean performance across seeds. For the A5 state tracking task, we consider a task solved if the model achieves $\geq 90\%$ accuracy in at least one of the five seeds. The language-modelling pretraining experiments of \cref{sec:pretraining} are the exception: following standard pretraining practice, each model is trained once owing to its substantially higher compute cost.

\paragraph{Open-source code.}
The complete implementation, including all custom kernels and experimental code, is publicly available at \href{https://github.com/vaisakh-shaj/kalman-linear-attention}{github.com/vaisakh-shaj/\allowbreak kalman-linear-attention} to facilitate reproducibility and future research.

\subsection{Training Hyperparameters}
\label{sec:training-hyperparams-sub}

We use default settings unless specifically stated otherwise. \cref{tab:training-hyperparams} lists the training configuration used across all experiments.

\begin{table}[h]
\caption{\textbf{Training Hyperparameters} (fixed across all models and tasks)}
\label{tab:training-hyperparams}
\begin{center}
\begin{small}
\begin{tabular}{ll}
\toprule
\textbf{Hyperparameter} & \textbf{Value} \\
\midrule
Optimizer & AdamW \\
Learning rate & $1 \times 10^{-3}$ \\
Learning rate schedule & None \\
Maximum epochs & 750 \\
Early stopping patience & 70 epochs \\
Gradient clipping & 5.0 \\
Weight decay & 0.0 \\
Precision & 32-bit float \\
\bottomrule
\end{tabular}
\end{small}
\end{center}
\vskip -0.1in
\end{table}

\noindent\textbf{\our-specific settings:} We use encoder MLP hidden dimension = 120, decoder MLP dimensions = [240, 120], process noise scale $\mathbf{p} = 0.01$ (initial value), discretisation timestep range $\Delta_t \in [0.001, 0.1]$, causal convolution kernel size = 4, and Monte Carlo samples = 10 (used for \our{} probabilistic decoding; see \cref{tab:accuracy-heatmap}).

\subsection{MAD-Lab Hyperparameters}
\label{sec:madlab-hyperparams}

To ensure fair comparison, all models are configured with equal effective state sizes by adjusting architecture-specific expansion parameters. For each model dimension, we match the total number of parameters used for state representation across architectures. Note that GatedDeltaNet does not allow exact state-size matching due to constraints on the minimum number of heads and the requirement that expanded key and value dimensions be multiples of 16 or 32. Hence, we assign GatedDeltaNet the next-largest feasible value compared to the other baselines. \cref{tab:model-dims} shows the dimensional configuration for each architecture.

\begin{table}[h]
\caption{\textbf{Model Dimensions} (configured for equal state size $S = 2048$)}
\label{tab:model-dims}
\begin{center}
\begin{small}
\begin{tabular}{lccccc}
\toprule
\textbf{Model} & \textbf{\# Blocks} & $d_{\text{model}}$ & $d_{\text{state}}$ & \textbf{Other} & $S$ \\
\midrule
\our (Ours) & 1 & 128 & 8 & expand=1 & 2048 \\
Mamba & 1 & 128 & 16 & expand=1 & 2048 \\
GatedDeltaNet & 1 & 128 & - & $e_k$=0.5, $e_v$=1.0, $H$=4 & 2048 \\
GLA & 1 & 128 & - & $e_k$=0.5, $e_v$=1.0, $H$=4 & 2048 \\
mLSTM & 1 & 128 & - & $H$=16, proj\_factor=2.0 & 2048 \\
\bottomrule
\end{tabular}
\end{small}
\end{center}
\vskip -0.1in
\end{table}

\noindent\textbf{Batch size:} We use a batch size of 172 for all MAD-Lab experiments.

\subsection{MQAR Hyperparameters}
\label{sec:mqar-hyperparams}

For the MQAR (hard) experiments, we evaluate across three model dimensions with equal state sizes per dimension. All models use 2 layers (repeated blocks). As in the MAD-Lab setup, GatedDeltaNet is given the nearest feasible configuration (see above). \cref{tab:mqar-model-configs} lists the configuration for each architecture at each dimension.

\begin{table}[h]
\caption{\textbf{MQAR Model Configurations} (state-matched per dimension)}
\label{tab:mqar-model-configs}
\begin{center}
\begin{small}
\begin{tabular}{lccccc}
\toprule
\textbf{Model} & \textbf{\# Blocks} & $d_{\text{model}}$ & $d_{\text{state}}$ & \textbf{Other} & \textbf{State Size} \\
\midrule
\multicolumn{6}{c}{$d=64$ (batch size = 64)} \\
\midrule
\our & 2 & 64 & 16 & expand=1 & 2,048 \\
Mamba & 2 & 64 & 16 & expand=2 & 2,048 \\
GatedDeltaNet & 2 & 64 & - & $e_k$=0.75, $e_v$=1.5, $H$=2 & 2,304 \\
GLA & 2 & 64 & - & $e_k$=1.0, $e_v$=1.0, $H$=4 & 2,048 \\
\midrule
\multicolumn{6}{c}{$d=128$ (batch size = 32)} \\
\midrule
\our & 2 & 128 & 16 & expand=1 & 4,096 \\
Mamba & 2 & 128 & 16 & expand=2 & 4,096 \\
GatedDeltaNet & 2 & 128 & - & $e_k$=0.75, $e_v$=1.5, $H$=4 & 4,608 \\
GLA & 2 & 128 & - & $e_k$=0.5, $e_v$=1.0, $H$=4 & 4,096 \\
\midrule
\multicolumn{6}{c}{$d=256$ (batch size = 16)} \\
\midrule
\our & 2 & 256 & 16 & expand=1 & 8,192 \\
Mamba & 2 & 256 & 16 & expand=2 & 8,192 \\
GatedDeltaNet & 2 & 256 & - & $e_k$=0.75, $e_v$=1.5, $H$=8 & 9,216 \\
GLA & 2 & 256 & - & $e_k$=0.25, $e_v$=1.0, $H$=4 & 8,192 \\
\bottomrule
\end{tabular}
\end{small}
\end{center}
\vskip -0.1in
\end{table}

\subsection{A5 State Tracking Hyperparameters}
\label{sec:a5-hyperparams}

For the A5 state tracking experiments (\cref{sec:a5-expressivity}), we use $d_{\text{model}} = 1024$ and $d_{\text{state}} = 16$ for \our. We train for up to 500 epochs with early stopping (patience 50), and a learning rate of $3 \times 10^{-4}$, and we evaluate over 5 seeds. Values for the baselines are reported as in~\citet{merrill2024illusion}.

\subsection{Language-Modelling Pretraining and Evaluation Details}
\label{app:lm-details}

This appendix gives the pretraining and evaluation details for the
language-modelling experiments of \cref{sec:pretraining} (results in
\cref{tab:lm-main}). Our pretraining pipeline - data loading, the AdamW
parameter grouping, the $\mu$P-style learning-rate scaling, and the trapezoidal
learning-rate schedule - follows the \texttt{nanochat} recipe~\citep{nanochat},
which we adapt to our two academic compute budgets and to \our's state-space
parameters.

\paragraph{Compute.} All language-modelling models were trained on NVIDIA GH200
Grace Hopper Superchips on the Isambard-AI National AI Research Resource
(AIRR)~\citep{mcintosh2024isambard}.

\paragraph{Pretraining corpus and objective.} All models are pretrained on
FineWeb-Edu~\citep{penedo2024fineweb} with the standard next-token-prediction
(cross-entropy) objective in single precision (\texttt{fp32}); only the
token-embedding table is stored in \texttt{bfloat16} for memory, while all
computation - including the parallel scan - runs in \texttt{fp32}. We use the \texttt{nanochat} BPE
tokenizer~\citep{nanochat} with a vocabulary of $32{,}768$ tokens, trained on the
pretraining corpus. We train at two academic scales: a $45$M-parameter model on
$1.8$B tokens and a $180$M-parameter model on $10.9$B tokens, in both cases
following an approximately Chinchilla-style token budget~\citep{hoffmann2022training}.

\paragraph{Models and baselines.} Both the $45$M and $180$M models use a
$12$-layer backbone, and all architectures are parameter-matched at each scale.
We compare four standalone mixers - GPT (softmax attention), Mamba, Gated
DeltaNet (GDN), and \our - and the hybrid GPT$+$\our, in which \emph{only the
final attention layer} of the GPT backbone is replaced by a \our block
(\cref{sec:pretraining}). Baseline state/expansion settings are matched following
the conventions in \cref{sec:madlab-hyperparams}; we report the \our-specific
per-scale configuration in \cref{tab:lm-pretrain-hparams}.

\paragraph{Optimisation.} Following \texttt{nanochat}~\citep{nanochat}, all
parameters are optimised with AdamW ($\beta=(0.8,0.95)$, $\epsilon=10^{-10}$) and
gradient clipping $3.0$ (\cref{sec:ablations}); the Muon optimiser is disabled.
Parameters are partitioned into groups - token embeddings, the unembedding (LM
head), $2$-D hidden weights, $1$-D parameters, and the state-space parameters
$(\lambda,\Delta,\mathbf{q},A)$ - and the state-space group is trained at
$0.1\times$ the hidden learning rate with zero weight decay, reflecting its higher
sensitivity inside the recurrent scan. Base learning rates are tuned at a
reference width $d_{\text{model}}=768$ and token batch size $B=2^{19}$, and
transferred to other widths and batch sizes by the $\mu$P-style
factor $\sqrt{768/d_{\text{model}}}$~\citep{yang2021tuning} and the Adam
batch-scaling factor $\sqrt{B/2^{19}}$. The learning-rate schedule is constant
followed by a linear warmdown over the final $40\%$ of training, with no warmup.
The remaining settings are summarised in \cref{tab:lm-pretrain-hparams}.

\begin{table}[h]
\caption{\textbf{\our{} language-modelling pretraining configurations.} We report
\our-specific settings only; both scales use a $12$-layer \our{} backbone, and
baseline configurations are state-matched following \cref{sec:madlab-hyperparams}.
Following \texttt{nanochat}~\citep{nanochat}, all parameters are optimised with
AdamW ($\beta=(0.8,0.95)$, $\epsilon=10^{-10}$) and Muon is disabled. The
per-group AdamW learning rate is the base rate scaled internally by
$\sqrt{768/d_{\text{model}}}\cdot\sqrt{B/2^{19}}$ (with $B$ the token batch size);
the state-space parameters $(\lambda,\Delta,\mathbf{q},A)$ use an additional
$0.1\times$ multiplier, and token embeddings use a separate base rate of $0.3$
(scaled by the same factor).}
\label{tab:lm-pretrain-hparams}
\centering
\small
\setlength{\tabcolsep}{8pt}
\renewcommand{\arraystretch}{1.12}
\begin{tabular}{lcc}
\toprule
\textbf{\our{} hyperparameter} & \textbf{45M} & \textbf{180M} \\
\midrule
Layers (depth)                       & 12 & 12 \\
$d_{\text{model}}$                   & 496 & 1360 \\
$d_{\text{state}}$                   & 32 & 32 \\
Causal conv.\ kernel size            & 4 & 4 \\
Process-noise init $\mathbf{p}$      & 0.01 & 0.01 \\
Discretisation range $\Delta_t$      & $[0.001,\,0.1]$ & $[0.001,\,0.1]$ \\
Context length                       & 2048 & 2048 \\
Training tokens                      & $1.8$B & $10.9$B \\
Tokenizer vocabulary                 & $32{,}768$ & $32{,}768$ \\
\midrule
Optimiser                            & AdamW & AdamW \\
Base learning rate (hidden)          & $4\times10^{-3}$ & $4\times10^{-3}$ \\
SSM learning-rate multiplier         & $0.1$ & $0.1$ \\
Effective peak LR (hidden)           & $2.5\times10^{-3}$ & $3.0\times10^{-3}$ \\
Effective peak LR (SSM)              & $2.5\times10^{-4}$ & $3.0\times10^{-4}$ \\
Embedding learning rate (base)       & $0.3$ & $0.3$ \\
LR schedule                          & constant then linear decay & constant then linear decay \\
Warmup steps                         & 0 & 0 \\
Warmdown fraction                    & 0.4 & 0.4 \\
Weight decay ($2$-D hidden/lm\_head) & 0.1 & 0.1 \\
Weight decay (embed/$1$-D/SSM)       & 0.0 & 0.0 \\
Global batch size (tokens)           & $131{,}072$ & $524{,}288$ \\
Gradient clipping                    & 3.0 & 3.0 \\
Precision                            & fp32 & fp32 \\
\bottomrule
\end{tabular}
\end{table}

\paragraph{Zero-shot evaluation.} We report zero-shot performance on eight commonsense benchmarks:
LAMBADA (OpenAI split)~\citep{paperno2016lambada},
HellaSwag~\citep{zellers2019hellaswag}, PIQA~\citep{bisk2020piqa},
ARC-Easy and ARC-Challenge~\citep{clark2018think},
WinoGrande~\citep{sakaguchi2021winogrande},
OpenBookQA~\citep{mihaylov2018can}, and BoolQ~\citep{clark2019boolq}
(\cref{tab:lm-main}). We report accuracy (acc) for all tasks except HellaSwag and
ARC-Challenge, for which we report length-normalised accuracy (acc$_n$).

\end{document}

%% file: math_commands.tex
\usepackage{amsmath,amsfonts,bm}

\def\eqref#1{equation~\ref{#1}}

\def\1{\bm{1}}

\DeclareMathAlphabet{\mathsfit}{\encodingdefault}{\sfdefault}{m}{sl}
\SetMathAlphabet{\mathsfit}{bold}{\encodingdefault}{\sfdefault}{bx}{n}



%% file: Figures/tikz_plots.tex
\newcommand{\tikzBGSS}{
\begin{tikzpicture}[very thick, node distance=2.2cm, >=latex]
  \node[latent, fill, minimum size=38pt] (z0) {$z_{t}$};
  \node[latent, right=of z0, minimum size=38pt, xshift=2.2cm] (z1) {$z_{t+1}$};

  \node[obs, below=1.4cm of z0, fill, minimum size=38pt] (o0) {$v_{t}$};
  \node[obs, below=1.4cm of z1, fill, minimum size=38pt] (o1) {$v_{t+1}$};

  \coordinate[left=1.2cm of z0] (cstart);
  \coordinate[right=1.2cm of z1] (cend);

  \edge {z0} {z1}

  \edge {z0} {o0}
  \edge {z1} {o1}

  \draw[dashed, ->, >=stealth, color=blue!60, thick, bend left=35] (o0) to (z0);
  \draw[dashed, ->, >=stealth, color=blue!60, thick, bend left=35] (o1) to (z1);

  \draw[dashed] (cstart) -- (z0);
  \draw[dashed] (z1) -- (cend);

  \draw[draw=none] (z0) -- node[midway, above=0pt, font=\normalsize] {$z_{t+1} \mid z_t \sim \mathcal{N}({\color{black}\bar{a}_t} z_t, {\color{blue}\bar{p}_t})$} (z1);

  \draw[draw=none] (z0) -- node[midway, right=2pt, font=\normalsize] {$v_t \mid z_t \sim \mathcal{N}({\color{black}k_t} z_t, {\color{blue}(\Lambda_t^{v})^{-1}})$} (o0);

  \draw[draw=none] (o0) -- node[midway, left=8pt, font=\normalsize, color=blue!70] {\textbf{inference}} (z0);

\end{tikzpicture}
}

\newcommand{\tikzMambaSSM}{
\begin{tikzpicture}[very thick, node distance=2.2cm, >=latex]
  \node[obs, fill, minimum size=38pt] (u0) {$u_{t-1}$};
  \node[obs, right=of u0, minimum size=38pt, xshift=2.2cm] (u1) {$u_{t}$};

  \node[latent, below=1.4cm of u0, fill, minimum size=38pt] (z0) {$z_{t-1}$};
  \node[latent, below=1.4cm of u1, fill, minimum size=38pt] (z1) {$z_{t}$};

  \coordinate[left=1.2cm of u0] (cstart_u);
  \coordinate[right=1.2cm of u1] (cend_u);
  \coordinate[left=1.2cm of z0] (cstart_z);
  \coordinate[right=1.2cm of z1] (cend_z);

  \edge {z0} {z1}

  \edge {u0} {z0}
  \edge {u1} {z1}

  \draw[dashed] (cstart_z) -- (z0);
  \draw[dashed] (z1) -- (cend_z);

  \draw[draw=none] (z0) -- node[midway, below=0pt, font=\normalsize] {$z_t = {\color{black}\bar{A}_t} z_{t-1} + {\color{black}\bar{B}_t} u_t$} (z1);

  \draw[draw=none] (u1) -- node[midway, right=2pt, font=\normalsize] {\textbf{control inputs}} (z1);
\end{tikzpicture}
}

\newcommand{\tikzOUHybridTall}{
\begin{tikzpicture}[
    every node/.style={font=\small},
    obs/.style={circle, draw, thick, fill=gray!40, minimum size=22pt, inner sep=1pt},
    latent/.style={circle, draw, thick, fill=white, minimum size=22pt, inner sep=1pt}
]

\begin{scope}[local bounding box=timeline]
  \draw[->, thick] (0,0) -- (7.5,0) node[right] {$t$};
  \draw[->, thick] (0,-0.3) -- (0,2.2) node[above] {$x(t)$};

  \fill[blue!8, opacity=0.6] plot[smooth, tension=0.7]
    coordinates {(0.3,2.0) (1.2,1.5) (2.2,1.0) (3.2,1.6) (4.2,1.3) (5.2,1.1) (6.2,1.5) (7.0,1.2)}
    -- plot[smooth, tension=0.7]
    coordinates {(7.0,0.8) (6.2,1.1) (5.2,0.7) (4.2,0.9) (3.2,1.2) (2.2,0.6) (1.2,1.1) (0.3,1.6)}
    -- cycle;

  \draw[thick, blue!70!black] plot[smooth, tension=0.7]
    coordinates {(0.3,1.8) (1.2,1.3) (2.2,0.8) (3.2,1.4) (4.2,1.1) (5.2,0.9) (6.2,1.3) (7.0,1.0)};

  \draw[dashed, gray] (0,1) -- (7.5,1) node[right, font=\scriptsize] {$\mu$};

  \fill[red!80] (1.2,1.3) circle (2.5pt);
  \fill[red!80] (3.2,1.4) circle (2.5pt);
  \fill[red!80] (5.2,0.9) circle (2.5pt);

  \node[below=1pt, font=\scriptsize] at (1.2,0) {$t_{k-1}$};
  \node[below=1pt, font=\scriptsize] at (3.2,0) {$t_k$};
  \node[below=1pt, font=\scriptsize] at (5.2,0) {$t_{k+1}$};

  \draw[thick] (1.2,0.05) -- (1.2,-0.05);
  \draw[thick] (3.2,0.05) -- (3.2,-0.05);
  \draw[thick] (5.2,0.05) -- (5.2,-0.05);

  \node[font=\scriptsize, anchor=north west] at (3.5,1.8) {$\mathrm{d}x = -a(x-\mu)\mathrm{d}t + \sigma\mathrm{d}W$};

  \node[font=\normalsize\itshape, anchor=north west] at (0.1,2.4) {Continuous OU Process (Prior Dynamics)};
\end{scope}

\begin{scope}[yshift=-2.5cm, local bounding box=gm]
  \node[latent] (z0) at (1.2,0) {$z_{k-1}$};
  \node[latent] (z1) at (3.2,0) {$z_k$};
  \node[latent] (z2) at (5.2,0) {$z_{k+1}$};

  \node[obs] (v0) at (1.2,-1.3) {$v_{k-1}$};
  \node[obs] (v1) at (3.2,-1.3) {$v_k$};
  \node[obs] (v2) at (5.2,-1.3) {$v_{k+1}$};

  \draw[->, thick] (z0) -- (z1);
  \draw[->, thick] (z1) -- (z2);

  \draw[->, thick] (z0) -- (v0);
  \draw[->, thick] (z1) -- (v1);
  \draw[->, thick] (z2) -- (v2);

  \node[font=\normalsize] at (0.3,0) {$\cdots$};
  \node[font=\normalsize] at (6.1,0) {$\cdots$};

  \node[above=3pt, font=\scriptsize] at (2.2,0) {$\mathcal{N}(\bar{a}z, {\color{red!80}\bar{p}})$};

  \node[font=\normalsize\itshape, anchor=north west] at (0.1,1.1) {Discretized Gaussian SSM};

  \node[font=\scriptsize, align=left, anchor=west] at (5.3,-0.6) {
    $\bar{a} = e^{-a\Delta t}$\\[2pt]
    ${\color{red!80}\bar{p}} = \tfrac{p^2}{2a}(1-e^{-2a\Delta t})$
  };
\end{scope}

\draw[dotted, gray, thick] (1.2,-0.1) -- (1.2,-2.25);
\draw[dotted, gray, thick] (3.2,-0.1) -- (3.2,-2.25);
\draw[dotted, gray, thick] (5.2,-0.1) -- (5.2,-2.25);

\draw[->, thick, >=stealth, gray!70] (6.5,-0.5) -- node[right, font=\footnotesize] {discretize} (6.5,-2.0);

\begin{scope}[yshift=-6.2cm, local bounding box=mobius]
  \node[font=\normalsize\itshape, anchor=north west] at (0.1,0.95) {Parallel Inference};

  \node[draw, thick, fill=purple!8, minimum width=0.65cm, minimum height=0.65cm,
        font=\scriptsize] (M1) at (1.2,0) {$M_1$};
  \node[font=\small] at (1.9,0) {$\circ$};
  \node[draw, thick, fill=purple!8, minimum width=0.65cm, minimum height=0.65cm,
        font=\scriptsize] (M2) at (2.6,0) {$M_2$};
  \node[font=\small] at (3.3,0) {$\circ$};
  \node[draw, thick, fill=purple!8, minimum width=0.65cm, minimum height=0.65cm,
        font=\scriptsize] (M3) at (4.0,0) {$M_3$};
  \node[font=\small] at (4.7,0) {$\cdots$};
  \node[draw, thick, fill=purple!8, minimum width=0.65cm, minimum height=0.65cm,
        font=\scriptsize] (MT) at (5.4,0) {$M_T$};

  \node[draw, thick, fill=green!10, rounded corners=3pt,
        minimum width=3cm, minimum height=0.5cm,
        font=\scriptsize] (scan) at (3.3,-0.7) {M\"obius Assoc Scan};

  \node[below=0.35cm of scan, font=\scriptsize] (output) {$(\eta_t, \lambda_t)$ transforms via $M_t$};

  \node[font=\normalsize] at (0.3,0) {$\cdots$};
\end{scope}

\draw[->, thick, >=stealth, gray!70] (3.2,-4.1) -- node[right, font=\footnotesize] {information form} (3.2,-5.6);

\end{tikzpicture}
}

\newcommand{\tikzBlockArchWithAttention}{%
\begin{tikzpicture}[remember picture]

\node[inner sep=0pt, anchor=east] (img)
  {\includegraphics[height=7cm]{Figures/middle_diag.pdf}};

\coordinate (kf-top-left) at ($(img.south west)+(-0.15, 4.95)$);
\coordinate (kf-bot-left) at ($(img.south west)+(-0.15, 3.95)$);

\begin{scope}[shift={($(img.west)+(-10.5, 0.5)$)}, local bounding box=matrixarea]

  \node[anchor=north west, font=\small, align=left] (matW) at (0, 2.5) {%
    $\mathbf{W} = \begin{pmatrix}
      k_1 {\color{UncBlue}\Lambda_1^v}
        & 0
        & \cdots
        & 0 \\[4pt]
      {\color{forgetred}f_2}\, k_1 {\color{UncBlue}\Lambda_1^v}
        & k_2 {\color{UncBlue}\Lambda_2^v}
        & \cdots
        & 0 \\[4pt]
      \vdots & \vdots & \ddots & \vdots \\[4pt]
      {\color{forgetred}\textstyle\prod_{s=2}^{T} f_s}\, k_1 {\color{UncBlue}\Lambda_1^v}
        & {\color{forgetred}\textstyle\prod_{s=3}^{T} f_s}\, k_2 {\color{UncBlue}\Lambda_2^v}
        & \cdots
        & k_T {\color{UncBlue}\Lambda_T^v}
    \end{pmatrix}$%
  };

\end{scope}

\begin{scope}[shift={($(matrixarea.south west)+(0, -0.3)$)}, local bounding box=labelarea]

  \node[anchor=north west, font=\small, align=left] (geneq) at (0, 0) {%
    $W_{ij} = \begin{cases}
      \displaystyle {\color{forgetred}\prod_{s=j+1}^{i} f_s} \;\cdot\; k_j \,{\color{UncBlue}\Lambda_j^v}
        & \text{if } i \geq j \\[6pt]
      0 & \text{otherwise}
    \end{cases}$%
  };

  \node[anchor=north west, font=\small, inner sep=0pt] (mseq) at (0, -1.55) {%
    $\mathbf{M}_{\mathrm{seq}}
      \;=\;
      \mathrm{diag}\!\bigl(
        \mathbf{q} \odot {\color{UncBlue}\boldsymbol{\lambda}^{-1}}
      \bigr)
      \,\mathbf{W}$%
    \qquad\scriptsize\textrm{(output readout \& precision scaling)}%
  };

  \node[anchor=north west, font=\small, inner sep=0pt] (veceq) at (0, -2.15) {%
    $\mathbf{y} \;=\;
      \underbrace{%
        \mathbf{M}_{\mathrm{seq}}\,\mathbf{v}
      }_{\text{\scriptsize attention}}
      +\;
      \underbrace{%
        (\mathbf{q} \odot {\color{UncBlue}\boldsymbol{\lambda}^{-1}})
        \odot \mathbf{b} \odot \boldsymbol{\eta}_0%
      }_{\text{\scriptsize init-state}}$%
  };

  \node[anchor=north west, font=\scriptsize, align=left] (legend) at (0, -3.45) {%
    {\color{forgetred}\rule{0.7em}{0.7em}} forget gates $f_s$ \quad
    {\color{UncBlue}\rule{0.7em}{0.7em}} uncertainty $\Lambda^v\!,\;\lambda$ {\small(KLA-only)} \quad
    {\rule{0.7em}{0.7em}} shared with SSM/GLA%
  };

\end{scope}

\coordinate (cone-tr) at (matrixarea.north east);
\coordinate (cone-br) at (matrixarea.south east);

\fill[cyan!4]
    (kf-top-left) -- (cone-tr) -- (cone-br) -- (kf-bot-left) -- cycle;

\draw[dashed, thick, gray!50] (kf-top-left) -- (cone-tr);
\draw[dashed, thick, gray!50] (kf-bot-left) -- (cone-br);

\node[font=\scriptsize, gray!70, rotate=8]
  at ($(kf-bot-left)!0.45!(cone-tr)+(0, -0.15)$) {\textit{attention view}};

\end{tikzpicture}%
}